\newcommand{\sA}{\mathbf{A}}
\newcommand{\sR}{\mathbf{R}}
\newcommand{\cG}{\mathcal{G}}
\newcommand{\E}{\mathbf{E}}
\newtcolorbox{mblock}[1]
{
  colframe     = gray,
  coltitle     = black,
  colbacktitle = lightgray!50!white,
  colback      = white,
  title        = #1,
  fonttitle    = \bfseries,
  arc          = 0mm,
  left         = 2pt,
  right        = 2pt,
}
\DeclareMathOperator*{\argmin}{arg\,min}
\newtheorem{lem}{Lemma}
\newtheorem{pro}{Proposition}
\newtheorem{thm}{Theorem}
\newtheorem{definition}{Definition}
\newtheorem{assumption}{Assumption}
\newcommand{\mval}{m}
\newcommand{\mtr}{n}
\newcommand{\wdh}{\mu}
\newcommand{\wdl}{\nu}
\newcommand{\lrl}{\eta}
\title{Stability and Generalization of  Bilevel Programming in
Hyperparameter Optimization}
\author{
 Fan Bao\thanks{Equal contribution}~,~~Guoqiang Wu$^{*\dagger}$,~~Chongxuan Li$^{*}$\thanks{G. Wu is now at School of Software, Shandong University and C. Li is now at Gaoling School of AI, Renmin University of China. The work was done when they were at Tsinghua University.}~,~~Jun Zhu\thanks{Corresponding author.}~,~~Bo Zhang \\
  Dept. of Comp. Sci. \& Tech., Institute for AI, Tsinghua-Huawei Joint Center for AI\\
  BNRist Center, State Key Lab for Intell. Tech. \& Sys., Tsinghua University, Beijing, China \\
  bf19@mails.tsinghua.edu.cn,\{guoqiangwu90, chongxuanli1991\}@gmail.com,\\ \{dcszj,~dcszb\}@tsinghua.edu.cn
}
\begin{document}

\maketitle

\begin{abstract}
The (gradient-based) bilevel programming framework is widely used in hyperparameter optimization and has achieved excellent performance empirically. Previous theoretical work mainly focuses on its optimization properties, while leaving the analysis on generalization largely open. This paper attempts to address the issue by presenting an expectation bound w.r.t. the validation set based on uniform stability. Our results can explain some mysterious behaviours of the bilevel programming in practice, for instance, overfitting to the validation set. We also present an expectation bound for the classical cross-validation algorithm. Our results suggest that gradient-based algorithms can be better than cross-validation under certain conditions in a theoretical perspective. Furthermore, we prove that regularization terms in both the outer and inner levels can relieve the overfitting problem in  gradient-based algorithms. In experiments on feature learning and data reweighting for noisy labels, we corroborate our theoretical findings. 
\end{abstract}

\section{Introduction}
\label{sec:intro}

Hyperparameter optimization (HO) is a common problem arising from various fields including neural architecture search~\cite{liu2018darts,elsken2019neural}, feature learning~\cite{franceschi2018bilevel}, data reweighting for imbalanced or noisy samples~\cite{ren2018learning,shu2019meta,franceschi2017forward,shaban2019truncated}, and  semi-supervised learning~\cite{guo2020safe}. 
Formally, HO seeks the hyperparameter-hypothesis pair that achieves the lowest expected risk on testing samples from an unknown distribution. Before testing, however, only a set of training samples and a set of validation ones are given. The validation and testing samples are assumed to be from the same distribution. In contrast, depending on the task, the underlying distributions of the training and testing samples can be the same~\cite{liu2018darts,franceschi2018bilevel} or different~\cite{ren2018learning}.

Though various methods~\cite{hutter2015beyond} have been developed to solve HO (See Section~\ref{sec:related_work} for a comprehensive review), the \emph{bilevel programming} (BP)~\cite{colson2007overview,pedregosa2016hyperparameter,franceschi2018bilevel} framework is a natural solution and has achieved excellent performance in practice~\cite{liu2018darts}.
BP consists of two nested search problems: in the inner level, it seeks the best hypothesis (e.g., a prediction model) on the training set given a specific configuration of the hyperparameter (e.g., the model architectures), while in the outer level, it seeks the hyperparameter (and its associated hypothesis) that results in the best hypothesis in terms of the error on the validation set.

Generally, it is hard to solve the BP problem exactly, and several approximate algorithms have been developed in previous work. As a classical approach, \emph{Cross-validation} (CV)\footnote{While CV can refer to a general class of approaches by splitting a training set and a validation set, we focus on grid search and random search and denote the two methods as CV collectively.} can be viewed as an approximation method. It obtains a finite set of hyperparameters via grid search~\cite{mohri2018foundations} or random search~\cite{bergstra2012random} as well as a set of the corresponding hypothesises trained in the inner level, and selects the best hyperparameter-hypothesis pair according to the validation error. CV is well understood in theory~\cite{mohri2018foundations,shalev2014understanding} but suffers from the issue of scalability~\cite{franceschi2018bilevel}. Recently, alternative algorithms~\cite{franceschi2017forward,franceschi2018bilevel,fu2016drmad,maclaurin2015gradient,shaban2019truncated} based on \emph{unrolled differentiation} (UD) have shown promise on tuning up to millions of hyperparameters.
In contrast to CV, UD exploits the validation data more aggressively: it optimizes the validation error directly via (stochastic) gradient descent in the space of the hyperparameters, and the gradient is obtained by finite steps of unrolling in the inner level. 

Though promising, previous theoretical work~\cite{franceschi2018bilevel,wu2018understanding,shaban2019truncated} of UD mainly focuses on the optimization, while leaving its generalization analysis largely open. This paper takes a first step towards solving it and aiming to answer the following questions rigorously:
\begin{itemize} \vspace{-.2cm}
    \item \emph{Can we obtain certain learning guarantees for UD and insights to improve it?}
    \item \emph{When should we prefer UD over classical approaches like CV in a theoretical perspective?}
    \vspace{-.1cm}
\end{itemize}

Our first main contribution is to present a notion of \emph{uniformly stable on validation (in expectation)} and an expectation bound of UD algorithms (with stochastic gradient descent in the outer level) on the validation data as follows:
\begin{align}
\label{eqn:informal}
    |\textrm{Expected risk of UD} \!-\! \textrm{Empirical risk of UD on validation} | \lesssim \tilde{\mathcal{O}}\left( \frac{T^{\kappa}}{m} \right)  \textrm{ and } \tilde{\mathcal{O}}\left(\frac{(1 + \eta\gamma_\varphi)^{2K}}{m}\right),
\end{align}
where $T$ and $K$ are the numbers of steps in the outer level and the inner level respectively, $\kappa \in (0, 1)$ is a constant, $\eta$ is the learning rate in the inner level, $\gamma_\varphi$ is a smoothness coefficient of the inner loss, $m$ is the size of the validation set and $\lesssim$ means the inequality holds in expectation. As detailed in Section~\ref{sec:gd}, our results not only present the order of the important factors in the generalization gap but also explain some mysterious behaviours of UD in practice, for instance, the trade-off on the values of $T$ and $K$~\cite{franceschi2018bilevel}.

Our second main contribution is to systematically compare UD and CV from the perspective of generalization in Section~\ref{sec:cvgd}. Instead of using the existing high probability bounds of CV (see Theorem 4.4 in~\cite{mohri2018foundations}), we present an expectation bound of CV for a direct comparison to Eq.~\eqref{eqn:informal} as follows:\footnote{We do not find a strong dependency of CV on $K$ in both the theory and practice.}
\begin{align}
\label{eqn:informal_cv}
    |\textrm{Expected risk of CV} - \textrm{Empirical risk of CV on validation} | \lesssim \mathcal{O}\left( \sqrt{\frac{\log T}{m}} \right).
\end{align}
On the one hand, Eq.~\eqref{eqn:informal} and Eq.~\eqref{eqn:informal_cv}  suggest that with a large $T$,\footnote{Every hyperparameter considered corresponds to a loop for the inner level optimization, which is the computational bottleneck in HO. Therefore, for fairness, we assume UD and CV share the same $T$ by default.} CV has a much lower risk of overfitting than UD. 
On the other hand, the dependence of Eq.~\eqref{eqn:informal_cv} on $m$ is $\mathcal{O}\left( \sqrt{\frac{1}{m}} \right)$, which is worse than that of UD.
Furthermore, as discussed before, probably UD has a much lower validation risk than CV. Indeed, we show that CV with random search suffers from the curse of dimensionality. 
These analyses may explain the superior performance of UD~\cite{liu2018darts}, especially when we have a reasonable choice of $T$, a sufficiently large $m$ and a sufficiently high-dimensional hyperparameter space.


Our third main contribution is to present a regularized UD algorithm in Section~\ref{sec:reg_gd} and prove that both a weight decay term of the parameter in the inner level and that of the hyperparameter in the outer level can increase the stability of the UD algorithms. Thus, the generalization performance will probably improve if the terms do not hurt the validation risk too much.

Finally, experiments presented in Section~\ref{sec:exp} validate our theory findings. In particular, we reproduce the mysterious behaviours of UD (e.g., overfitting to the validation data) observed in~\cite{franceschi2018bilevel}, which can be explained via Eq.~\eqref{eqn:informal}. Besides, we empirically compare UD and CV and analyze their performance via Eq.~\eqref{eqn:informal} and Eq.~\eqref{eqn:informal_cv}. Further, we show the promise of the regularization terms in both levels.

\section{Problem Formulation}
\label{sec:pro}


Let $Z$, $\Theta$ and $\Lambda$ denote the data space, the hypothesis space and the hyperparameter space respectively. $\ell: \Lambda \times \Theta \times Z \rightarrow [a, b]$ is a bounded loss function\footnote{This formulation also includes the case where the loss function is irrelevant to the hyperparameter. Also see Appendix~{F} for a discussion of the boundedness assumption of the loss function.} and its range is $s(\ell) = b - a$.

Given a hyperparameter $\lambda \in \Lambda$, a hypothesis $\theta \in \Theta$ and a distribution $D$ on the data space $Z$, $R(\lambda, \theta, D)$ denotes the expected risk of $\lambda, \theta$ on $D$, i.e., $R(\lambda, \theta, D) = \mathbb{E}_{z \sim D} \left[\ell(\lambda, \theta, z)\right]$. Hyperparameter optimization (HO) seeks the hyperparameter-hypothesis pair that achieves the lowest expected risk on testing samples from an unknown distribution.

Before testing, however, only a training set $S^{tr}$ of size $\mtr$ and a validation set $S^{val}$ of size $\mval$ are accessible. As mentioned in Section~\ref{sec:intro}, we consider a general HO problem, where the distribution of the testing samples $D^{te}$ is assumed to be the same as that of the validation ones $D^{val}$ but can differ from that of the training ones $D^{tr}$. In short, we assume $D^{val} = D^{te}$, but we \textbf{do not} require $D^{tr} = D^{te}$. Given a hyperparameter $\lambda \in \Lambda$, a hypothesis $\theta \in \Theta$ and the validation set $S^{val}$, $\hat{R}^{val}(\lambda, \theta, S^{val})$ denotes the empirical risk of $\lambda, \theta$ on $S^{val}$, i.e., $\hat{R}^{val}(\lambda, \theta, S^{val}) = \frac{1}{\mval} \sum\limits_{i=1}^{\mval} \ell(\lambda, \theta, z_i^{val})$. The empirical risk on training is defined as $\hat{R}^{tr}(\lambda, \theta, S^{tr}) = \frac{1}{\mtr} \sum\limits_{i=1}^{\mtr} \varphi_i(\lambda, \theta, z_i^{tr}) $, where $\varphi_i$ can be a slightly modified (e.g., reweighted) version of $\ell$ for $i$-th training sample\footnote{While in many tasks such as differential architecture search~\cite{liu2018darts} and feature learning~\cite{franceschi2018bilevel}, $\varphi_i$ is just the same as $\ell$, we distinguish between them to include tasks where $\varphi_i$ and $\ell$ are different, such as data reweighting~\cite{shaban2019truncated}.}.

Technically, an HO algorithm $\sA$ is a function mapping $S^{tr}$ and $S^{val}$ to a hyperparameter-hypothesis pair, i.e., $\sA: Z^\mtr \times Z^\mval \rightarrow \Lambda \times \Theta$. In contrast, a \emph{randomized} HO algorithm does not necessarily return a deterministic hyperparameter-hypothesis pair but more generally a random variable with the outcome space $\Lambda \times \Theta$.

Though extensive methods~\cite{hutter2015beyond} have been developed to solve HO (See Section~\ref{sec:related_work} for a comprehensive review), the \emph{bilevel programming}~\cite{colson2007overview,pedregosa2016hyperparameter,franceschi2018bilevel} is a natural solution and has achieved excellent performance in practice recently~\cite{liu2018darts}. 
It consists of two nested search problems as follows: 
\begin{gather}\label{eqn:exact_bilevel}
  \underbrace{\lambda^*(S^{tr}, S^{val})=\argmin_{\lambda \in \Lambda} \hat{R}^{val}(\lambda, \theta^*(\lambda, S^{tr}), S^{val})}_{\textrm{Outer level optimization}}, \textrm{ where }    \underbrace{\theta^*(\lambda, S^{tr}) = \argmin_{\theta \in \Theta} \hat{R}^{tr}(\lambda, \theta, S^{tr})}_{\textrm{Inner level optimization}}.
\end{gather}
In the \emph{inner level}, it seeks the best hypothesis on $S^{tr}$ given a specific configuration of the hyperparameter. In the \emph{outer level}, it seeks the hyperparameter $\lambda^*(S^{tr}, S^{val})$ (and its associated hypothesis $\theta^*(\lambda^*(S^{tr}, S^{val}), S^{tr})$) that results in the best hypothesis in terms of the error on $S^{val}$. Eq.~\eqref{eqn:exact_bilevel} is sufficiently general to include a large portion of the HO problems we are aware of, such as: 
\begin{itemize}
    \item \textbf{Differential Architecture Search~\cite{liu2018darts}} Let $\lambda$ be the coefficients of a set of network architecture, let $\theta$ be the parameters in the neural network defined by the coefficients, and let $l$ be the cross-entropy loss associated with $\theta$ and $\lambda$. Namely, $l(\lambda, \theta, z) = CE(h_{\lambda, \theta}(x), y)$, where $h_{\lambda, \theta}(\cdot)$ is a neural network with architecture $\lambda$ and parameter $\theta$.
    \item \textbf{Feature Learning~\cite{franceschi2018bilevel}} Let $\lambda$ be a feature extractor, let $\theta$ be the parameters in a classifier that takes features as input and let $l$ be the cross-entropy loss associated with $\theta$ and $\lambda$. 
    Namely, $l(\lambda, \theta, z) = CE(g_{\theta}(h_{\lambda}(x)), y)$, where $h_{\lambda}(\cdot)$ is the feature extractor and $g_{\theta}(\cdot)$ is the classifier.
    \item \textbf{Data Reweighting for Imbalanced or Noisy Samples~\cite{shaban2019truncated}} Let $\lambda$ be the coefficients of the training data, let $\theta$ be the parameters of a classifier that takes the data as input, let $l$ be the cross-entropy loss associated with $\theta$. Namely, $l(\theta, z) = CE(h_{\theta}(x), y)$, where $h_{\theta}(\cdot)$ is the classifier and $l$ is irrelevant to $\lambda$. The empirical risk on the training set is defined through a \emph{reweighted} version of the loss $\varphi_i(\lambda, \theta, z_i) = \sigma(\lambda_i) l(\theta, z_i)$, where $\sigma(\cdot)$ is the sigmoid function and $\lambda_i$ is the coefficient corresponding to $z_i$.
\end{itemize}

\section{Approximate the Bilevel Programming Problem}
\label{sec:algo}

In most of the situations (e.g., neural network as the hypothesis class~\cite{liu2018darts}), the global optima of both the inner and outer level problems in Eq.~\eqref{eqn:exact_bilevel} are nontrivial to achieve. It is often the case to approximate them in a certain way (e.g., using (stochastic) gradient descent) as follows: 
\begin{align}
\label{eq:ablo}
\underbrace{\hat{\lambda}(S^{tr}, S^{val}) \approx \argmin\limits_{\lambda \in \Lambda} \hat{R}^{val}(\lambda, \hat{\theta}(\lambda, S^{tr}), S^{val})}_{\textrm{Approximate outer level optimization}}, \textrm{ where } \underbrace{\hat{\theta}(\lambda, S^{tr}) \approx \argmin_{\theta \in \Theta} \hat{R}^{tr}(\lambda, \theta, S^{tr})}_{\textrm{Approximate inner level optimization}}.
\end{align}
Here $\hat{\theta}(\lambda, S^{tr})$ can be deterministic or random. In this perspective, we can view the unrolled differentiation (UD) and cross-validation (CV) algorithms as two implementation of Eq.~\eqref{eq:ablo}.

\begin{figure}
\begin{minipage}{0.58\textwidth}
\begin{algorithm}[H]
  \begin{algorithmic}[1]
  \caption{Unrolled differentiation for hyperparameter optimization}
  \label{algo:ud}
    \STATE {\bfseries Input:} Number of steps $T$ and $K$; initialization $\hat{\theta}_0$ and $\hat{\lambda}_0$; learning rate scheme $\alpha$ and $\eta$
    \STATE {\bfseries Output:} The hyperparameter  $\hat{\lambda}_{ud}$ and hypothesis $\hat{\theta}_{ud}$ 
    \FOR {$t=0$ {\bfseries to} $T-1$}
      \STATE $\hat{\theta}^t_0 \gets \hat{\theta}_0$
      \FOR{$k=0$ {\bfseries to} $K-1$}
      \STATE $\hat{\theta}_{k+1}^{t} \gets  \hat{\theta}_k^{t} - \eta_{k+1} \nabla_\theta \hat{R}^{tr}(\hat{\lambda}_{t}, \theta, S^{tr})|_{\theta = \hat{\theta}_k^{t}}$
      \ENDFOR
      \STATE $\hat{\lambda}_{t+1}  \gets \hat{\lambda}_t - \alpha_{t+1} \nabla_\lambda \hat{R}^{val}(\lambda, \hat{\theta}^{t}_K(\lambda), S^{val})|_{\lambda = \hat{\lambda}_t}$
    \ENDFOR
    \STATE {\bf return} $\hat{\lambda}_T$ and $\hat{\theta}_{K}^T$
  \end{algorithmic}
\end{algorithm}
\end{minipage}
\hspace{.1cm}
\begin{minipage}{0.40\textwidth}
\begin{algorithm}[H]
  \begin{algorithmic}[1]
  \caption{Cross-validation for hyperparameter optimization}
  \label{algo:cv}
    \STATE {\bfseries Input:} Number of steps $T$ and $K$; initialization $\{\hat{\lambda}_t\}_{t=1}^T$ and $\{\hat{\theta}^t_0\}_{t=1}^T$; learning rate scheme $\eta$
    \STATE {\bfseries Output:} The hyperparameter  $\hat{\lambda}_{cv}$ and hypothesis $\hat{\theta}_{cv}$ 
    \FOR{$k=0$ {\bfseries to} $K-1$}
    \STATE $\hat{\theta}_{k+1}^{t} \gets  \hat{\theta}_k^{t} - \eta_{k+1} \nabla_\theta \hat{R}^{tr}(\hat{\lambda}_{t}, \theta, S^{tr})|_{\theta = \hat{\theta}_k^{t}}$
    \ENDFOR
    \STATE $t^* \gets \argmin\limits_{1 \leq t \leq T} \hat{R}^{val}(\hat{\lambda}_t, \hat{\theta}^t_K, S^{val})$
    \STATE {\bf return} $\hat{\lambda}_{t^*}$ and $\hat{\theta}_{K}^{t^*}$
  \end{algorithmic}
\end{algorithm}
\end{minipage}
\end{figure}

\textbf{Unrolled differentiation.}
The UD-based algorithms~\cite{franceschi2017forward,franceschi2018bilevel,fu2016drmad,maclaurin2015gradient,shaban2019truncated} solve Eq.~{(\ref{eq:ablo})} via performing finite steps of gradient descent in both levels. Given a hyperparameter, the inner level performs $K$ updates, and keeps the whole computation graph. The computation graph is a composite of $K$ parameter updating functions, which are differentiable with respect to $\lambda$, and thereby the memory complexity is $\mathcal{O}(K)$. As a result, the corresponding hypothesis is a function of the hyperparameter. The outer level updates the hyperparameter a single step by differentiating through inner updates, which has a $\mathcal{O}(K)$ time complexity, and the inner level optimization repeats given the updated hyperparameter. Totally, the outer level updates $T$ times.
Formally, it is given by Algorithm~\ref{algo:ud},
where we omit the dependency of $\lambda$ and $\theta$ on $S^{val}$ and $S^{tr}$for simplicity.

In some applications~\cite{liu2018darts,franceschi2018bilevel}, $\mtr$ and $\mval$ can be very large and we cannot efficiently calculate the gradients in Algorithm~\ref{algo:ud}. 
In this case, stochastic gradient descent (SGD) can be used to update the hyperparameter (corresponding to line 8 in Algorithm~\ref{algo:ud}) as follows:
\begin{align}
\hat{\lambda}_{t+1}  \gets \hat{\lambda}_t - \alpha_{t+1} \nabla_\lambda \hat{R}^{val}(\lambda, \hat{\theta}^{t}_K (\lambda), \{z_j\})|_{\lambda = \hat{\lambda}_t}, \label{eqn:sgu}
\end{align}
where $z_j$ is randomly selected from $S^{val}$. Similarly, we can also adopt SGD when updating the hypothesis and then all intermediate hypothesises are random functions of $\lambda$ and $S^{tr}$.

\textbf{Cross-validation.} CV is a classical approach for HO. It first obtains a finite set of hyperparameters, which is often a subset of $\Lambda$, via grid search~\cite{mohri2018foundations} or random search~\cite{bergstra2012random}
\footnote{In our experiments, the hyperparameter is too high-dimensional to perform grid search, and thus random search is preferable. Nevertheless, they will be shown to have similar theoretical properties.}. Then, it separately trains the inner level to obtain the corresponding hypothesis given a hyperparameter. Finally, it selects the best hyperparameter-hypothesis pair according to the validation error. It is formally given by Algorithm~\ref{algo:cv}, where we use gradient descent to approximate the inner level (i.e., line 4). We can also adopt SGD to update the hypothesis. 

In terms of optimization, CV searches over a prefixed subset of $\Lambda$ in a discrete manner, while UD leverages the local information of the optimization landscape (i.e., gradient). Therefore, UD is more likely to achieve a lower empirical risk on the validation data with the same $T$. In the following, we will discuss the two algorithms from the perspective of generalization. 


\section{Main Results}
\label{sec:main_res}

We present the main results below for clarity, and the readers can refer to Appendix A for all proofs.

\subsection{Stability and Generalization of UD}
\label{sec:gd}

In most of the recent HO applications~\cite{franceschi2018bilevel,liu2018darts,guo2020safe,ren2018learning}  that we are aware of, stochastic gradient descent (SGD) is adopted for its scalability and efficiency. Therefore, we present the main results on UD with SGD in the outer level here.\footnote{See Appendix D for the results on UD with GD in the outer level. The generalization gap of UD with GD in the outer level has an exponential dependence on $T$ and $K$, and has the same $1/m$ dependence on $m$ as SGD.}

Recall that a randomized HO algorithm returns a random variable with the outcome space $\Lambda \times \Theta$ in general. To establish the generalization bound, we define the following notion of \emph{uniform stability on validation in expectation}.
\begin{definition}
\label{def:stb_exp}
A randomized HO algorithm $\sA$ is $\beta$-uniformly stable on validation in expectation if for all validation datasets $S^{val}, S^{'val} \in Z^\mval$ such that $S^{val}, S^{'val}$ differ in at most one sample, we have 
\begin{align*}
    \forall S^{tr} \in Z^{\mtr}, \forall z \in Z, \mathbb{E}_{\sA} \left[ \ell(\sA(S^{tr}, S^{val}), z) - \ell(\sA(S^{tr}, S^{'val}), z) \right] \leq \beta.
\end{align*}
\end{definition}

Compared to existing work~\cite{hardt2016train}, Definition~\ref{def:stb_exp} considers the expected influence of changing one validation sample on a randomized HO algorithm.
The reasons are two-folded. On the one hand, in HO, the distribution of the testing samples is assumed to be the same as that of the validation ones but can differ from that of the training ones~\cite{ren2018learning}, making it necessary to consider the bounds on the validation set. On the other hand, classical results in CV suggest that such bounds are usually tighter than the ones on the training set~\cite{mohri2018foundations}.

If a randomized HO algorithm is $\beta$-uniformly stable on validation in expectation, then we have the following generalization bound.
\begin{thm}[Generalization bound of a uniformly stable algorithm]
\label{thm:basic_gen}
Suppose a randomized HO algorithm $\sA$ is $\beta$-uniformly stable on validation in expectation, then
\begin{align*}
    |\mathbb{E}_{\sA, S^{tr} \sim (D^{tr})^\mtr, S^{val} \sim (D^{val})^\mval} \left[R(\sA(S^{tr}, S^{val}), D^{val}) -  \hat{R}^{val}(\sA(S^{tr}, S^{val}), S^{val})\right]| \leq \beta.
\end{align*}
\end{thm}

Note that this is an expectation bound for any randomized HO algorithm with uniform stability. We now analyze the stability, namely bounding $\beta$, for UD with SGD in the outer level. Indeed, we consider a general family of algorithms that solve Eq.~\eqref{eq:ablo} via SGD in the outer level without any restriction on the inner level optimization in the following Theorem~\ref{thm:stb_ud}.

\begin{thm}[Uniform stability of algorithms with SGD in the outer level]
\label{thm:stb_ud}
Suppose $\hat{\theta}$ is a random function in a function space $\cG_{\hat{\theta}}$ and $\forall S^{tr} \in Z^\mtr$, $\forall z \in Z$, $\forall g \in \cG_{\hat{\theta}}$, $\ell(\lambda, g(\lambda, S^{tr}), z)$ as a function of $\lambda$ is $L$-Lipschitz continuous and $\gamma$-Lipschitz smooth, let $c \leq \frac{s(\ell)}{2 L^2}$ and $\kappa = \frac{c((1-1/\mval)\gamma)}{c((1-1/\mval)\gamma) + 1}$. Then, solving Eq.~{(\ref{eq:ablo})} with $T$ steps SGD and learning rate $\alpha_t \leq \frac{c}{t}$ in the outer level is $\beta$-uniformly stable on validation in expectation with
\begin{align*}
    \beta = \frac{2cL^2}{\mval} \left( \frac{1}{\kappa}  \left(\left(\frac{T s(\ell)}{2cL^2}\right)^\kappa - 1\right) + 1 \right),
\end{align*}
which is increasing w.r.t. $L$ and $\gamma$. (Recall that $s(\ell)=b-a$ is the range of the loss.)
\end{thm}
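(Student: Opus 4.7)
The plan is to adapt the stability proof of SGD for smooth non-convex losses (Hardt, Recht, Singer) to the outer-level update, treating the inner-level map $g\in\cG_{\hat\theta}$ as exogenous randomness that can be coupled identically across the two neighboring runs. Concretely, fix $S^{tr}\in Z^\mtr$, $z\in Z$, and neighboring validation sets $S^{val},S^{'val}$ differing only at index $i^*$. Run the outer SGD on both sets with the same initialization $\hat\lambda_0$, the same sequence of inner-level draws $\{g_t\}$, and the same sequence of SGD sampling indices $\{i_t\}_{t=0}^{T-1}$. Write $\hat\lambda_t,\hat\lambda'_t$ for the two iterate sequences and set $\delta_t=\|\hat\lambda_t-\hat\lambda'_t\|$. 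Because $g_t$ does not depend on the choice of $S^{val}$ and $\ell(\lambda,g(\lambda,S^{tr}),z)$ is $L$-Lipschitz in $\lambda$, bounding $\mathbb{E}[\ell(\hat\lambda_T,g_T(\hat\lambda_T,S^{tr}),z)-\ell(\hat\lambda'_T,g_T(\hat\lambda'_T,S^{tr}),z)]$ reduces to controlling $\mathbb{E}\delta_T$.

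Next, I would derive a one-step recursion for $\mathbb{E}\delta_t$. At step $t+1$ the index $i_t$ equals $i^*$ with probability $1/\mval$; in this event the two runs use different summands and Lipschitzness gives $\delta_{t+1}\le\delta_t+2\alpha_{t+1}L$. With the complementary probability $1-1/\mval$ the same summand is used in both runs, so $\gamma$-Lipschitz-smoothness makes the update $(1+\alpha_{t+1}\gamma)$-expansive and thus $\delta_{t+1}\le(1+\alpha_{t+1}\gamma)\delta_t$. Taking expectations yields
\[
\mathbb{E}\delta_{t+1}\le\bigl(1+(1-1/\mval)\alpha_{t+1}\gamma\bigr)\mathbb{E}\delta_t+\frac{2\alpha_{t+1}L}{\mval}.
\]

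The final step is a delayed-start argument. Introduce $t_0\in\{0,\dots,T\}$ and let $E$ be the event that $i^*$ is not drawn during the first $t_0$ outer steps, so $\Pr[E^c]\le t_0/\mval$ and $\delta_{t_0}=0$ on $E$. Using the trivial bound $s(\ell)$ on $E^c$ gives
\[
\mathbb{E}|\ell_T-\ell'_T|\le L\,\mathbb{E}[\delta_T\mid E]+\frac{t_0\,s(\ell)}{\mval}.
\]
Unrolling the recursion from $t_0$ to $T$ with $\alpha_k\le c/k$, applying $1+x\le e^x$, and using $\sum_{k=t_0+1}^{T}1/k\le\log(T/t_0)$ turns the product of expansivity factors into $(T/t_0)^{c(1-1/\mval)\gamma}$, so an integral comparison yields $\mathbb{E}[\delta_T\mid E]\le\tfrac{2cL}{\mval\,c(1-1/\mval)\gamma}\bigl[(T/t_0)^{c(1-1/\mval)\gamma}-1\bigr]$. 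Optimizing in $t_0$ balances the two terms at $t_0^{\star}=(2cL^2/s(\ell))^{1-\kappa}T^{\kappa}$, and the assumption $c\le s(\ell)/(2L^2)$ guarantees $t_0^{\star}\in[1,T]$. Plugging $t_0^{\star}$ back in and using $1/\kappa=1+1/(c(1-1/\mval)\gamma)$ collapses the sum to the stated $\beta$; monotonicity in $L$ and $\gamma$ is immediate because $\beta$ is the minimum over $t_0$ of a quantity that is term-wise increasing in $L$ and $\gamma$.

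The main obstacle is handling the inner-level randomness $g\in\cG_{\hat\theta}$ without introducing a circular coupling. Since $g$ may depend on its own SGD draws as well as on $\lambda$ and $S^{tr}$, one must justify that the inner-level randomness can be coupled jointly across the two outer runs and that the uniform Lipschitz/smoothness hypothesis on $\cG_{\hat\theta}$ suffices to preserve the expansivity estimate at every outer step. Once this measure-theoretic bookkeeping is settled, the remainder is essentially the Hardt-Recht-Singer template, with the only nontrivial modification being the $(1-1/\mval)$ factor arising from validation-level sampling.
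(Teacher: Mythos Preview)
Your proposal is correct and follows essentially the same route as the paper's proof: couple the two SGD runs with identical inner-level randomness and sampling indices, derive the one-step recursion $\mathbb{E}\delta_{t+1}\le\bigl(1+(1-1/\mval)\alpha_{t+1}\gamma\bigr)\mathbb{E}\delta_t+2\alpha_{t+1}L/\mval$, apply the delayed-start decomposition at $t_0$, unroll with $\alpha_t\le c/t$ to obtain the $(T/t_0)^{c(1-1/\mval)\gamma}$ factor, and optimize over $t_0$. The only cosmetic difference is that the paper tracks $\mathbb{E}[\delta_t\cdot 1_{\delta_{t_0}=0}]$ rather than your $\mathbb{E}[\delta_T\mid E]$, but the recursion and the resulting bound are identical.
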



Theorem~\ref{thm:stb_ud} doesn't assume a specific form of $\hat{\theta}$. Indeed, UD instantiates $\hat{\theta}$ as the output of SGD or GD in the inner level. For SGD, the corresponding $\mathcal{G}_{\hat{\theta}}$ is formed by iterating over all possible random indexes and initializations in $\hat{\theta}$. For GD, the corresponding $\mathcal{G}_{\hat{\theta}}$ is only formed by iterating over all possible initializations in $\hat{\theta}$, since GD uses full batches. We analyze the constants $L$ and $\gamma$ appearing in $\beta$ of UD, which solves the inner level problem by either SGD or GD, given the following mild assumptions on the outer loss $\ell$ and the inner loss $\varphi_i$.

\begin{assumption}
\label{assump:compact}
$\Lambda$ and $\Theta$ are compact and convex with non-empty interiors, and $Z$ is compact.
\end{assumption}

\begin{assumption}
\label{assump:c2}
$\ell(\lambda, \theta, z) \in C^2(\Omega)$, where $\Omega$ is an open set including $\Lambda \times \Theta \times Z$ (i.e., $\ell$ is second order continuously differentiable on $\Omega$).
\end{assumption}

\begin{assumption}
\label{assump:c3}
$\varphi_i(\lambda, \theta, z) \in C^3(\Omega)$, where $\Omega$ is an open set including $\Lambda \times \Theta \times Z$ (i.e., $\varphi_i$ is third order continuously differentiable on $\Omega$).
\end{assumption}

\begin{assumption}
\label{assump:sm_phi}
$\varphi_i(\lambda, \theta, z)$ is $\gamma_\varphi$-Lipschitz smooth as a function of $\theta$ for all $1 \leq i \leq \mtr$, $z \in Z$ and $\lambda \in \Lambda$ (Assumption~\ref{assump:compact} and Assumption~\ref{assump:c3} imply such a constant $\gamma_\varphi$ exists).
\end{assumption}

\begin{thm}
\label{thm:ablo_L}
Suppose Assumption~{\ref{assump:compact},\ref{assump:c2},\ref{assump:c3},\ref{assump:sm_phi}} hold and the inner level problem is solved with $K$ steps SGD or GD with learning rate $\eta$, then $\forall S^{tr} \in Z^\mtr$, $\forall z \in Z$, $\forall g \in \cG_{\hat{\theta}}$, $\ell(\lambda, g(\lambda, S^{tr}), z)$ as a function of $\lambda$ is $L=\mathcal{O}((1 + \eta \gamma_\varphi)^K)$ Lipschitz continuous and $\gamma = \mathcal{O}((1 + \eta\gamma_\varphi)^{2K})$ Lipschitz smooth.
\end{thm}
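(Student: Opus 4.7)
The plan is to analyze how $\hat{\theta}_K(\lambda)$ depends on $\lambda$ through the $K$-step gradient recursion, and then chain with $\ell$. I would work out the GD case; the SGD case follows from the same argument applied pathwise, since the bounds below are uniform pointwise derivative bounds that do not depend on which minibatch is drawn.

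By Assumption~\ref{assump:compact} the set $\Lambda\times\Theta\times Z$ is compact, and by Assumptions~\ref{assump:c2}--\ref{assump:c3} all partial derivatives of $\ell$ up to order $2$ and of each $\varphi_i$ up to order $3$ are continuous on an open neighbourhood of this set, hence uniformly bounded. Write $M$ for a common upper bound on $\|\nabla_\lambda\ell\|$, $\|\nabla_\theta\ell\|$, $\|\nabla^2_{(\lambda,\theta)}\ell\|$, $\|\nabla_\lambda\nabla_\theta\varphi_i\|$, and on every third-order derivative of $\varphi_i$.

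For the Lipschitz constant, set $J_k:=\nabla_\lambda\hat{\theta}_k(\lambda)$. Differentiating the inner update $\hat{\theta}_{k+1}=\hat{\theta}_k-\eta\nabla_\theta\hat{R}^{tr}(\lambda,\hat{\theta}_k,S^{tr})$ in $\lambda$ yields
\begin{align*}
J_{k+1}=(I-\eta H_k)J_k-\eta B_k,
\end{align*}
with $H_k:=\nabla_\theta^2\hat{R}^{tr}(\lambda,\hat{\theta}_k,S^{tr})$ and $B_k:=\nabla_\lambda\nabla_\theta\hat{R}^{tr}(\lambda,\hat{\theta}_k,S^{tr})$. Assumption~\ref{assump:sm_phi} gives $\|H_k\|\le\gamma_\varphi$, so $\|I-\eta H_k\|\le 1+\eta\gamma_\varphi$; compactness gives $\|B_k\|\le M$. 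With $J_0=0$, induction yields $\|J_k\|\le\eta M\sum_{j=0}^{k-1}(1+\eta\gamma_\varphi)^j=\mathcal{O}((1+\eta\gamma_\varphi)^k)$. The chain rule then gives $\nabla_\lambda\ell(\lambda,\hat{\theta}_K(\lambda),z)=\nabla_\lambda\ell+J_K^\top\nabla_\theta\ell$, whose norm is $\mathcal{O}((1+\eta\gamma_\varphi)^K)$, i.e.\ the Lipschitz constant $L$.

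For the smoothness constant I would differentiate once more. The Hessian of $\lambda\mapsto\ell(\lambda,\hat{\theta}_K(\lambda),z)$ decomposes into $\nabla^2_\lambda\ell$, cross terms of the form $(\nabla_\lambda\nabla_\theta\ell)J_K$ and $J_K^\top(\nabla^2_\theta\ell)J_K$, and a remainder $\sum_j(\nabla_\theta\ell)_j\nabla^2_\lambda(\hat{\theta}_K)_j$. The first block is already $\mathcal{O}(\|J_K\|^2)=\mathcal{O}((1+\eta\gamma_\varphi)^{2K})$. For the remainder, differentiate the $J_k$-recursion to obtain
\begin{align*}
\nabla_\lambda J_{k+1}=(I-\eta H_k)\nabla_\lambda J_k-\eta\bigl[(\nabla_\lambda H_k)J_k+\nabla_\lambda B_k\bigr],
\end{align*}
where the bracket contains third-order derivatives of $\varphi_i$ (bounded by $M$ via Assumption~\ref{assump:c3}) contracted with at most two copies of $J_k$. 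Its norm is $\mathcal{O}(1+\|J_k\|+\|J_k\|^2)=\mathcal{O}((1+\eta\gamma_\varphi)^{2k})$. Dividing the recursion through by $(1+\eta\gamma_\varphi)^{k+1}$ turns it into a telescoping sum whose total is $\mathcal{O}((1+\eta\gamma_\varphi)^K)$, so $\|\nabla_\lambda J_K\|=\mathcal{O}((1+\eta\gamma_\varphi)^{2K})$, which gives $\gamma=\mathcal{O}((1+\eta\gamma_\varphi)^{2K})$.

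The main obstacle is the Hessian step: one must count chain-rule factors carefully to see that the forcing term in $\nabla_\lambda J_k$ is only $\mathcal{O}((1+\eta\gamma_\varphi)^{2k})$ (quadratic in $J_k$, not cascaded) rather than blowing up to $(1+\eta\gamma_\varphi)^{k^2}$, and that summing it against the $(1+\eta\gamma_\varphi)$-geometric homogeneous part produces exactly the exponent $2K$. This is where the full strength of Assumption~\ref{assump:c3} (third-order smoothness of $\varphi_i$) is genuinely used; the first-order bound required only $C^2$.
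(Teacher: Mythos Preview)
Your proposal is correct and follows essentially the same approach as the paper. The paper packages the argument into two abstract lemmas about compositions of maps $G_{\lambda,k}:\Theta\to\Theta$ (one for the Lipschitz constant of $\hat\theta$, one for its Lipschitz smoothness), phrased in terms of Lipschitz norms rather than pointwise derivative norms, and then chains with $\ell$ exactly as you do; but since the paper's own Lemma~1 identifies Lipschitz norms with suprema of derivative norms on compact convex sets, the two presentations are the same recursion in different clothing. Your forcing term $\mathcal{O}(1+\|J_k\|+\|J_k\|^2)$ and homogeneous factor $1+\eta\gamma_\varphi$ match the paper's $a_k=\gamma_2^G(L^{\hat\theta,k-1})^2+(\gamma_3^G+\gamma_4^G)L^{\hat\theta,k-1}+\gamma_1^G+L_2^G a_{k-1}$ term by term.
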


\begin{remark}
Generally, a neural network composed of smooth operators satisfy all assumptions. We notice that the continuously differentiable assumption in Assumption~\ref{assump:c2} and Assumption~\ref{assump:c3} does not hold for ReLU. However, we argue that there are many smooth approximations of ReLU including Softplus, Gelu~\cite{hendrycks2016gaussian}, and Lipswish~\cite{chen2019residual}, which satisfy the assumption
and achieve promising results in classification and deep generative modeling.
\end{remark}

Combining the results in Theorem~\ref{thm:basic_gen}, Theorem~\ref{thm:stb_ud} and Theorem~\ref{thm:ablo_L}, we obtain an expectation bound of UD that depends on the number of steps in the outer level $T$, the number of steps in the inner level $K$ and the validation sample size $m$. 
Roughly speaking, its generalization gap has an order of $\tilde{\mathcal{O}}(\frac{T^\kappa}{m})$ or $\tilde{\mathcal{O}}(\frac{(1 + \eta\gamma_\varphi)^{2K}}{m})$. In Appendix~{B}, we construct a worst case where the Lipschitz
constant $L$ in Theorem~\ref{thm:ablo_L} increases at least exponentially w.r.t. $K$. According to Theorem~\ref{thm:stb_ud}, the stability bound also increases exponentially w.r.t. $K$ in the worst case. Besides, if we further assume the inner loss $\varphi_i$ is convex or strongly convex, we can derive tighter generalization gaps. Indeed, the dependence on $K$ of the generalization gap is $\mathcal{O}(K^2)$ in the convex case and $\mathcal{O}(1)$ in the strongly convex case. Please see Appendix~{C} for a complete proof.

Our results can explain some mysterious behaviours of the UD algorithms in practice~\cite{franceschi2018bilevel}. According to 
Theorem~\ref{thm:stb_ud} and Theorem~\ref{thm:ablo_L},
very large values of $K$ and $T$ will significantly decrease the stability of UD (i.e., increasing $\beta$), which suggests a high risk of overfitting. On the other hand, if we use very small $T$ and $K$, the empirical risk on the validation data might be insufficiently optimized, probably leading to underfitting. 
This trade-off on the values of $K$ and $T$ has been observed in previous theoretical work~\cite{franceschi2018bilevel}, which mainly focuses on optimization and does not provide a formal explanation. We also confirmed this phenomenon in two different experiments (See results in Section~\ref{sec:exp_cvgd}). 

As for the number of validation data $m$, the generalization gap has an order of $\mathcal{O}(\frac{1}{m})$, which is satisfactory compared to that of CV as presented in Section~\ref{sec:cvgd}. 


\subsection{Comparison with CV}
\label{sec:cvgd}

CV is a classical approach for HO with theoretical guarantees, which serves as a natural baseline of our results on UD in Theorem~\ref{thm:basic_gen}, Theorem~\ref{thm:stb_ud} and Theorem~\ref{thm:ablo_L}. However, existing results on CV (see Theorem 4.4 in~\cite{mohri2018foundations}) are in the form of high probability bounds, which are not directly comparable to ours. To compare under the same theoretical framework to obtain meaningful conclusions, we present an expectation bound for CV as follows. 
\begin{thm}[Expectation bound of CV]
\label{thm:cv}
Suppose $S^{tr} \sim (D^{tr})^\mtr$, $S^{val} \sim (D^{val})^\mval$ and $S^{tr}$ and $S^{val}$ are independent, and let $\sA^{cv}(S^{tr}, S^{val})$ denote the results of CV as shown in Algorithm~\ref{algo:cv}, then 
\begin{align*}
    |\mathbb{E} \left[R(\sA^{cv}(S^{tr}, S^{val}), D^{val}) - \hat{R}^{val}(\sA^{cv}(S^{tr}, S^{val}), S^{val}) \right]| \leq s(\ell) \sqrt{\frac{\log T}{2\mval}}.
\end{align*}
\end{thm}

Technically, the expectation bound of CV is proved via the property of the maximum of a set of subgaussian random variables (see Theorem 1.14 in \cite{rigollet2015high}), which is distinct from the union bound used in the high probability bound of CV (see Theorem 4.4 in~\cite{mohri2018foundations}). In Appendix~{G.2}, we also verify the $\mathcal{O}(\sqrt{\frac{1}{m}})$ dependence of the expectation bound empirically.



On one hand, we note that the growth of the generalization gap w.r.t. $T$ is logarithmic in Theorem~\ref{thm:cv}, which is much slower than that of our results on UD. Besides, it does not explicitly depend on $K$. Therefore, sharing the same large values of $K$ and $T$, CV has a much lower risk of overfitting than UD. On the other hand, the dependence of Theorem~\ref{thm:cv} on $m$ is $\mathcal{O}(\sqrt{\frac{1}{m}})$, which is worse than that of UD.
Furthermore, as discussed in Section~\ref{sec:algo}, probably UD has a much lower validation risk than CV via exploiting the gradient information of the optimization landscape. Indeed, we show that CV with random search suffers from the curse of dimensionality (See Theorem~{7} in Appendix E). Namely, CV requires exponentially large $T$ w.r.t. the dimensionality of the $\lambda$ to achieve a reasonably low empirical risk.
The above analysis may explain the superior performance of UD~\cite{liu2018darts}, especially when we have a reasonable choice of $T$ and $K$, a sufficiently large $m$ and a sufficiently high-dimensional hyperparameter space. 

\subsection{The Regularized UD Algorithm}
\label{sec:reg_gd}

Building upon the above theoretical results and analysis, we further investigate how to improve the stability of the UD algorithm via adding regularization. 
Besides the commonly used weight decay term on the parameter in the inner level, we also employ a similar one on the hyperparameter in the outer level. Formally, the regularized bilevel programming problem is given by:
\begin{gather}\label{eqn:reg_bilevel}
  \underbrace{\lambda^*(S^{tr}, S^{val})=\argmin_{\lambda \in \Lambda} \hat{R}^{val}(\lambda, \theta^*(\lambda, S^{tr}), S^{val}) + \frac{\mu}{2} ||\lambda||^2_2}_{\textrm{Regularized outer level optimization}}, \nonumber \\ \textrm{ where }    \underbrace{\theta^*(\lambda, S^{tr}) = \argmin_{\theta \in \Theta} \hat{R}^{tr}(\lambda, \theta, S^{tr})+ \frac{\nu}{2} ||\theta||^2_2}_{\textrm{Regularized inner level optimization}},
  \label{eq:reg_bilevel}
\end{gather}
where $\mu$ and $\nu$ are coefficients of the regularization terms. Similar to Algorithm~\ref{algo:ud}, we use UD to approximate Eq.~\eqref{eq:reg_bilevel}. 
We formally analyze the effect of the regularization terms in both levels (See Theorem~{2} and Theorem~{3} in Appendix A). In summary, both regularization terms can increase the stability of the UD algorithm, namely, decreasing $\beta$ in a certain way. In particular, the regularization in the outer level decreases $\kappa$ in Theorem~\ref{thm:stb_ud} while the regularization in the inner level decreases $L$ and $\gamma$ in Theorem~\ref{thm:ablo_L}. 
Therefore, we can probably obtain a better generalization guarantee by adding regularization in both levels, assuming that the terms do not hurt the validation risk too much. 

\section{Related work}
\label{sec:related_work}

\textbf{Hyperparamter optimization.} 
In addition to UD~\cite{franceschi2017forward,franceschi2018bilevel,fu2016drmad,maclaurin2015gradient,shaban2019truncated} and CV~\cite{bergstra2012random,mohri2018foundations} analyzed in this work, there are alternative approaches that can solve the bilevel programming problem in HO approximately, including implicit gradient~\cite{bengio2000gradient,pedregosa2016hyperparameter,lorraine2020optimizing,luketina2016scalable}, Bayesian optimization~\cite{snoek2012practical,kandasamy2019tuning} and hypernetworks~\cite{lorraine2018stochastic,mackay2019self}. 
Implicit gradient methods directly estimate the gradient of the outer level problem in Eq.~{(\ref{eqn:exact_bilevel})}, which generally involves an iterative procedure such as conjugate gradient~\cite{pedregosa2016hyperparameter} and Neumann approximation~\cite{lorraine2020optimizing} to estimate the inverse of a Hessian matrix. \cite{luketina2016scalable} also shows that the identity matrix can be a good approximation of the Hessian matrix.
Bayesian optimization~\cite{snoek2012practical,kandasamy2019tuning} views the outer level problem of Eq.~{(\ref{eq:ablo})} as a black-box function sampled from a Gaussian process (GP) and updates the posterior of the GP as Eq.~{(\ref{eq:ablo})} is evaluated for new hyperparameters.
Hypernetworks~\cite{lorraine2018stochastic,mackay2019self} learn a proxy function that outputs an approximately optimal hypothesis given a hyperparameter.
Besides, UD~\cite{franceschi2017forward,franceschi2018bilevel,fu2016drmad,maclaurin2015gradient,shaban2019truncated} also has variants which are more time and memory efficient~\cite{shaban2019truncated,fu2016drmad}. \cite{shaban2019truncated} proposes a faster version of UD by truncating the differentiation w.r.t. the hyperparameter, while \cite{fu2016drmad} proposes a memory-saving version of UD by approximating the trace of inner level optimization in a linear interpolation scheme. 

To our knowledge, the analysis of HO in previous work mainly focuses on convergence~\cite{franceschi2018bilevel,shaban2019truncated} and unrolling bias~\cite{wu2018understanding} from the optimization perspective. The generalization analysis is largely open, which can be dealt with our stability framework in principle. In fact, in addition to UD~\cite{franceschi2017forward,franceschi2018bilevel} analyzed in this work, Theorem~\ref{thm:basic_gen} is also a potential tool for HO algorithms discussed above~\cite{bengio2000gradient,pedregosa2016hyperparameter,lorraine2020optimizing,luketina2016scalable,snoek2012practical,kandasamy2019tuning,lorraine2018stochastic,mackay2019self,fu2016drmad,maclaurin2015gradient,shaban2019truncated}, upon which future work can be built.

\textbf{Bilevel Programming.} The bilevel programming is extensively studied from the perspective of optimization. \cite{ghadimi2018approximation,grazzi2020iteration,ji2021bilevel} analyze the convergence rate of different approximation methods (e.g., UD and implicit gradient) when the inner level problem is strongly convex. \cite{ji2021lower} analyze the complexity lower bounds of bilevel programming when the lower level problem is strongly convex. \cite{ji2020convergence} analyze the convergence rate of bilevel programming under the setting of meta-learning.

\textbf{Stability.} The traditional stability theory~\cite{bousquet2002stability} builds generalization bound of a learning algorithm by studying its stability w.r.t. changing one data point in the training set. It has various extensions. \cite{elisseeff2005stability,hardt2016train} extend the stability for randomized algorithms. \cite{elisseeff2005stability} focuses on the random perturbations of the cost function. \cite{hardt2016train} focuses on the randomness in SGD and provides generalization bounds in expectation. Our bound for UD solved by SGD is also in expectation. In such cases, as claimed by~\cite{hardt2016train}, \textit{``high probability bounds are hard to obtain by the fact that SGD is itself randomized, and thus a concentration inequality must be devised to account for both the randomness in the data and in the training algorithm''}. The stability notion is also extended to meta-learning~\cite{maurer2005algorithmic,chen2020closer}, where the stability is analyzed w.r.t. changing one meta-dataset to bound the transfer risk on unseen tasks. This work differs from previous extensions in that it considers the stability w.r.t. changing one data point in the validation set, which leads to a generalization bound w.r.t. empirical risk on the validation set.

\section{Experiments}
\label{sec:exp}

We conduct experiments to validate our theoretical findings, which are three-folded as follows: 
\begin{enumerate}
    \item We reproduce the mysterious behaviours of UD (e.g., overfitting to the validation data) observed in~\cite{franceschi2018bilevel} and attempt to explain them via Theorem~\ref{thm:stb_ud} and Theorem~\ref{thm:ablo_L}. 
    \item We empirically compare UD and CV and analyze their performance from the perspective of the expectation bounds in Theorem~\ref{thm:stb_ud} and Theorem~\ref{thm:cv}.
    \item We show the promise of the regularization terms in both levels to validate Theorem~{2} and Theorem~{3} in Appendix A.
\end{enumerate}

\subsection{Experimental settings}
\label{sec:exp_set}

In our experiments, we consider two widely used tasks, \emph{feature learning}~\cite{franceschi2018bilevel} and \emph{data reweighting for noisy labels}~\cite{shaban2019truncated}. Please refer to Section~\ref{sec:pro} for the formulation of the two tasks.

In feature learning, we evaluate all algorithms on the Omniglot dataset~\cite{lake2015human} following~\cite{franceschi2018bilevel}.
Omniglot consists of grayscale images, and we resize them to $28 \times 28$. The images are symbols from different alphabets. 
We randomly select 100 classes and obtain a training, validation and testing set of size 500, 100, and 1000 respectively. $\lambda$ represents the parameters in a linear layer of size $784 \rightarrow 256$ following the input $x$. $\theta$ represents the parameters in a MLP of size $256 \rightarrow 128 \rightarrow 100$ to predict the label $y$. We employ a mini-batch version of SGD in both levels of UD with a learning rate $0.1$ and batch size $50$. 

In data reweighting, we evaluate all algorithms on the MNIST dataset~\cite{lecun2010mnist} following~\cite{shaban2019truncated}. MNIST consists of grayscale hand-written digits of size $28\times 28$. We randomly select 2000, 200, and 1000 images for training, validation and testing respectively. The label of a training sample is replaced by a uniformly sampled wrong label with probability $0.5$. 
$\lambda$ represents the logits of the weights of the training data, and $\theta$ represents the parameters in an MLP of size $784 \rightarrow  256 \rightarrow 10$. We employ a mini-batch version of SGD in both levels of UD with a batch size $100$. The learning rate is $10$ in the outer level and $0.3$ in the inner level.

By default, CV shares the same inner loop as UD for a fair comparison in both settings. 
We tune learning rates and coefficients of the weight decay on another randomly selected set of the same size as the validation set.
To eliminate randomness, we average over 5 different runs and report the mean results with the standard deviations in all experiments. Each experiment takes at most 10 hours in one GeForce GTX 1080 Ti GPU.

\subsection{Trade-off on the values of $T$ and $K$ in UD}
\label{sec:exp_cvgd}

\begin{figure}[t]
\begin{center}
\subfloat[Validation loss (FL)]{\includegraphics[width=0.25\columnwidth]{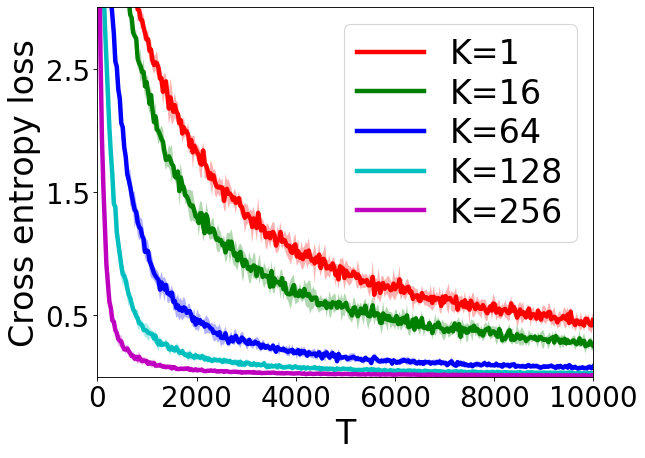}}
\subfloat[Testing loss (FL)]{\includegraphics[width=0.25\columnwidth]{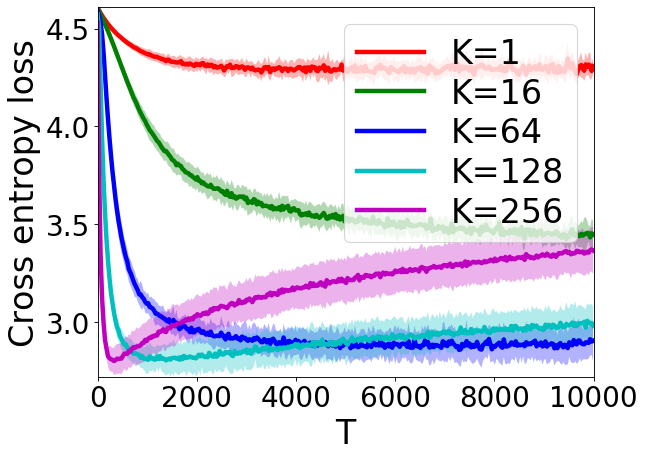}}
\subfloat[Validation loss (DR)]{\includegraphics[width=0.25\columnwidth]{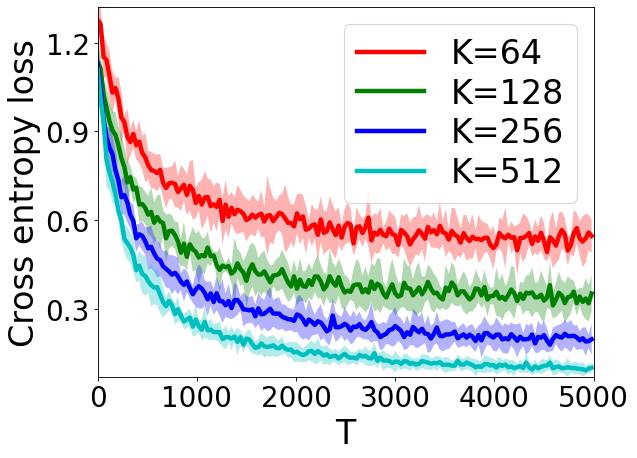}}
\subfloat[Testing loss (DR)]{\includegraphics[width=0.25\columnwidth]{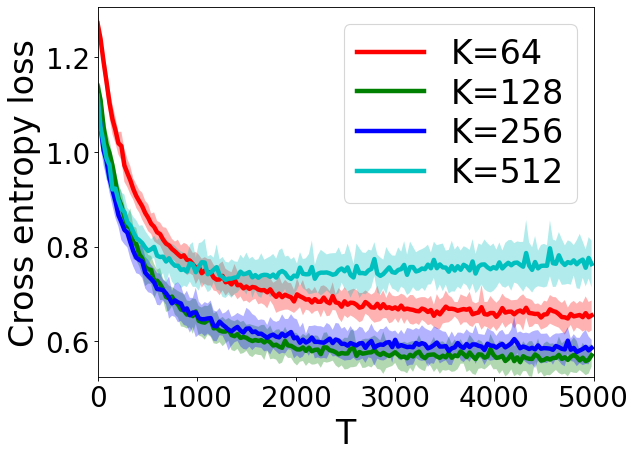}}
\caption{Results of UD in feature learning (FL) and data reweighting (DR). In both settings, the performance of UD is sensitive to the values of $K$ and $T$. We plot the generalization gap in Appendix~{G.1}.}
\label{fig:ud}
\end{center}
\vspace{-.3cm}
\end{figure}

Figure~\ref{fig:ud} presents the results of UD in the feature learning (FL) and data reweighting (DR) tasks, with different values of $K$ and $T$. On the one hand, in both tasks, UD with a large $K$ (e.g., 256 in FL) and a large $T$ overfits the validation data. Namely, the validation loss continues decreasing while the testing loss increases. On the other hand, a small value of $K$ (e.g., 1 in FL) and $T$ will result in underfitting to the validation data. The trade-off on the values of $T$ and $K$ agree with our analysis in Theorem~\ref{thm:stb_ud} and Theorem~\ref{thm:ablo_L}.

We also try a smaller learning rate in the inner level and get a similar overfitting phenomenon of UD (see results in Appendix~{G.3}). In such a case, we use a larger $K$. For instance, a learning rate of $\eta=0.1$ requires $K=1024$ inner iterations to overfit. This can be explained by our Theorem~\ref{thm:ablo_L}, which implies that a smaller $\eta$ requires a larger $K$ to make the generalization gap unchanged.

\subsection{Comparison between CV and UD}

\begin{figure}[h]
\begin{center}
\subfloat[Validation loss (FL)]{\includegraphics[width=0.25\columnwidth]{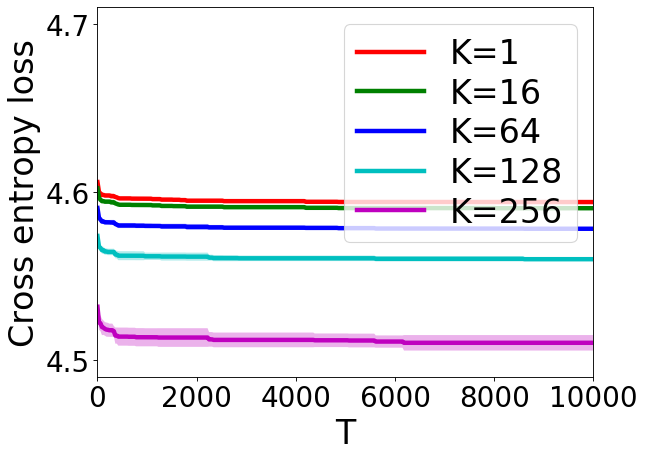}}
\subfloat[Testing loss (FL)]{\includegraphics[width=0.25\columnwidth]{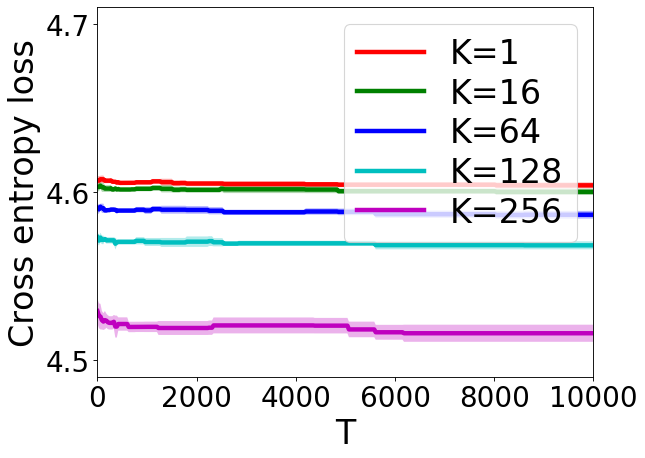}}
\subfloat[Validation loss (DR)]{\includegraphics[width=0.25\columnwidth]{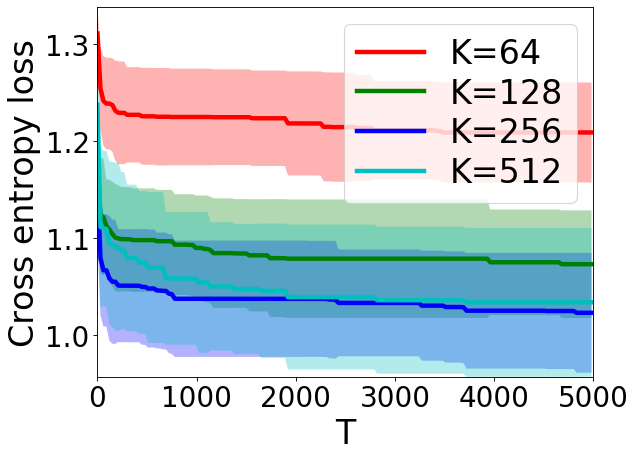}}
\subfloat[Testing loss (DR)]{\includegraphics[width=0.25\columnwidth]{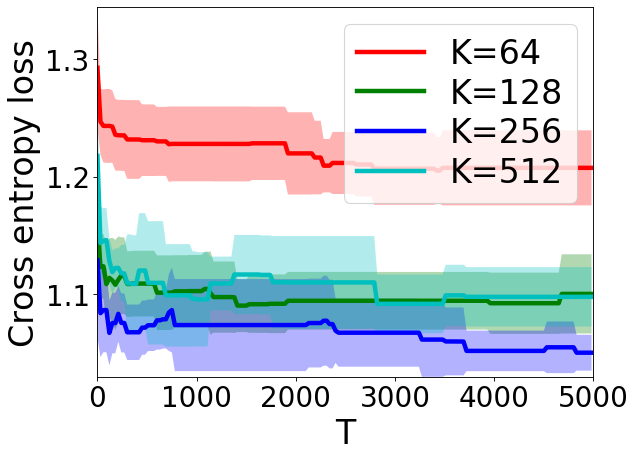}}
\caption{Results of CV in feature learning (FL) and data reweighting (DR). We do not observe the overfitting phenomenon of CV in all settings. We plot the generalization gap in Appendix~{G.1}.}
\label{fig:cv}
\end{center}
\end{figure}

Figure~\ref{fig:cv} presents the results of CV in the feature learning (FL) and data reweighting (DR) tasks. First, unlike UD, we do not observe the overfitting phenomenon when using a large $T$ and $K$. This corroborates our analysis in Theorem~\ref{thm:cv}, which claims that the generalization gap of CV grows logarithmically w.r.t. $T$ and does not explicitly depends on $K$. Second, we note that the validation loss of CV is clearly higher than that of UD in Figure~\ref{fig:ud}, which may explain the relatively worse testing loss of CV. Lastly, in FL, the validation loss of CV does not decrease clearly using up to 10,000 hyperparameters. This is because the dimensionality of the hyperparameter is around 200,000, which is too large for CV to optimize, as suggested in Theorem~{7} in Appendix E.

We also compare between CV and UD with a smaller number of hyperparameters (see results in Appendix~{G.4}). In this case, the validation losses of UD and CV are comparable and both algorithms fit well on the validation data. However, UD overfits much severely, leading to a worse testing loss than CV. The results agree with our theory.

\subsection{Effects of Regularization in Both Levels of UD}

\begin{figure}[t]
\begin{center}
\subfloat[Reg-outer (FL)]{\includegraphics[width=0.25\columnwidth]{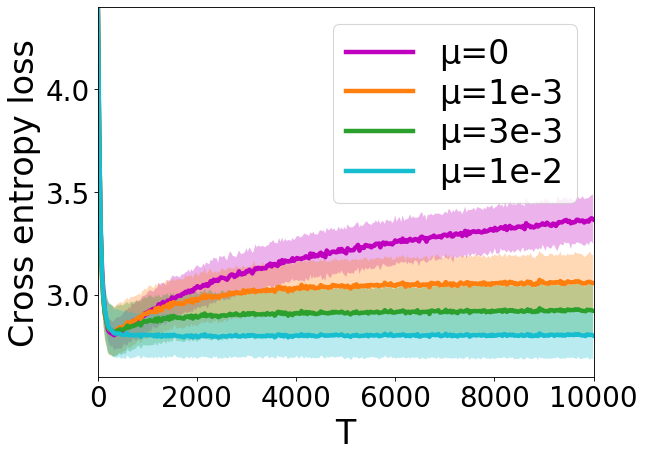}}
\subfloat[Reg-inner (FL)]{\includegraphics[width=0.25\columnwidth]{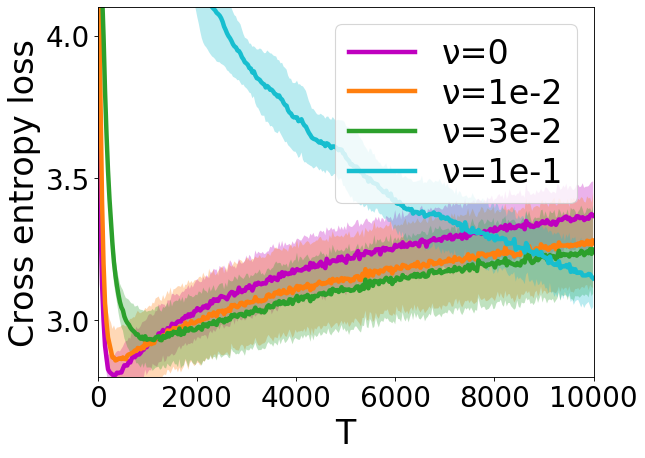}}
\subfloat[Reg-outer (DR)]{\includegraphics[width=0.25\columnwidth]{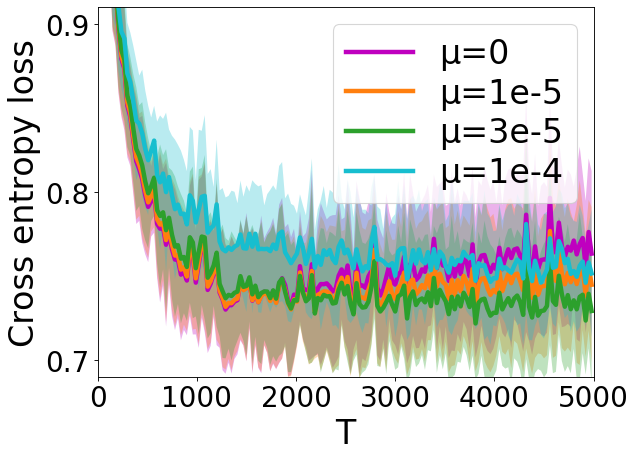}}
\subfloat[Reg-inner (DR)]{\includegraphics[width=0.25\columnwidth]{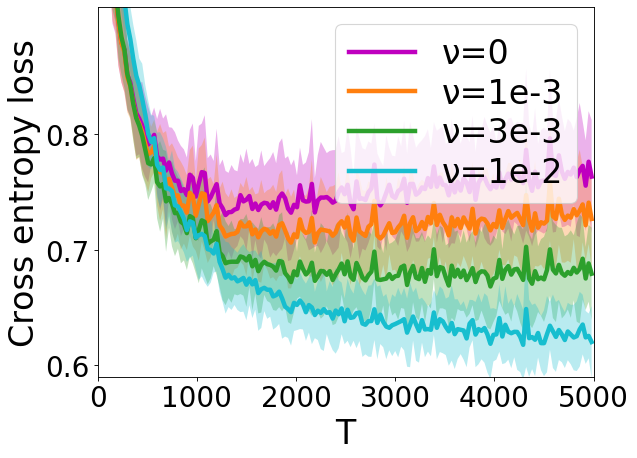}}
\caption{Testing loss of the regularized UD algorithms in feature learning (FL) and data reweighting (DR). \emph{Reg-outer} and \emph{Reg-inner} refer to adding regularization individually in the outer and inner levels of UD respectively. We set $K=256$ in FL and $K=512$ in DR. In most of the cases, both \emph{Reg-outer} and \emph{Reg-inner} can relieve overfitting in UD but overall there is no clear winner.}
\label{fig:reg_ind}
\end{center}
\vspace{-.3cm}
\end{figure}

Figure~\ref{fig:reg_ind} presents the results of the regularized UD algorithms where the weight decay term is added in the outer (referred to as \emph{Reg-outer}) or the inner level (referred to as \emph{Reg-inner}). We observe that in most of the cases, both Reg-outer and Reg-inner can individually relieve the overfitting problems of UD. Further, within a range, the larger the coefficients of the weight decay terms, the better the results. Such behaviours confirm our Theorem~{2} and Theorem~{3} in Appendix A. We note that if the regularization is too heavy, for instance, $\nu= 0.1$ in Panel (b) Figure~\ref{fig:reg_ind}, the optimization might be unstable. This suggests another trade-off on determining the values of $\mu$ and $\nu$. 

According to Figure~\ref{fig:reg_ind}, considering the two tasks together, there is no clear winner of Reg-inner and Reg-outer overall. In fact, our Theorem~{2} and Theorem~{3} in Appendix A show that they influence $\beta$ in incomparable ways. We further apply the two weight decay terms at the same time (referred to as \emph{Reg-both}), and the results are demonstrated in Figure~\ref{fig:both}. We find that Reg-both is slightly worse than the winner of Reg-outer and  Reg-inner. We hypothesize that if one of the weight decay terms can successfully relieve the overfitting problem, then adding another may hurt the final generalization performance because of a higher validation loss.
A deeper analysis of the issue is left as future work.  

\begin{figure}[t]
\begin{center}
\subfloat[Reg-both (FL)]{\includegraphics[width=0.25\columnwidth]{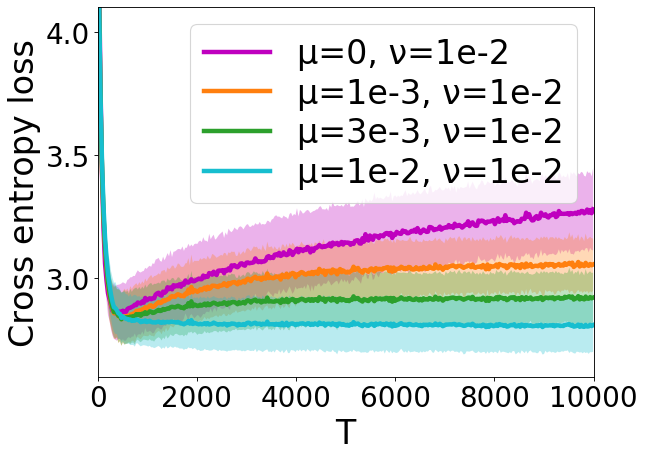}}
\subfloat[Reg-both (FL)]{\includegraphics[width=0.25\columnwidth]{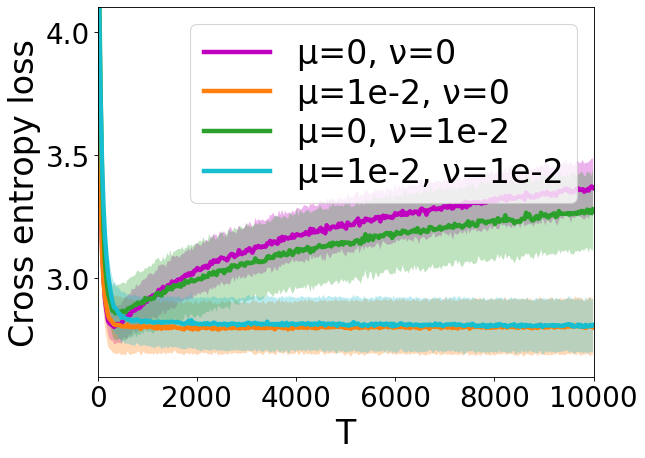}}
\subfloat[Reg-both (DR)]{\includegraphics[width=0.25\columnwidth]{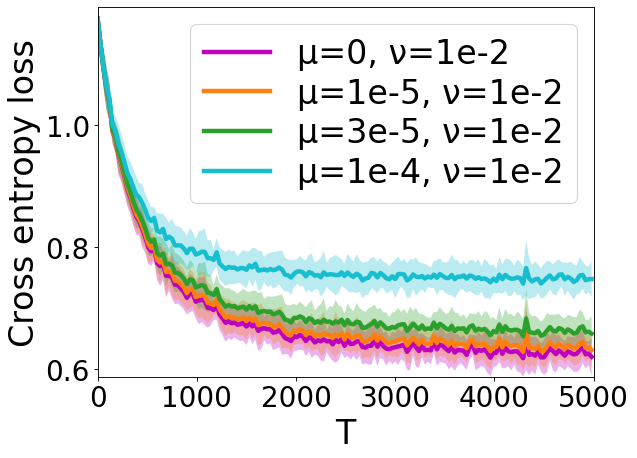}}
\subfloat[Reg-both (DR)]{\includegraphics[width=0.25\columnwidth]{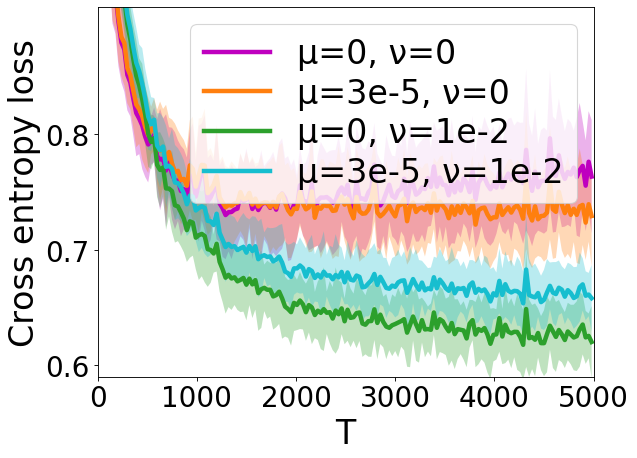}}
\caption{Testing loss of the regularized UD algorithms in feature learning (FL) and data reweighting (DR). \emph{Reg-both} refers to adding regularization in both the outer and inner levels of UD. We set $K=256$ in FL and $K=512$ in DR. In most of the cases,  Reg-both is slightly worse than the winner of Reg-outer and  Reg-inner.}
\label{fig:both}
\end{center}
\vspace{-.3cm}
\end{figure}

\section{Conclusion}
\label{sec:conclusion}

The paper attempts to understand the generalization behaviour of approximate algorithms to solve the bilevel programming problem in hyperparameter optimization. In particular, we establish an expectation bound for the unrolled differentiation algorithm based on a notion of uniform stability on validation. Our results can explain some mysterious behaviours of the bilevel programming in practice, for instance, overfitting to the validation set. We also present an expectation bound of the classical cross-validation algorithm. Our results suggest that unrolled differentiation algorithms can be better than cross-validation in a theoretical perspective under certain conditions. Furthermore, we prove that regularization terms in both the outer and inner levels can relieve the overfitting problem in the unrolled differentiation algorithm. In experiments on feature learning and data reweighting for noisy labels, we corroborate our theoretical findings. 

As an early theoretical work in this area, we find some interesting problems unsolved in this paper, which may inspire future work. First, we do not consider the implicit gradient algorithm, which is an alternative approach to unrolled differentiation and can be analyzed in the stability framework in principle. Second, the comparison between the weight decay terms in different levels is not clear yet. Third, in Theorem~\ref{thm:stb_ud}, we assume the learning rate in the outer level is $\mathcal{O}(\frac{1}{t})$ as in \cite{hardt2016train}, which is a gap between our analysis and the practice.

\section*{Acknowledgements}

We thank Yuhao Zhou for valuable feedback on our work.
This work was supported by NSFC Projects (Nos.  61620106010, 62061136001, 61621136008, U1811461, 62076145), Beijing NSF Project (No. JQ19016), Tsinghua-Bosch Joint Center for Machine Learning, Beijing Academy of Artificial Intelligence (BAAI), a grant from Tsinghua Institute for Guo Qiang, Tiangong Institute for Intelligent Computing, and the NVIDIA NVAIL Program with GPU/DGX Acceleration.

\bibliographystyle{plain}
\bibliography{refs.bib}

\appendix
\setcounter{equation}{0}
\setcounter{definition}{0}
\setcounter{assumption}{0}
\setcounter{thm}{0}
\setcounter{lem}{0}

\section{Proofs of Main Theoretical Results}

\subsection{Proof of Theorem 1}
\begin{thm}[Generalization bound of a uniformly stable algorithm]
\label{thm:app_basic_gen}
Suppose a randomized HO algorithm $\sA$ is $\beta$-uniformly stable on validation in expectation, then
\begin{align*}
    |\mathbb{E}_{\sA, S^{tr} \sim (D^{tr})^\mtr, S^{val} \sim (D^{val})^\mval} \left[R(\sA(S^{tr}, S^{val}), D^{val}) -  \hat{R}^{val}(\sA(S^{tr}, S^{val}), S^{val})\right]| \leq \beta.
\end{align*}
\end{thm}

\begin{proof}
\begin{align*}
& |\E_{\sA, S^{tr}, S^{val}} [R(\sA(S^{tr}, S^{val}), D^{val}) -  \hat{R}^{val}(\sA(S^{tr}, S^{val}), S^{val})]|  \\
= & |\E_{\sA, S^{tr}, S^{val},z \sim D^{val}} \left[\ell(\sA(S^{tr}, S^{val}), z) -  \ell(\sA(S^{tr}, S^{val}), z_1^{val})\right]| \\
= & |\E_{\sA, S^{tr}, S^{val}, z \sim D^{val}} \left[\ell(\sA(S^{tr}, z, z_2^{val},\cdots, z_\mval^{val}), z_1^{val}) -  \ell(\sA(S^{tr}, S^{val}), z_1^{val})\right]| \\
\leq & \E_{S^{tr}, S^{val}, z \sim D^{val}} |\E_\sA \left[\ell(\sA(S^{tr}, z, z_2^{val},\cdots, z_\mval^{val}), z_1^{val}) -  \ell(\sA(S^{tr}, S^{val}), z_1^{val})\right]| \leq \beta,
\end{align*}
where the last inequality is due to the definition of stability.
\end{proof}

\subsection{Proof of Theorem 2}
Here we prove a more general version of Theorem 2 in the full paper by considering SGD with weight decay in the outer level, i.e.,
\begin{align}
\label{eq:ablo_sgd}
    \lambda_{t+1} = (1-\alpha_{t+1} \mu)\lambda_t - \alpha_{t+1} \nabla_{\lambda_t} \ell(\lambda_t, \hat{\theta}(\lambda_t, S^{tr}), z_j^{val}),
\end{align}
where $\alpha_t$ is the learning rate, $\wdh$ is the weight decay, $j$ is randomly selected from $\{1,\cdots,\mval\}$ and $\hat{\theta}$ is a random function.
Theorem 2 in the full paper can be simply derived by letting $\wdh=0$.

\begin{thm}[Uniform stability of algorithms with SGD in the outer level]
\label{thm:app_stb_ud}
Suppose $\hat{\theta}$ is a random function in a function space $\cG_{\hat{\theta}}$ and $\forall S^{tr} \in Z^\mtr$, $\forall z \in Z$, $\forall g \in \cG_{\hat{\theta}}$, $\ell(\lambda, g(\lambda, S^{tr}), z)$ as a function of $\lambda$ is $L$-Lipschitz continuous and $\gamma$-Lipschitz smooth, let $c \leq \frac{s(\ell)}{2L^2}$, $\mu \leq \min(\frac{1}{c}, (1-1/\mval)\gamma)$ and $\kappa = \frac{c((1-1/\mval)\gamma - \wdh)}{c((1-1/\mval)\gamma - \wdh) + 1}$. Then, solving Eq.~{(4)} in the full paper with $T$ steps SGD, learning rate $\alpha_t \leq \frac{c}{t}$ and weight decay $\mu$ in the outer level is $\beta$-uniformly stable on validation in expectation with
\begin{align*}
    \beta = \frac{2cL^2}{\mval} \left( \frac{1}{\kappa}  \left(\left(\frac{T s(\ell)}{2cL^2}\right)^\kappa - 1\right) + 1 \right),
\end{align*}
which is increasing w.r.t. $L$, $\gamma$ and decreasing w.r.t. $\wdh$.
\end{thm}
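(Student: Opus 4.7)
The plan is to adapt the coupling-based SGD stability argument of Hardt--Recht--Singer to the outer-loop SGD in Algorithm~\ref{algo:ud} with weight decay and a random inner-level function $\hat{\theta}\in\cG_{\hat{\theta}}$. Fix $S^{tr}$ and let $S^{val}$ and $S^{\prime val}$ differ only at one index $i^*$. I would couple the two runs of SGD by sharing the initializer $\hat{\lambda}_0$, the sequence of sampled validation indices $\{j_t\}_{t=1}^T$, and the realization of $\hat{\theta}$ (which depends only on the common $S^{tr}$). Writing $\lambda_t,\lambda_t'$ for the two iterate sequences and $\Delta_t := \|\lambda_t - \lambda_t'\|$, the Lipschitz hypothesis on $\ell(\lambda, g(\lambda,S^{tr}), z)$ (uniformly over $g\in\cG_{\hat{\theta}}$) gives the pointwise bound $|\ell(\sA(S^{tr},S^{val}),z)-\ell(\sA(S^{tr},S^{\prime val}),z)|\le L\,\Delta_T$, so it suffices (together with a delay argument) to bound $\E[\Delta_T]$.

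Next I would carry out the per-step expansion analysis. If $j_{t+1}\ne i^*$ the two updates evaluate the same validation loss, and expanding using $\gamma$-smoothness together with the weight-decay factor gives $\Delta_{t+1}\le (1-\alpha_{t+1}\mu+\alpha_{t+1}\gamma)\Delta_t$; the hypothesis $\mu\le 1/c$ ensures $1-\alpha_{t+1}\mu\ge 0$ so the triangle inequality applies cleanly. If $j_{t+1}=i^*$ the two updates use different samples, and the $L$-Lipschitz bound on the gradient yields $\Delta_{t+1}\le (1-\alpha_{t+1}\mu)\Delta_t + 2\alpha_{t+1}L$. Averaging over the uniform choice of $j_{t+1}\in\{1,\ldots,m\}$ gives the recursion
\begin{equation*}
\E[\Delta_{t+1}\mid \Delta_t] \;\le\; \bigl[1+\alpha_{t+1}\bigl((1-1/m)\gamma-\mu\bigr)\bigr]\Delta_t + \tfrac{2\alpha_{t+1}L}{m},
\end{equation*}
with effective rate $q:=(1-1/m)\gamma-\mu\ge 0$ under the hypothesis $\mu\le(1-1/m)\gamma$.

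I would then invoke the delay trick. Let $\tau:=\min\{t:j_t=i^*\}$; then $\Delta_t=0$ for $t<\tau$ and $\Pr[\tau\le t_0]\le t_0/m$ by a union bound. Splitting the expected loss gap over $\{\tau\le t_0\}$ (bounded trivially by $s(\ell)$) and its complement gives
\begin{equation*}
\E\bigl|\ell(\sA(S^{tr},S^{val}),z)-\ell(\sA(S^{tr},S^{\prime val}),z)\bigr| \;\le\; \frac{s(\ell)\,t_0}{m} + L\,\E[\Delta_T\mid \tau>t_0].
\end{equation*}
On $\{\tau>t_0\}$ the coupling starts from $\Delta_{t_0}=0$; unrolling the recursion using $1+x\le e^x$, $\alpha_{t+1}\le c/(t+1)$, and $\sum_{k=t+1}^T 1/k\le\log(T/t)$ collapses the product of growth factors into $(T/t)^{cq}$, and a standard integral bound yields $\E[\Delta_T\mid\tau>t_0]\le (2L/(qm))\bigl[(T/t_0)^{cq}-1\bigr]$. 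Minimizing the resulting expression $t_0 s(\ell)/m + (2L^2/(qm))[(T/t_0)^{cq}-1]$ over $t_0$ gives $t_0^*=(2cL^2/s(\ell))^{1-\kappa}T^\kappa$ with $\kappa=cq/(cq+1)$; the hypothesis $c\le s(\ell)/(2L^2)$ keeps $t_0^*\in[1,T]$, and plugging $t_0^*$ back in yields exactly the claimed closed form for $\beta$. Monotonicity in $L$, $\gamma$, and $\mu$ can then be read off directly.

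The main obstacle I anticipate is the bookkeeping needed to recover the \emph{exact} closed form: the per-step growth factors must be exponentiated carefully, the integral approximation must be done to the precision that yields the $(T/t_0)^{cq}-1$ form (rather than the looser $(T/t_0)^{cq}$), and the optimization over $t_0$ must land at $t_0^*$ so that the resulting constants combine into the characteristic additive $+1$ inside the brackets of $\beta$. Handling the random function $\hat{\theta}$ is conceptually mild because $L$ and $\gamma$ are uniform over $\cG_{\hat{\theta}}$, but the coupling must keep the \emph{same} realization of $\hat{\theta}$ on both branches so that $\gamma$-smoothness acts on the differenced gradient rather than introducing a $\cG_{\hat{\theta}}$-diameter term.
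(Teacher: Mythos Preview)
Your proposal is correct and follows essentially the same route as the paper: a Hardt--Recht--Singer style coupling with the delay trick, the same per-step case split on whether the sampled index hits the differing validation point, the same effective rate $q=(1-1/m)\gamma-\mu$, the same exponentiation and integral estimate leading to $(T/t_0)^{cq}-1$, and the same optimization over $t_0$. The only cosmetic difference is that the paper carries the indicator $1_{\delta_{t_0}=0}$ through the recursion rather than conditioning on $\{\tau>t_0\}$, but this yields identical bounds since the initial value and the recursion coincide.
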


\begin{proof}
Suppose $S^{tr} \in Z^\mtr$ and $z \in Z$, let $f(\lambda, g) = \ell(\lambda, g(\lambda, S^{tr}), z)$, where we omit the dependency on $S^{tr}$ and $z$ for simplicity, then $f(\lambda, g)$ is as a function of $\lambda$ is $L$-Lipschitz continuous and $\gamma$-Lipschitz smooth. Suppose $S^{val}$ and $S^{'val}$ differ in at most one point, let $\{\lambda_t\}_{t\geq 0}$ and $\{\lambda_t'\}_{t\geq 0}$ be the trace of Eq.~{(\ref{eq:ablo_sgd})} with $S^{val}$ and $S^{'val}$ respectively. Then the output of the HO algorithm $\sA$ with $t$ steps SGD in the outer level is
\begin{align*}
    \sA(S^{tr}, S^{val}) = (\lambda_t, \hat{\theta}(\lambda_t, S^{tr})),\ \sA(S^{tr}, S^{'val}) = (\lambda_t', \hat{\theta}(\lambda_t', S^{tr})),
\end{align*}
and
\begin{align*}
    & \ell(\sA(S^{tr}, S^{val}), z) = \ell(\lambda_t, \hat{\theta}(\lambda_t, S^{tr}), z) = f(\lambda_t, \hat{\theta}), \\
    & \ell(\sA(S^{tr},  S^{'val}), z) = \ell(\lambda_t', \hat{\theta}(\lambda_t', S^{tr}), z) = f(\lambda_t', \hat{\theta}).
\end{align*}

Let $\delta_t = ||\lambda_t - \lambda_t'||$. Suppose $0 \leq t_0 \leq t$, we have
\begin{align*}
    \E\left[|f(\lambda_t, \hat{\theta}) - f(\lambda_t', \hat{\theta})|\right] = & \E\left[|f(\lambda_t, \hat{\theta}) - f(\lambda_t', \hat{\theta})|\cdot 1_{ \delta_{t_0} = 0 }\right] \\
    & + \E\left[|f(\lambda_t, \hat{\theta}) - f(\lambda_t', \hat{\theta})|\cdot 1_{ \delta_{t_0} > 0 }\right] \\
    \leq & L \E \left[\delta_t \cdot 1_{\delta_{t_0}=0} \right] + P(\delta_{t_0} > 0) s(\ell).
\end{align*}

Without loss of generality, we assume $S^{val}$ and $S^{'val}$ at most differ in at the first point. If SGD doesn't selects the first point for the first $t_0$ iterations, then $\delta_{t_0}=0$. As a result,
\begin{align*}
    P(\delta_{t_0} = 0) \geq (1-\frac{1}{\mval})^{t_0} \geq 1 - \frac{t_0}{\mval}.
\end{align*}
Therefore, $P(\delta_{t_0} > 0) \leq \frac{t_0}{\mval}$ and we have
\begin{align}
\label{eq:ablo_sgd_bd1}
    \E\left[|f(\lambda_t, \hat{\theta}) - f(\lambda_t', \hat{\theta})|\right] \leq L \E \left[\delta_t \cdot 1_{ \delta_{t_0}=0 } \right] + \frac{t_0}{\mval} s(\ell).
\end{align}
Now we bound $\E\left[\delta_t \cdot 1_{\delta_{t_0} = 0} \right]$. Let $\gamma' = (1-1/\mval)\gamma - \mu$ and let $j$ be the index selected by SGD at the $t+1$ iteration, then we have
\begin{align*}
    \E\left[\delta_{t+1} \cdot 1_{\delta_{t_0} = 0} \right] \leq & \E\left[\delta_{t+1} \cdot 1_{j=1} \cdot 1_{\delta_{t_0} = 0} \right] + \E\left[\delta_{t+1} \cdot 1_{j > 1} \cdot 1_{\delta_{t_0} = 0} \right] \\
    \leq & \frac{1}{\mval} (|1 - \alpha_{t+1}\mu| \cdot \E[\delta_t \cdot 1_{\delta_{t_0}=0}] + 2\alpha_{t+1} L) \\
    & + \frac{\mval - 1}{\mval} (|1 - \alpha_{t+1}\mu|+\alpha_{t+1} \gamma) \E[\delta_t \cdot 1_{\delta_{t_0}=0}] \\
    = & (1+ \alpha_{t+1} \gamma' ) \E[\delta_t \cdot 1_{\delta_{t_0}=0}] + \frac{2 \alpha_{t+1} L}{\mval} \\
    \leq & \exp(\alpha_{t+1} \gamma') \E[\delta_t \cdot 1_{\delta_{t_0}=0}] + \frac{2 \alpha_{t+1} L}{\mval} \\
    \leq & \exp(\frac{c}{t+1} \gamma') \E[\delta_t \cdot 1_{\delta_{t_0}=0}] + \frac{2 c L}{(t+1)\mval}.
\end{align*}

As a result,
\begin{align*}
    \E[\delta_t \cdot 1_{\delta_{t_0}=0}] \leq & \sum\limits_{j=t_0+1}^t\frac{2cL}{j\mval} \prod\limits_{k=j+1}^t \exp(\frac{c\gamma'}{k}) = \sum\limits_{j=t_0+1}^{t}\frac{2cL}{j\mval}  \exp(c\gamma' \sum\limits_{k=j+1}^t\frac{1}{k}) \\
    \leq  & \sum\limits_{j=t_0+1}^{t}\frac{2cL}{j\mval}  \exp(c\gamma' \ln \frac{t}{j}) = \sum\limits_{j=t_0+1}^{t}\frac{2cL}{j\mval} \left(\frac{t}{j}\right)^{c\gamma'} \\
    = & \frac{2cL t^{c\gamma'}}{\mval}  \sum\limits_{j=t_0+1}^{t} \left(\frac{1}{j}\right)^{1+c\gamma'} \leq \frac{2cL t^{c\gamma'}}{\mval}  \frac{t^{-c\gamma'} - t_0^{-c\gamma'}}{-c\gamma'} \\
    = & \frac{2L}{\mval\gamma'} \left(\left(\frac{t}{t_0}\right)^{c\gamma'}-1\right).
\end{align*}

Combining with Eq.~{(\ref{eq:ablo_sgd_bd1})}, we have
\begin{align}
\label{eq:ablo_sgd_inf}
    \E\left[|f(\lambda_T, \hat{\theta}) - f(\lambda_T', \hat{\theta})|\right] \leq \inf\limits_{0 \leq t_0 \leq T} \frac{2L^2}{\mval \gamma'} \left(\left(\frac{T}{t_0}\right)^{c\gamma'} - 1\right) + \frac{t_0}{\mval} s(\ell).
\end{align}

The right hand side is approximately minimized when 
\begin{align*}
t_0 = (\frac{2 c L^2}{s(\ell)})^{\frac{1}{c\gamma' + 1}} T^{\frac{c\gamma'}{c\gamma' + 1}} \leq T,
\end{align*}
which gives
\begin{align*}
    \E\left[|f(\lambda_T, \hat{\theta}) - f(\lambda_T', \hat{\theta})|\right] \leq & \frac{1+1/c\gamma'}{\mval} (2cL^2)^{\frac{1}{c\gamma' + 1}} T^\frac{c\gamma'}{c\gamma' + 1} (s(\ell))^{\frac{c\gamma'}{c\gamma' + 1}} - \frac{2L^2}{\mval\gamma'} =: \beta.
\end{align*}

Let $\kappa = \frac{c\gamma'}{c\gamma' + 1} = \frac{c((1-1/\mval)\gamma - \mu)}{c((1-1/\mval)\gamma - \mu) + 1}$, then $\beta$ can be written as
\begin{align*}
    \beta = \frac{2cL^2}{\mval} \left( \frac{1}{\kappa}  \left(\left(\frac{T s(\ell)}{2cL^2}\right)^\kappa - 1\right) + 1 \right).
\end{align*}
Since the r.h.s. of Eq.~{(\ref{eq:ablo_sgd_inf})} is increasing w.r.t. $L$ and $\gamma'$, where $\gamma'$ is further increasing w.r.t. $\gamma$ and decreasing w.r.t. $\mu$, we can conclude $\beta$ is increasing w.r.t. $L, \gamma$ and decreasing w.r.t. $\mu$.
\end{proof}

\subsection{Proof of Theorem 3}

\begin{definition}
(Lipschitz continuous) Suppose $(X, d_X), (Y, d_Y)$ are two metric spaces and $f: X \rightarrow Y$. We define $f$ is $L$ Lipschitz continuous iff $\forall a, b \in X, d_Y(f(a), f(b)) \leq L d_X(a, b)$. 
\end{definition}

\begin{definition}
(Lipschitz smooth) Suppose $X, Y$ are subsets of two real normed vector spaces and $f: X \rightarrow Y$ is differentiable. We define $f$ is $\gamma$ Lipschitz smooth iff $f'$ is $\gamma$ Lipschitz continuous. 
\end{definition}

\begin{definition}
(Lipschitz norm) Suppose $(X, d_X), (Y, d_Y)$ are two metric spaces, $f: X \rightarrow Y$, we define $||f||_{Lip} \triangleq \inf \{L \in [0, \infty]: \forall a, b \in X, d_Y(f(a), f(b)) \leq L d_X(a, b)\}$, i.e., the minimum $L$ such that $f$ is $L$ Lipschitz continuous.
\end{definition}

\begin{definition}
Given a function $f(\lambda, \theta)$, we use $||f(\lambda, \theta)||_{\lambda \in \Lambda, Lip}$ and $||f(\lambda, \theta)||_{\theta \in \Theta, Lip}$ to explicitly denote the Lipschitz norm of $f$ w.r.t. $\lambda \in \Lambda$ and $\theta \in \Theta$ respectively.
\end{definition}

\begin{definition}
(Vector norm) Suppose $a \in \sR^m$, we use $||a||$ to denote the $l_2$ norm of $a$.
\end{definition}

\begin{definition}
(Matrix norm) Suppose $A \in \sR^{m \times n}$, we define $||A|| \triangleq \sup\limits_{0 \neq a \in \sR^m} \frac{||A a||}{||a||}$, i.e., the norm of the linear operator induced by $A$.
\end{definition}

\begin{lem}
\label{thm:lip_bd}
Suppose $X, Y$ are two real normed vector spaces, $\Omega$ is an open set of $X$, $f: \Omega \rightarrow Y$ is continuously differentiable, $S \subset \Omega$ is convex and has non-empty interior, then $||f|_S||_{Lip} = \sup\limits_{c \in S} ||f'(c)||$.
\begin{proof}
Suppose $a, b \in S$, according to the mean value theorem, there is a $c$ lies in the segment determined by $a$ and $b$, s.t., $||f(b) - f(a)|| \leq ||f'(c) (b-a)||$. Furthermore, we have $$||f'(c) (b-a)|| \leq ||f'(c)|| \cdot ||b - a|| \leq \sup\limits_{c \in S}||f'(c)|| \cdot ||b - a||.$$ Thereby, $f|_S$ is $\sup\limits_{c \in S}||f'(c)||$ Lipschitz continuous and $||f|_S||_{Lip} \leq \sup\limits_{c \in S}||f'(c)||$.

Suppose $c \in S^\circ$, where $S^\circ$ is the interior of $S$ and $u \in X$ with $||u|| = 1$, then
\begin{align*}
    \lim\limits_{\epsilon \rightarrow 0} \frac{f(c + \epsilon u) - f(c)}{\epsilon} = f'(c) u.
\end{align*}

Thereby,
\begin{align*}
    ||f|_S||_{Lip} \geq \lim\limits_{\epsilon \rightarrow 0} ||\frac{f(c + \epsilon u) - f(c)}{\epsilon}|| = ||f'(c) u||.
\end{align*}

Since $u$ is arbitrary, we have $||f'(c)|| = \sup\limits_{u \in X, ||u||=1} ||f'(c) u|| \leq ||f|_S||_{Lip}$.

Since $S$ has non-empty interior, we have $S \subset \overline{S^\circ}$ by the property of convex sets. Suppose $c \in S$, then $c \in \overline{S^\circ}$ and there is a sequence $c_n \in S^\circ$, s.t., $c_n \rightarrow c$. Since $c_n \in S^\circ$, we have $||f'(c_n)|| \leq ||f|_S||_{Lip}$. Let $n \rightarrow \infty$, by the continuity of $f'$, we have $||f'(c)|| \leq ||f|_S||_{Lip}$. Since $c \in S$ is arbitrary, we have $\sup\limits_{c \in S} ||f'(c)|| \leq ||f|_S||_{Lip}$. Finally, we have $\sup\limits_{c \in S} ||f'(c)|| = ||f|_S||_{Lip}$.
\end{proof}
\end{lem}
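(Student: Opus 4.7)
The plan is to prove the two inequalities $||f|_S||_{Lip} \le \sup_{c \in S} ||f'(c)||$ and $||f|_S||_{Lip} \ge \sup_{c \in S} ||f'(c)||$ separately, which together give the claimed equality.

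For the upper bound I would fix arbitrary $a, b \in S$ and use convexity of $S$ to ensure the segment $\{a + t(b-a): t \in [0,1]\}$ lies in $S \subset \Omega$. The key ingredient is a vector-valued mean value inequality: $||f(b) - f(a)|| \le \sup_{t \in [0,1]} ||f'(a + t(b-a))|| \cdot ||b - a||$. I would establish this via the Hahn-Banach trick: pick a norm-one continuous linear functional $\phi$ on $Y$ with $\phi(f(b) - f(a)) = ||f(b) - f(a)||$, apply the scalar mean value theorem to the composition $g(t) = \phi(f(a + t(b-a)))$, and bound $|g'(t^*)| = |\phi(f'(c^*)(b-a))| \le ||f'(c^*)|| \cdot ||b - a||$ for some $c^*$ on the segment. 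Replacing the segment-sup with the sup over all of $S$ and dividing by $||b - a||$ then yields the desired bound after taking the infimum in the definition of the Lipschitz norm.

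For the lower bound I would first treat interior points $c \in S^\circ$. Given any unit vector $u \in X$, openness provides $\epsilon_0 > 0$ with $c + \epsilon u \in S$ for all $\epsilon \in (0, \epsilon_0]$, so by the definition of the restricted Lipschitz norm, $||f(c + \epsilon u) - f(c)|| \le ||f|_S||_{Lip} \cdot \epsilon$. Dividing by $\epsilon$ and letting $\epsilon \downarrow 0$ yields $||f'(c) u|| \le ||f|_S||_{Lip}$, and taking the supremum over unit $u$ gives $||f'(c)|| \le ||f|_S||_{Lip}$. To extend this to boundary points $c \in S \setminus S^\circ$, I would invoke the convex-geometry fact that a convex set with nonempty interior satisfies $S \subset \overline{S^\circ}$, pick a sequence $c_n \in S^\circ$ with $c_n \to c$, and pass $n \to \infty$ in $||f'(c_n)|| \le ||f|_S||_{Lip}$ using the continuity of $f'$ (which is guaranteed since $f$ is continuously differentiable).

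The main obstacle will be establishing the vector-valued mean value inequality cleanly. Unlike the scalar case, there is in general no single $c$ attaining equality, so the argument requires either the Hahn-Banach reduction above or, if $Y$ is a Banach space, the fundamental-theorem representation $f(b) - f(a) = \int_0^1 f'(a + t(b-a))(b-a) \, dt$ followed by a norm estimate. Everything else is essentially a careful unwinding of definitions combined with the convex-geometry lemma $S \subset \overline{S^\circ}$, which itself follows by taking convex combinations of an arbitrary point in $S$ with a fixed interior point and noting that such combinations remain interior for positive weights.
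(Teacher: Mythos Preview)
Your proposal is correct and follows essentially the same two-inequality structure as the paper's proof: a mean value inequality along the segment for the upper bound, and the difference-quotient limit at interior points together with the closure identity $S \subset \overline{S^\circ}$ and continuity of $f'$ for the lower bound. The only notable difference is that the paper simply invokes ``the mean value theorem'' to produce a single point $c$ on the segment with $\|f(b)-f(a)\| \le \|f'(c)(b-a)\|$, while you supply the Hahn--Banach reduction that actually justifies this step for vector-valued $f$; in that sense your treatment of the upper bound is a bit more careful than the paper's, but the overall route is the same.
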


\begin{lem}
\label{thm:cont_lip}
Suppose $\Lambda$ and $\Theta$ are convex and compact with non-empty interiors, $Z$ is compact, $\Lambda \times \Theta \times Z$ is included in an open set $\Omega$ and $f(\lambda, \theta, z) \in C^k(\Omega)$, then for all $i \leq k - 1$ order partial differential $h(\lambda, \theta, z)$ of $f(\lambda, \theta, z)$, we have $\sup\limits_{\theta \in \Theta, z \in Z}||h(\lambda, \theta, z)||_{\lambda \in \Lambda, Lip} < \infty$ and $\sup\limits_{\lambda \in \Lambda, z \in Z}||h(\lambda, \theta, z)||_{\theta \in \Theta, Lip} < \infty$.
\begin{proof}
Suppose $h(\lambda, \theta, z)$ is a $i \leq k - 1$ order partial differential of $f(\lambda, \theta, z)$, then $h(\lambda, \theta, z) \in C^1(\Omega)$ and $\nabla_{\lambda} h(\lambda, \theta, z) \in C(\Omega)$. Since $\Lambda \times \Theta \times Z$ is compact, $\nabla_{\lambda} h(\lambda, \theta, z)$ is bounded in $\Lambda \times \Theta \times Z$. According to Lemma~\ref{thm:lip_bd}, we have 
\begin{align*}
    \sup\limits_{\theta \in \Theta, z \in Z} ||h(\lambda, \theta, z)||_{\lambda \in \Lambda, Lip} = \sup\limits_{\theta \in \Theta, z \in Z} \sup\limits_{\lambda \in \Lambda} ||\nabla_{\lambda} h(\lambda, \theta, z)|| < \infty.
\end{align*}
Similarly, we can derive $\sup\limits_{\lambda \in \Lambda, z \in Z}||h(\lambda, \theta, z)||_{\theta \in \Theta, Lip} < \infty$.
\end{proof}
\end{lem}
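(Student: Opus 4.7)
\textbf{Proof proposal for Lemma~\ref{thm:cont_lip}.}

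My plan is to reduce the lemma to a direct application of Lemma~\ref{thm:lip_bd} combined with continuity on a compact set. Fix any partial derivative $h$ of $f$ of order $i \le k-1$, taken with respect to an arbitrary multi-index in the $\lambda,\theta,z$ variables. Since $f \in C^k(\Omega)$, the function $h$ inherits $C^{k-i}(\Omega)$ regularity; as $k-i \ge 1$, in particular $h \in C^1(\Omega)$, so the partial derivative $\nabla_\lambda h$ exists and is continuous on the open set $\Omega$.

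Next, I would fix an arbitrary pair $(\theta, z) \in \Theta \times Z$ and view $\lambda \mapsto h(\lambda,\theta,z)$ as a function on $\Lambda$. Because $\Omega$ is open and contains $\Lambda \times \{\theta\} \times \{z\}$, the slice $U_{\theta,z} := \{\lambda : (\lambda,\theta,z) \in \Omega\}$ is an open set in $\lambda$-space containing $\Lambda$, and $h(\cdot,\theta,z)$ is $C^1$ on $U_{\theta,z}$. Since $\Lambda$ is convex with non-empty interior by Assumption~\ref{assump:compact}, I can invoke Lemma~\ref{thm:lip_bd} with $S = \Lambda$ to conclude
\begin{equation*}
\|h(\cdot,\theta,z)\|_{\lambda \in \Lambda,\,Lip} \;=\; \sup_{\lambda \in \Lambda}\,\|\nabla_\lambda h(\lambda,\theta,z)\|.
\end{equation*}

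Taking the supremum over $(\theta,z) \in \Theta \times Z$ on both sides yields
\begin{equation*}
\sup_{\theta \in \Theta,\, z \in Z} \|h(\cdot,\theta,z)\|_{\lambda \in \Lambda,\,Lip} \;=\; \sup_{(\lambda,\theta,z) \in \Lambda \times \Theta \times Z}\,\|\nabla_\lambda h(\lambda,\theta,z)\|.
\end{equation*}
Since $\nabla_\lambda h$ is continuous on $\Omega$ and $\Lambda \times \Theta \times Z$ is a product of compact sets (hence compact) contained in $\Omega$, the continuous map $(\lambda,\theta,z)\mapsto \|\nabla_\lambda h(\lambda,\theta,z)\|$ attains a finite maximum on it. This gives the first finiteness claim. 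The second claim follows by the symmetric argument with the roles of $\lambda$ and $\theta$ swapped, using that $\Theta$ is also convex and compact with non-empty interior.

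The only mildly delicate step is ensuring the hypotheses of Lemma~\ref{thm:lip_bd} are genuinely met on each slice, namely that $h(\cdot,\theta,z)$ is $C^1$ on some open set containing $\Lambda$; I would make this explicit via the open slice $U_{\theta,z}$ as above. Beyond that, the argument is essentially bookkeeping — no new analytic estimate is required, since $C^k$ regularity plus compactness does all the work.
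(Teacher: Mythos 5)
Your argument is correct and follows essentially the same route as the paper's proof: observe that $h\in C^1(\Omega)$ so $\nabla_\lambda h$ is continuous and hence bounded on the compact set $\Lambda\times\Theta\times Z$, then apply Lemma~\ref{thm:lip_bd} to identify the Lipschitz norm with the supremum of the gradient norm. Your extra care with the open slice $U_{\theta,z}$ just makes explicit a step the paper leaves implicit; otherwise the two proofs coincide.
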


\begin{lem}
\label{thm:hat_theta_lip}
Suppose (1) $\forall 1 \leq k \leq K$, $\forall \lambda \in \Lambda$, $G_{\lambda,k}(\theta)$ is a mapping from $\Theta$ to $\Theta$, i.e., $G_{\lambda,k}: \Theta \rightarrow \Theta$, (2) $\forall 1 \leq k \leq K$, $\forall \theta \in \Theta$, $G_{\lambda,k}(\theta)$ as a function of $\lambda$ is $L_1^G < \infty$ Lipschitz continuous, (3) $\forall 1 \leq k \leq K$, $\forall \lambda \in \Lambda$, $G_{\lambda,k}(\theta)$ as a function of $\theta$ is $L_2^G < \infty$ Lipschitz continuous. Let $\hat{\theta}(\lambda) = G_{\lambda,K}(G_{\lambda, K-1}(\cdots (G_{\lambda,1}(\theta_0))))$, then $\hat{\theta}(\lambda)$ is $L^{\hat{\theta}}$ Lipschitz continuous with
\begin{align*}
L^{\hat{\theta}} = \left\{
\begin{array}{ll}
    L^G_1 \frac{(L^G_2)^K - 1}{L^G_2 - 1} & L^G_2 \neq 1 \\
     K L^G_1 & L^G_2 = 1
\end{array}
\right. .
\end{align*}
\begin{proof}
We use $\theta_K(\lambda)$ to denote $G_{\lambda,K}(G_{\lambda,K-1}(\cdots (G_{\lambda,1}(\theta_0))))$. Suppose $\lambda, \lambda' \in \Lambda$ and $K \geq 1$, we have
\begin{align*}
    & ||\theta_K(\lambda) - \theta_K(\lambda')|| = ||G_{\lambda, K} (\theta_{K-1}(\lambda)) - G_{\lambda', K}(\theta_{K-1}(\lambda')|| \\
    \leq & ||G_{\lambda, K} (\theta_{K-1}(\lambda)) - G_{\lambda', K} (\theta_{K-1}(\lambda))|| + || G_{\lambda', K} (\theta_{K-1}(\lambda)) - G_{\lambda', K}(\theta_{K-1}(\lambda'))|| \\
    \leq & L^G_1 ||\lambda - \lambda'|| + L^G_2 ||\theta_{K-1}(\lambda) - \theta_{K-1}(\lambda')||.
\end{align*}

If $L^G_2 \neq 1$, we have
$||\theta_K(\lambda) - \theta_K(\lambda')|| \leq   \frac{(L^G_2)^K - 1}{L^G_2 - 1} L^G_1 ||\lambda - \lambda'||$.

If $L^G_2 = 1$, we have
$||\theta_K(\lambda) - \theta_K(\lambda')|| \leq K L^G_1 ||\lambda - \lambda'||$.
\end{proof}
\end{lem}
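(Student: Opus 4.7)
The plan is induction on $K$, tracking how the iterate $\theta_k(\lambda) := G_{\lambda,k}(G_{\lambda,k-1}(\cdots G_{\lambda,1}(\theta_0)))$ can drift when $\lambda$ is perturbed to $\lambda'$. Setting $a_k := \|\theta_k(\lambda) - \theta_k(\lambda')\|$ with $a_0 = 0$, the goal is to derive an affine recurrence for $a_k$ and solve it in closed form; the target constant $L^{\hat{\theta}}$ will then just be the coefficient of $\|\lambda - \lambda'\|$ in the solution at index $K$.

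The heart of the argument is a telescoping step at the outermost layer: insert the intermediate point $G_{\lambda',K}(\theta_{K-1}(\lambda))$ and write
\begin{align*}
\theta_K(\lambda) - \theta_K(\lambda') = \bigl[G_{\lambda,K}(\theta_{K-1}(\lambda)) - G_{\lambda',K}(\theta_{K-1}(\lambda))\bigr] + \bigl[G_{\lambda',K}(\theta_{K-1}(\lambda)) - G_{\lambda',K}(\theta_{K-1}(\lambda'))\bigr].
\end{align*}
By hypothesis (2), only the $\lambda$-argument varies in the first bracket, so its norm is at most $L_1^G \|\lambda - \lambda'\|$. By hypothesis (3), only the $\theta$-argument varies in the second bracket (at a fixed $\lambda'$), so its norm is at most $L_2^G \|\theta_{K-1}(\lambda) - \theta_{K-1}(\lambda')\| = L_2^G a_{K-1}$. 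The triangle inequality then yields the affine recurrence
\begin{align*}
a_K \leq L_2^G \, a_{K-1} + L_1^G \|\lambda - \lambda'\|.
\end{align*}

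Unrolling with $a_0 = 0$ gives $a_K \leq L_1^G \|\lambda - \lambda'\| \sum_{j=0}^{K-1} (L_2^G)^j$, and summing the geometric series produces $L^{\hat{\theta}} = L_1^G \bigl((L_2^G)^K - 1\bigr) / (L_2^G - 1)$ when $L_2^G \neq 1$, and $L^{\hat{\theta}} = K L_1^G$ in the degenerate case $L_2^G = 1$ (which is the $L_2^G \to 1$ limit of the first formula). Nothing here is technically subtle; the only real modeling decision is choosing the telescoping intermediate so that each bracket isolates exactly one Lipschitz variation. The significance of the bound lies downstream: for a standard gradient-descent step one typically has $L_2^G = 1 + \eta \gamma_\varphi > 1$, so the lemma converts $K$ inner iterations into an exponential-in-$K$ blow-up, which is precisely the source of the $(1+\eta\gamma_\varphi)^K$ factor one needs to feed into Theorem~\ref{thm:app_stb_ud} when bounding the Lipschitz and smoothness constants of $\ell(\lambda, \hat{\theta}(\lambda, S^{tr}), z)$ for UD.
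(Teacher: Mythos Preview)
Your proposal is correct and is essentially identical to the paper's proof: both insert the intermediate point $G_{\lambda',K}(\theta_{K-1}(\lambda))$, apply the $\lambda$-Lipschitz bound to the first bracket and the $\theta$-Lipschitz bound to the second, and then solve the resulting affine recurrence $a_K \le L_2^G a_{K-1} + L_1^G\|\lambda-\lambda'\|$ as a geometric series. The only difference is presentational---you name $a_k$ and spell out the unrolling, whereas the paper states the recurrence and jumps to the closed form.
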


\begin{lem}
\label{thm:hat_theta_sm}
Suppose (1) $\forall 1 \leq k \leq K$, $\forall \lambda \in \Lambda$, $G_{\lambda,k}(\theta)$ is a mapping from $\Theta$ to $\Theta$, i.e., $G_{\lambda,k}: \Theta \rightarrow \Theta$, (2) $\forall 1 \leq k \leq K$, $\forall \theta \in \Theta$, $G_{\lambda,k}(\theta)$ and $\frac{\partial}{\partial \lambda} G_{\lambda,k}(\theta)$ as a function of $\lambda$ is $L_1^G$ and $\gamma_1^G$ Lipschitz continuous respectively, (3) $\forall 1 \leq k \leq K$, $\forall \lambda \in \Lambda$, $G_{\lambda,k}(\theta)$ and $\frac{\partial}{\partial \theta} G_{\lambda,k}(\theta)$ as a function of $\theta$ is $L_2^G$ and $\gamma_2^G$ Lipschitz continuous respectively, (4) $\forall 1 \leq k \leq K$, $\forall \theta \in \Theta$, $\frac{\partial}{\partial \theta} G_{\lambda, k}(\theta)$ as a function of $\lambda$ is $\gamma_3^G \geq 0$ Lipschitz continuous, (5) $\forall 1 \leq k \leq K$, $\forall \lambda \in \Lambda$, $\frac{\partial}{\partial \lambda} G_{\lambda, k}(\theta)$ as a function of $\theta$ is $\gamma_4^G \geq 0$ Lipschitz continuous. Let $\hat{\theta}(\lambda) = G_{\lambda,K}(G_{\lambda, K-1}(\cdots (G_{\lambda,1}(\theta_0))))$, then $\hat{\theta}(\lambda)$ is $\gamma^{\hat{\theta}}$ Lipschitz smooth with
\begin{align*}
\gamma^{\hat{\theta}} = \left\{
\begin{array}{ll}
    \mathcal{O}((L^G_2)^{2K}) & L^G_2 > 1 \\
    \mathcal{O}(K^3) & L^G_2 = 1, L^G_1 > 0 \\
    \mathcal{O}(K) & L^G_2 = 1, L^G_1 = 0\\
    \mathcal{O}(1) & L^G_2 < 1
\end{array}
\right. ,
\end{align*}
and $\gamma^{\hat{\theta}}$ is determined by $L_1^G, L_2^G, \gamma_1^G, \gamma_2^G, \gamma_3^G, \gamma_4^G, K$.
\begin{proof}
Suppose $1 \leq k \leq K$, we use $\theta_k(\lambda)$ to denote $G_{\lambda,k}(G_{\lambda,k-1}(\cdots (G_{\lambda,1}(\theta_0))))$. According to Lemma~\ref{thm:hat_theta_lip}, $\theta_k(\lambda)$ is $L^{\hat{\theta},k} = \left\{
\begin{array}{ll}
    L^G_1 \frac{(L^G_2)^k - 1}{L^G_2 - 1} & L^G_2 \neq 1 \\
     k L^G_1 & L^G_2 = 1
\end{array}
\right.  $ Lipschitz continuous. Taking gradient to $\theta_k(\lambda)$ w.r.t. $\lambda$, we have
\begin{align*}
    & \frac{\partial}{\partial \lambda} \theta_k(\lambda) = \frac{\partial}{\partial \lambda} G_{\lambda, k}(\theta_{k-1}(\lambda)) 
    = \left[\frac{\partial}{\partial \lambda} G_{\lambda, k}(\theta)\right]\Big|_{\theta = \theta_{k-1}(\lambda)} + \left[\frac{\partial}{\partial \theta} G_{\lambda, k}(\theta)\right]\Big|_{\theta = \theta_{k-1}(\lambda)} \left[\frac{\partial}{\partial \lambda} \theta_{k-1}(\lambda)\right].
\end{align*}

Taking the Lipschitz constant w.r.t. $\lambda$, we have
\begin{align*}
    ||\left[\frac{\partial}{\partial \lambda} G_{\lambda, k}(\theta)\right]\Big|_{\theta = \theta_{k-1}(\lambda)}||_{\lambda, Lip} \leq \gamma^G_1 + \gamma^G_4 L^{\hat{\theta},k-1},
\end{align*}
\begin{align*}
    ||\left[\frac{\partial}{\partial \theta} G_{\lambda, k}(\theta)\right]\Big|_{\theta = \theta_{k-1}(\lambda)}||_{\lambda, Lip} \leq \gamma^G_3 + \gamma^G_2 L^{\hat{\theta},k-1},
\end{align*}
\begin{align*}
    ||\frac{\partial}{\partial \lambda} \theta_k(\lambda)||_{\lambda, Lip} \leq & ||\left[\frac{\partial}{\partial \lambda} G_{\lambda, k}(\theta)\right]\Big|_{\theta = \theta_{k-1}(\lambda)}||_{\lambda, Lip} \\
    & + ||\left[\frac{\partial}{\partial \theta} G_{\lambda, k}(\theta)\right]\Big|_{\theta = \theta_{k-1}(\lambda)}||_{\lambda, Lip}\ \sup\limits_{\lambda \in \Lambda} ||\frac{\partial}{\partial \lambda} \theta_{k-1}(\lambda)|| \\
    & + \sup\limits_{\lambda \in \Lambda, \theta \in \Theta} ||\frac{\partial}{\partial \theta} G_{\lambda, k}(\theta) ||\  ||\frac{\partial}{\partial \lambda} \theta_{k-1}(\lambda) ||_{\lambda, Lip} \\
    \leq & \gamma^G_1 + \gamma^G_4 L^{\hat{\theta},k-1} + (\gamma^G_3 + \gamma^G_2 L^{\hat{\theta},k-1}) L^{\hat{\theta},k-1} + L^G_2 ||\frac{\partial}{\partial \lambda} \theta_{k-1}(\lambda) ||_{\lambda, Lip} \\
    = & \gamma^G_2 (L^{\hat{\theta},k-1})^2 + (\gamma^G_3 + \gamma^G_4)L^{\hat{\theta},k-1} + \gamma^G_1 + L^G_2 ||\frac{\partial}{\partial \lambda} \theta_{k-1}(\lambda) ||_{\lambda, Lip}.
\end{align*}

As for $\theta_0$, we have 
\begin{align*}
||\frac{\partial}{\partial \lambda}\theta_0(\lambda) ||_{\lambda, Lip} = 0.
\end{align*}

Let $\gamma^{\hat{\theta}}$ be the $K$th term of the sequence defined by
\begin{align*}
    a_k = \gamma^G_2 (L^{\hat{\theta},k-1})^2 + (\gamma^G_3 + \gamma^G_4)L^{\hat{\theta},k-1} + \gamma^G_1 + L^G_2 a_{k-1}, \quad a_0 = 0,
\end{align*}
which is determined by $L_1^G, L_2^G, \gamma_1^G, \gamma_2^G, \gamma_3^G, \gamma_4^G$,
then $||\frac{\partial}{\partial \lambda} \theta_K(\lambda)||_{\lambda, Lip} \leq \gamma^{\hat{\theta}}$ and $\hat{\theta}(\lambda) = \theta_K(\lambda)$ is $\gamma^{\hat{\theta}}$ Lipschitz smooth. Finally, we analyze the order of $\gamma^{\hat{\theta}}$.
If $L^G_2 > 1$, then $L^{\hat{\theta}, K} = \mathcal{O}((L^G_2)^K)$ and  $\gamma^{\hat{\theta}} = \mathcal{O}((L^G_2)^{2K})$. 
If $L^G_2 = 1$, then $L^{\hat{\theta},K} = K L^G_1 + L^{\theta_0}$ and $
\gamma^{\hat{\theta}} = \left\{
\begin{array}{ll}
    \mathcal{O}(K) & L^G_1 = 0 \\
    \mathcal{O}(K^3) & L^G_1 > 0
\end{array}
\right.
$.
If $L^G_2 < 1$, then $L^{\hat{\theta}, K} = \mathcal{O}(1)$ and $\gamma^{\hat{\theta}} = \mathcal{O}(1)$.
\end{proof}
\end{lem}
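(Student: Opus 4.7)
The plan is to differentiate the unrolled composition $\theta_K(\lambda)=G_{\lambda,K}(G_{\lambda,K-1}(\cdots G_{\lambda,1}(\theta_0)))$ with respect to $\lambda$ using the chain rule, set up a scalar recursion for the Lipschitz norm (in $\lambda$) of $\partial_\lambda \theta_k$, and then solve the recursion in each regime of $L_2^G$. I would first invoke Lemma~\ref{thm:hat_theta_lip} to record the Lipschitz-in-$\lambda$ constant $L^{\hat\theta,k-1}$ of the partial iterate $\theta_{k-1}(\lambda)$, since this constant controls how perturbations in $\lambda$ propagate through the "inner" slot of $G_{\lambda,k}(\cdot)$.

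Next, write
\[
\partial_\lambda \theta_k(\lambda)=\bigl[\partial_\lambda G_{\lambda,k}(\theta)\bigr]\Big|_{\theta=\theta_{k-1}(\lambda)}+\bigl[\partial_\theta G_{\lambda,k}(\theta)\bigr]\Big|_{\theta=\theta_{k-1}(\lambda)}\,\partial_\lambda \theta_{k-1}(\lambda).
\]
To bound the Lipschitz-in-$\lambda$ norm $a_k:=\|\partial_\lambda\theta_k(\lambda)\|_{\lambda,Lip}$, I would use the standard inequality that the Lipschitz norm of a composed/product expression is controlled by combining sup-norms with Lipschitz norms. Concretely: assumption (2) combined with assumption (5) gives $\|\partial_\lambda G_{\lambda,k}(\theta_{k-1}(\lambda))\|_{\lambda,Lip}\le \gamma_1^G+\gamma_4^G L^{\hat\theta,k-1}$; assumption (3) combined with assumption (4) gives $\|\partial_\theta G_{\lambda,k}(\theta_{k-1}(\lambda))\|_{\lambda,Lip}\le \gamma_3^G+\gamma_2^G L^{\hat\theta,k-1}$; assumption (3) bounds the sup of $\|\partial_\theta G_{\lambda,k}\|$ by $L_2^G$; and $\|\partial_\lambda \theta_{k-1}\|\le L^{\hat\theta,k-1}$ by Lemma~\ref{thm:hat_theta_lip}. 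Collecting these yields the recursion
\[
a_k\le \gamma_2^G (L^{\hat\theta,k-1})^2+(\gamma_3^G+\gamma_4^G)L^{\hat\theta,k-1}+\gamma_1^G+L_2^G\,a_{k-1},\qquad a_0=0.
\]

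Finally, I would unroll this linear inhomogeneous recursion and substitute the closed form for $L^{\hat\theta,k}$ from Lemma~\ref{thm:hat_theta_lip} in each case. If $L_2^G>1$ then $L^{\hat\theta,k}=\Theta((L_2^G)^k)$, so the dominant term $\gamma_2^G(L^{\hat\theta,k-1})^2$ feeds into a geometric sum with ratio $L_2^G$, giving $a_K=\mathcal{O}((L_2^G)^{2K})$. If $L_2^G=1$ and $L_1^G>0$, then $L^{\hat\theta,k}=\Theta(k)$, so the dominant inhomogeneous term is quadratic in $k$ and its cumulative sum yields $a_K=\mathcal{O}(K^3)$; the sub-case $L_1^G=0$ collapses the $\gamma_2^G,\gamma_3^G,\gamma_4^G$ contributions and leaves only $\gamma_1^G$ summed $K$ times, giving $\mathcal{O}(K)$. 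If $L_2^G<1$, both $L^{\hat\theta,k}$ and the driving sequence remain uniformly bounded, and the geometric contraction factor $L_2^G$ makes the recursion stable, giving $\mathcal{O}(1)$.

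The main obstacle is the bookkeeping in the inductive step: one must carefully distinguish between partial derivatives evaluated at a fixed point (where only the explicit $\lambda$ varies) versus at the moving point $\theta_{k-1}(\lambda)$ (where both slots vary), which is precisely what assumptions (4) and (5) are designed to handle. Once the correct decomposition of $\|\partial_\lambda \theta_k\|_{\lambda,Lip}$ into "direct" and "indirect" contributions is established, the remaining work is routine manipulation of a first-order linear recursion and case analysis on $L_2^G$.
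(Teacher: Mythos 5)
Your proposal is correct and follows essentially the same route as the paper's proof: the same chain-rule decomposition of $\partial_\lambda\theta_k$ into direct and indirect contributions, the identical recursion $a_k\le\gamma_2^G(L^{\hat\theta,k-1})^2+(\gamma_3^G+\gamma_4^G)L^{\hat\theta,k-1}+\gamma_1^G+L_2^G a_{k-1}$ with $a_0=0$, and the same case analysis on $L_2^G$ to extract the asymptotic orders.
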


\begin{assumption}
\label{assump:app_compact}
$\Lambda$ and $\Theta$ are compact and convex with non-empty interiors, and $Z$ is compact.
\end{assumption}

\begin{assumption}
\label{assump:app_c2}
$\ell(\lambda, \theta, z) \in C^2(\Omega)$, where $\Omega$ is an open set including $\Lambda \times \Theta \times Z$ (i.e., $\ell$ is second order continuously differentiable on $\Omega$).
\end{assumption}

\begin{assumption}
\label{assump:app_c3}
$\varphi_i(\lambda, \theta, z) \in C^3(\Omega)$, where $\Omega$ is an open set including $\Lambda \times \Theta \times Z$ (i.e., $\varphi_i$ is third order continuously differentiable on $\Omega$).
\end{assumption}

\begin{assumption}
\label{assump:app_sm_phi}
$\varphi_i(\lambda, \theta, z)$ is $\gamma_\varphi$-Lipschitz smooth as a function of $\theta$ for all $1 \leq i \leq \mtr$, $z \in Z$ and $\lambda \in \Lambda$ (Assumption~\ref{assump:app_c3} implies such a constant $\gamma_\varphi$ exists).
\end{assumption}

Here we prove a more general version of Theorem 3 in the full paper by considering SGD or GD with weight decay $\wdl$ in the inner level. 
Theorem 3 in the full paper can be simply derived by letting $\wdl=0$.

\begin{thm}
\label{thm:app_ablo_L}
Suppose Assumption~{\ref{assump:app_compact},\ref{assump:app_c2},\ref{assump:app_c3},\ref{assump:app_sm_phi}} hold and the inner level problem is solved with $K$ steps SGD or GD with learning rate $\eta$ and weight decay $\wdl$, then $\forall S^{tr} \in Z^\mtr$, $\forall z \in Z$, $\forall g \in \cG_{\hat{\theta}}$, $\ell(\lambda, g(\lambda, S^{tr}), z)$ as a function of $\lambda$ is $L=\mathcal{O}((1 + \eta (\gamma_\varphi - \nu))^K)$ Lipschitz continuous and $\gamma = \mathcal{O}((1 + \eta(\gamma_\varphi - \nu))^{2K})$ Lipschitz smooth.
\end{thm}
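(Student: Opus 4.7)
The plan is to treat the $K$ inner updates as an iterated composition of maps $G_{\lambda,k}\colon\Theta\to\Theta$ so that Lemma~\ref{thm:hat_theta_lip} and Lemma~\ref{thm:hat_theta_sm} supply Lipschitz and smoothness bounds on the resulting iterate $g(\lambda,S^{tr})$ as a function of $\lambda$, and then push these bounds through the outer loss $\ell$ by the chain rule. Concretely, write the SGD/GD update as
\[
G_{\lambda,k}(\theta) = (1-\eta\wdl)\theta - \eta\,\nabla_\theta \psi_k(\lambda,\theta,S^{tr}),
\]
where $\psi_k = \varphi_{j_k}(\lambda,\theta,z_{j_k}^{tr})$ for a realised SGD index sequence $j_1,\dots,j_K$ and $\psi_k = \frac{1}{\mtr}\sum_i \varphi_i(\lambda,\theta,z_i^{tr})$ for GD. Any $g\in\cG_{\hat{\theta}}$ corresponds to fixing one such sequence, so throughout we reason for an arbitrary fixed realisation.

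\textbf{Six auxiliary constants.} The next step is to bound the six constants $L_1^G,L_2^G,\gamma_1^G,\gamma_2^G,\gamma_3^G,\gamma_4^G$ required by Lemma~\ref{thm:hat_theta_sm}. The dominant one is $L_2^G$: by Assumption~\ref{assump:app_sm_phi}, for $\eta\wdl\le 1$,
\[
\|G_{\lambda,k}(\theta)-G_{\lambda,k}(\theta')\| \le (1-\eta\wdl)\|\theta-\theta'\| + \eta\gamma_\varphi \|\theta-\theta'\|,
\]
so $L_2^G \le 1+\eta(\gamma_\varphi-\wdl)$. The other five constants reduce to Lipschitz norms of first- and second-order partial derivatives of $\varphi_i$ on the compact set $\Lambda\times\Theta\times Z$; each is finite by Lemma~\ref{thm:cont_lip} under Assumption~\ref{assump:app_c3}, and carries only an $\eta$ prefactor, so they contribute only bounded multiplicative constants absorbed into $\mathcal{O}(\cdot)$. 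Feeding these into Lemma~\ref{thm:hat_theta_lip} and Lemma~\ref{thm:hat_theta_sm} in the regime $L_2^G>1$ then yields
\[
L^{\hat{\theta}} = \mathcal{O}\bigl((1+\eta(\gamma_\varphi-\wdl))^K\bigr), \qquad \gamma^{\hat{\theta}} = \mathcal{O}\bigl((1+\eta(\gamma_\varphi-\wdl))^{2K}\bigr),
\]
with the $L_2^G=1$ and $L_2^G<1$ regimes only producing strictly smaller polynomial or constant orders.

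\textbf{Chain-rule step.} Finally, set $F(\lambda) := \ell(\lambda, g(\lambda,S^{tr}),z)$ and differentiate: $\nabla F(\lambda) = \partial_\lambda\ell + (\partial_\theta\ell)^\top \nabla_\lambda g$. Assumption~\ref{assump:app_c2} combined with Lemma~\ref{thm:cont_lip} gives uniform bounds on $\partial_\lambda\ell$, $\partial_\theta\ell$, $\partial_\lambda^2\ell$, $\partial_\theta^2\ell$, and $\partial_{\lambda\theta}\ell$ on the compact set. A standard Lipschitz-of-a-product estimate then gives $\|F\|_{\mathrm{Lip}} = \mathcal{O}(L^{\hat{\theta}})$, matching the stated bound on $L$, and
\[
\|\nabla F(\lambda_1)-\nabla F(\lambda_2)\| = \mathcal{O}\bigl((L^{\hat{\theta}})^2 + \gamma^{\hat{\theta}}\bigr)\|\lambda_1-\lambda_2\| = \mathcal{O}\bigl((1+\eta(\gamma_\varphi-\wdl))^{2K}\bigr)\|\lambda_1-\lambda_2\|,
\]
matching the stated bound on $\gamma$.

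\textbf{Main obstacle.} Once the iterated-map template of Lemmas~\ref{thm:hat_theta_lip}--\ref{thm:hat_theta_sm} is in place, this is essentially a bookkeeping exercise; the only real care-point is verifying that every one of the six auxiliary Lipschitz constants for $G_{\lambda,k}$ is finite uniformly over realisations and indices, which is precisely why the hypothesis demands $\varphi_i\in C^3$ rather than the $C^2$ regularity assumed for $\ell$. The chain-rule pass is routine but must be done carefully enough to confirm that the $\gamma^{\hat{\theta}}$ contribution does not blow up beyond $(L^{\hat{\theta}})^2$ in order, which it does not since both are of the same $(L_2^G)^{2K}$ order.
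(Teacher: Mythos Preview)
Your proposal is correct and follows essentially the same route as the paper: cast each inner step as an update map $G_{\lambda,k}$, bound the six Lipschitz constants (with $L_2^G\le 1+\eta(\gamma_\varphi-\wdl)$ from Assumption~\ref{assump:app_sm_phi} and the other five finite via Lemma~\ref{thm:cont_lip} under the $C^3$ hypothesis), invoke Lemmas~\ref{thm:hat_theta_lip}--\ref{thm:hat_theta_sm}, and then push the resulting $L^{\hat\theta},\gamma^{\hat\theta}$ through $\ell$ by the chain rule using the $C^2$ bounds from Lemma~\ref{thm:cont_lip}. Your remark on why $\varphi_i\in C^3$ is needed is exactly the point, and your final order count $(L^{\hat\theta})^2+\gamma^{\hat\theta}=\mathcal{O}((L_2^G)^{2K})$ matches the paper's explicit formula $\gamma = \gamma_1^\ell + \gamma_4^\ell L^{\hat\theta} + (\gamma_3^\ell + \gamma_2^\ell L^{\hat\theta}) L^{\hat\theta} + L_2^\ell \gamma^{\hat\theta}$.
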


\begin{proof}
The $k$th updating step of SGD can be written as
$$G_{\lambda, k}(\theta) = (1-\lrl \wdl)\theta - \lrl \nabla_\theta \varphi_{j_k}(\lambda, \theta, z_{j_k}^{tr}) = \nabla_\theta \left(\frac{(1-\lrl \wdl)}{2}||\theta||^2 - \lrl \varphi_{j_k}(\lambda, \theta, z_{j_k}^{tr})\right),$$ where $j_k$ is randomly selected from $\{1,2,\cdots,\mtr\}$. The output of $K$ steps SGD is $\hat{\theta}(\lambda, S^{tr}) = G_{\lambda,K}(G_{\lambda, K-1}(\cdots (G_{\lambda,1}(\theta_0))))$ and $\cG_{\hat{\theta}}$ is formed by iterates over $(j_1,j_2,\cdots,j_K) \in \{1, 2, \cdots, n\}^K$.

According to Lemma~\ref{thm:cont_lip} and Assumption~\ref{assump:c3}, we have
$$
L_1^G \triangleq \sup\limits_{k, j_k, S^{tr}, \theta} ||G_{\lambda, k}(\theta)||_{\lambda \in \Lambda, Lip} = \sup\limits_{i, z, \theta} ||\nabla_\theta \left(\frac{(1-\lrl \wdl)}{2}||\theta||^2 - \lrl \varphi_i(\lambda, \theta, z)\right)||_{\lambda \in \Lambda, Lip} < \infty.
$$

Similarly, we have
$$
\gamma_1^G \triangleq \sup\limits_{k, j_k, S^{tr}, \theta} || \frac{\partial}{\partial \lambda} G_{\lambda, k}(\theta)||_{\lambda \in \Lambda, Lip} < \infty, \ 
\gamma_2^G \triangleq \sup\limits_{k, j_k, S^{tr}, \lambda} || \frac{\partial}{\partial \theta} G_{\lambda, k}(\theta)||_{\theta \in \Theta, Lip} < \infty,
$$
$$
\gamma_3^G \triangleq \sup\limits_{k, j_k, S^{tr}, \theta} || \frac{\partial}{\partial \theta} G_{\lambda, k}(\theta)||_{\lambda \in \Lambda, Lip} < \infty, \  
\gamma_4^G \triangleq \sup\limits_{k, j_k, S^{tr}, \lambda} || \frac{\partial}{\partial \lambda} G_{\lambda, k}(\theta)||_{\theta \in \Theta, Lip} < \infty.
$$

According to Assumption~\ref{assump:sm_phi}, we have
$$
\sup\limits_{k, j_k, S^{tr}, \lambda} ||G_{\lambda, k}(\theta)||_{\theta \in \Theta, Lip} \leq 1 - \lrl \wdl + \lrl \gamma_\varphi = 1 + \lrl (\gamma_\varphi - \wdl) \triangleq L_2^G < \infty.
$$

According to Lemma~\ref{thm:hat_theta_lip} and Lemma~\ref{thm:hat_theta_sm}, $\hat{\theta}(\lambda, S^{tr})$ is $L^{\hat{\theta}} = L_1^G \frac{(L_2^G)^K - 1}{L_2^G - 1}$ Lipschitz continuous and $\gamma^{\hat{\theta}} = \mathcal{O}((L_2^G)^{2K})$ Lipschitz smooth as a function of $\lambda$. By definition, $L^{\hat{\theta}}$ and $\gamma^{\hat{\theta}}$ are independent of the training dataset $S^{tr}$ and the random indices $(j_1,j_2, \cdots,j_K)$ and thereby the randomness of $\hat{\theta}$.

According to Lemma~\ref{thm:cont_lip} and Assumption~\ref{assump:c2}, we have
\begin{align*}
    L_1^\ell = \sup\limits_{\theta \in \Theta, z \in Z} ||\ell(\lambda, \theta, z)||_{\lambda \in \Lambda, Lip} < \infty, \ 
    L_2^\ell = \sup\limits_{\lambda \in \Lambda, z \in Z} || \ell(\lambda, \theta, z)||_{\theta \in \Theta, Lip} < \infty.
\end{align*}

Similarly, we have
\begin{align*}
    \gamma_1^\ell \triangleq \sup\limits_{\theta, z} ||\left[\frac{\partial}{\partial \lambda} \ell(\lambda, \theta, z)\right]||_{\lambda \in \Lambda, Lip} < \infty,\ 
    \gamma_2^\ell \triangleq \sup\limits_{\lambda, z} ||\left[\frac{\partial}{\partial \theta} \ell(\lambda, \theta, z)\right]||_{\theta \in \Theta, Lip} < \infty,
\end{align*}
\begin{align*}
    \gamma_3^\ell \triangleq \sup\limits_{\theta, z} ||\left[\frac{\partial}{\partial \theta} \ell(\lambda, \theta, z)\right]||_{\lambda \in \Lambda, Lip} < \infty,\ 
    \gamma_4^\ell \triangleq \sup\limits_{\lambda, z} ||\left[\frac{\partial}{\partial \lambda} \ell(\lambda, \theta, z)\right]||_{\theta \in \Theta, Lip} < \infty.
\end{align*}

Suppose $z \in Z$, firstly we consider the Lipschitz continuity of $\ell(\lambda, \hat{\theta}(\lambda, S^{tr}), z)$:
\begin{align}
\label{eq:get_L}
    & ||\ell(\lambda, \hat{\theta}(\lambda, S^{tr}), z)||_{\lambda \in \Lambda, Lip} \nonumber \\
    \leq & \sup\limits_{\theta \in \Theta, z \in Z} ||\ell(\lambda, \theta, z)||_{\lambda \in \Lambda, Lip} + \sup\limits_{\lambda \in \Lambda, z \in Z} || \ell(\lambda, \theta, z)||_{\theta \in \Theta, Lip} \cdot ||\hat{\theta}(\lambda, S^{tr})||_{\lambda \in \Lambda, Lip} \nonumber \\
    \leq & L_1^\ell + L_2^\ell L^{\hat{\theta}} \triangleq L.
\end{align}

Then we consider the Lipschitz continuity of $\frac{\partial}{\partial \lambda} \ell(\lambda, \hat{\theta}(\lambda, S^{tr}), z)$, which can be expanded as
\begin{align*}
    \frac{\partial}{\partial \lambda} \ell(\lambda, \hat{\theta}(\lambda, S^{tr}), z) = \left[\frac{\partial}{\partial \lambda} \ell(\lambda, \theta, z)\right]\Big|_{\theta = \hat{\theta}(\lambda, S^{tr})} + \left[\frac{\partial}{\partial \theta} \ell(\lambda, \theta, z)\right] \Big|_{\theta = \hat{\theta}(\lambda, S^{tr})} \left[\frac{\partial}{\partial \lambda} \hat{\theta}(\lambda, S^{tr}) \right].
\end{align*}

Taking the Lipschitz norm w.r.t. $\lambda$, we have
\begin{align*}
    ||\left[\frac{\partial}{\partial \lambda} \ell(\lambda, \theta, z)\right]\Big|_{\theta = \hat{\theta}(\lambda, S^{tr})}||_{\lambda \in \Lambda, Lip} \leq \gamma^\ell_1 + \gamma^\ell_4 L^{\hat{\theta}},
\end{align*}
\begin{align*}
    ||\left[\frac{\partial}{\partial \theta} \ell(\lambda, \theta, z)\right] \Big|_{\theta = \hat{\theta}(\lambda, S^{tr})}||_{\lambda \in \Lambda, Lip} \leq \gamma^\ell_3 + \gamma^\ell_2 L^{\hat{\theta}},
\end{align*}
which yields
\begin{align}
\label{eq:get_gamma}
& ||\frac{\partial}{\partial \lambda} \ell(\lambda, \hat{\theta}(\lambda, S^{tr}),z)||_{\lambda \in \Lambda, Lip} \nonumber \\
\leq & ||\left[\frac{\partial}{\partial \lambda} \ell(\lambda, \theta, z)\right]\Big|_{\theta = \hat{\theta}(\lambda, S^{tr})}||_{\lambda \in \Lambda, Lip} \nonumber \\
& + ||\left[\frac{\partial}{\partial \theta} \ell(\lambda, \theta, z)\right] \Big|_{\theta = \hat{\theta}(\lambda, S^{tr})}||_{\lambda \in \Lambda, Lip} L^{\hat{\theta}} + L^\ell_2 ||\frac{\partial}{\partial \lambda} \hat{\theta}(\lambda, S^{tr})||_{\lambda \in \Lambda, Lip}  \nonumber \\
\leq & \gamma^\ell_1 + \gamma^\ell_4 L^{\hat{\theta}} + (\gamma^\ell_3 + \gamma^\ell_2 L^{\hat{\theta}}) L^{\hat{\theta}} + L^\ell_2 \gamma^{\hat{\theta}} \triangleq \gamma.
\end{align}

With Eq.~{(\ref{eq:get_L})} and Eq.~{(\ref{eq:get_gamma})}, we can conclude $\ell(\lambda, \hat{\theta}(\lambda, S^{tr}), z)$ as a function of $\lambda$ is $L = \mathcal{O}((1 + \lrl(\gamma_\varphi - \wdl))^K)$ Lipschitz continuous and $\gamma = \mathcal{O}((1 + \lrl(\gamma_\varphi - \wdl))^{2K})$ Lipschitz smooth. By definition, $L$ and $\gamma$ are independent of the training dataset $S^{tr}$, $z$, the random indices $(j_1,j_2, \cdots,j_K)$ and thereby the randomness of $\hat{\theta}$. Thereby, we have $\forall S^{tr} \in Z^\mtr$, $\forall z \in Z$, $\forall g \in \cG_{\hat{\theta}}$, $\ell(\lambda, g(\lambda, S^{tr}), z)$ as a function of $\lambda$ is $L=\mathcal{O}((1 + \eta (\gamma_\varphi - \nu))^K)$ Lipschitz continuous and $\gamma = \mathcal{O}((1 + \eta(\gamma_\varphi - \nu))^{2K})$ Lipschitz smooth. Similarly, the result also holds for GD.
\end{proof}

\subsection{Proof of Theorem 4}
\begin{thm}[Expectation bound of CV]
\label{thm:app_cv}
Suppose $S^{tr} \sim (D^{tr})^\mtr$, $S^{val} \sim (D^{val})^\mval$ and $S^{tr}$ and $S^{val}$ are independent, and let $\sA^{cv}(S^{tr}, S^{val})$ denote the results of CV as shown in Algorithm~{2}, then 
\begin{align*}
    |\mathbb{E} \left[R(\sA^{cv}(S^{tr}, S^{val}), D^{val}) - \hat{R}^{val}(\sA^{cv}(S^{tr}, S^{val}), S^{val}) \right]| \leq s(\ell) \sqrt{\frac{\log T}{2\mval}}.
\end{align*}
\end{thm}

\begin{proof}
Let $\lambda_t \in \Lambda$ be the $t$th hyperparameter, which is a random vector taking value on $\Lambda$, $\hat{\theta}^t$ be the random function corresponding to the $t$th optimization in the inner level, then $\hat{\theta}^t(\lambda_t, S^{tr})$ is the output hypothesis given hyperparameter $\lambda_t$ and training dataset $S^{tr}$. Let $t^*$ be the index of the best hyperparameter, i.e.,
\begin{align*}
    t^* = \argmin\limits_{1 \leq t \leq T} \hat{R}^{val}(\lambda_t, \hat{\theta}^t(\lambda_t, S^{tr}), S^{val}),
\end{align*}
then the output of CV is $\sA^{cv}(S^{tr}, S^{val}) = (\lambda_{t^*}, \hat{\theta}^{t^*}(\lambda_{t^*}, S^{tr}))$.

Let $X_t = R(\lambda_t, \hat{\theta}^t(\lambda_t, S^{tr}), D^{val}) - \hat{R}^{val}(\lambda_t, \hat{\theta}^t(\lambda_t, S^{tr}), S^{val})$, then we have
\begin{align*}
    & R(\sA^{cv}(S^{tr}, S^{val}), D^{val}) - \hat{R}^{val}(\sA^{cv}(S^{tr}, S^{val}), S^{val}) \\
    = & R(\lambda_{t^*}, \hat{\theta}^{t^*}(\lambda_{t^*}, S^{tr}), D^{val}) - \hat{R}^{val}(\lambda_{t^*}, \hat{\theta}^{t^*}(\lambda_{t^*}, S^{tr}), S^{val}) = X_{t^*}.
\end{align*}

By Hoeffding’s lemma, we have for any $s > 0$
\begin{align*}
    \E e^{s X_t} = & \E_{\lambda_t, \hat{\theta}^t, S^{tr}} \E_{S^{val}} \exp \left( \frac{s}{\mval} \sum\limits_{k=1}^\mval R(\lambda_t, \hat{\theta}^t(\lambda_t, S^{tr}), D^{val}) - \ell(\lambda_t, \hat{\theta}^t(\lambda_t, S^{tr}), z_k^{val}) \right) \\
    = & \E_{\lambda_t, \hat{\theta}^t, S^{tr}} \prod\limits_{k=1}^\mval \E_{z_k^{val}} \exp \left(\frac{s}{\mval} \left( R(\lambda_t, \hat{\theta}^t(\lambda_t, S^{tr}), D^{val}) - \ell(\lambda_t, \hat{\theta}^t(\lambda_t, S^{tr}), z_k^{val}) \right) \right) \\
    \leq & \prod\limits_{k=1}^\mval \exp ( \frac{s^2}{\mval^2} \frac{s(\ell)^2}{8} ) = \exp ( \frac{s^2}{\mval} \frac{s(\ell)^2}{8} ).
\end{align*}

Then we have
\begin{align*}
    \E X_{t^*} \leq & \E \max\limits_{1 \leq t \leq T} X_t = \frac{1}{s} \E \log \exp (s \max\limits_{1 \leq t \leq T} X_t) \leq \frac{1}{s} \log \E \exp (s \max\limits_{1 \leq t \leq T} X_t) \\
    = & \frac{1}{s} \log \E \max\limits_{1 \leq t \leq T} \exp (s X_t) \leq \frac{1}{s} \log \sum\limits_{1 \leq t \leq T} \E \exp (s X_t) \\
    \leq & \frac{1}{s} \log \left( T \exp ( \frac{s^2}{\mval} \frac{s(\ell)^2}{8} ) \right) = \frac{\log T}{s}  + \frac{s\cdot s(\ell)^2}{8 \mval}.
\end{align*}
Taking $s = \sqrt{\frac{8\mval \log T}{s(\ell)^2}}$, we have $\E X_{t^*} \leq s(\ell) \sqrt{\frac{\log T}{2\mval}}$. Similarly, we have $- \E X_{t^*} \leq s(\ell) \sqrt{\frac{\log T}{2\mval}}$.
Finally, $|\E X_{t^*}| \leq s(\ell) \sqrt{\frac{\log T}{2\mval}}$.
\end{proof}

\section{Construct a Worst Case for Theorem 3}

We construct a worst case where the Lipschitz constant $L$ in Theorem~{3} increases at least exponentially w.r.t. $K$. It is a feature learning example with a small neural network. The model has one parameter and one hyperparameter and uses squared activation function~\cite{ali2020polynomial}. We use
the squared loss. The data distribution is any distribution in the support $Z = \{(x, y): \frac{1}{2} \leq x \leq 1, 1 \leq y \leq 2\}$. The
parameter space and hyperparameter space are $\Theta = [0, 1]$ and $\Lambda = [0, \frac{1}{4}]$ respectively. Formally, the loss function is $\ell(\lambda, \theta, z) = (y - \lambda (\theta x)^2)^2$. The inner loop is solved by SGD with a learning rate $\eta$. We formalize the result in Proposition~\ref{thm:worst_case}.

\begin{pro}
\label{thm:worst_case}
Suppose $\ell(\lambda, \theta, z) = (y - \lambda (\theta x)^2)^2$, $\Lambda = [0, \frac{1}{4}]$, $\Theta = [0, 1]$, $Z = \{(x, y): \frac{1}{2} \leq x \leq 1, 1 \leq y \leq 2\}$ and the inner level problem is solved with $K$ steps SGD with learning rate $\eta$, then $\forall S^{tr} \in Z^\mtr$, $\forall z \in Z$, $\forall g \in \cG_{\hat{\theta}}$, $\ell(\lambda, g(\lambda, S^{tr}), z)$ as a function of $\lambda$ is at least $L=\Omega((1 + \frac{3}{16}\eta)^K)$ Lipschitz continuous.
\end{pro}

\begin{proof}
We use $z=(x,y) \in Z$ to denote the data point used in one step of SGD, where we omit the index of the data point for simplicity. Firstly, the gradient of the loss function is $\nabla_\theta \ell(\lambda, \theta, z) = 2 (y - \lambda (\theta x)^2) (- \lambda x^2 2 \theta) = -4 (y \lambda x^2 \theta - \lambda^2 \theta^3 x^4)$ and one step SGD satisfies
\begin{align*}
    & \theta - \eta \nabla_\theta \ell(\lambda, \theta, z) = \theta + 4 \eta (y \lambda x^2 \theta - \lambda^2 \theta^3 x^4) = (1+4\eta y \lambda x^2) \theta - 4 \eta \lambda^2 \theta^3 x^4 \\
    \geq & (1+4\eta y \lambda x^2) \theta - 4 \eta \lambda^2 \theta x^4 = (1+4\eta y \lambda x^2- 4 \eta \lambda^2 x^4) \theta \geq (1+3\eta \lambda x^2) \theta \geq (1 + \frac{3}{4}\eta \lambda)\theta.
\end{align*}

Let $\{\hat{\theta}_k(\lambda)\}_{k \geq 0}$ be the trajectory of SGD, then we have $\hat{\theta}_k(\lambda) \geq (1 + \frac{3}{4}\eta \lambda)^k\theta_0$.

Taking gradient of $\hat{\theta}_k(\lambda)$ w.r.t. $\lambda$, we have
\begin{align*}
    \nabla_\lambda \hat{\theta}_{k+1}(\lambda) = & 4\eta y x^2 \hat{\theta}_k(\lambda) + (1+4\eta y \lambda x^2) \nabla_\lambda \hat{\theta}_k(\lambda) - 4 \eta  x^4  (2 \lambda \hat{\theta}_k(\lambda)^3 + \lambda^2 3 \hat{\theta}_k(\lambda)^2 \nabla_\lambda \hat{\theta}_k(\lambda)) \\
    = & 4\eta y x^2 \hat{\theta}_k(\lambda) + (1+4\eta y \lambda x^2) \nabla_\lambda \hat{\theta}_k(\lambda) - 8 \eta  x^4 \lambda \hat{\theta}_k(\lambda)^3 - 12 \eta  x^4\lambda^2 \hat{\theta}_k(\lambda)^2 \nabla_\lambda \hat{\theta}_k(\lambda) \\
    = & 4\eta x^2 \hat{\theta}_k(\lambda) (y  - 2 x^2 \lambda \hat{\theta}_k(\lambda)^2 ) + (1+4\eta y \lambda x^2 - 12 \eta  x^4\lambda^2 \hat{\theta}_k(\lambda)^2) \nabla_\lambda \hat{\theta}_k(\lambda)
\end{align*}

As for the first term, we have $4\eta x^2 \hat{\theta}_k(\lambda) (y  - 2 x^2 \lambda \hat{\theta}_k(\lambda)^2 ) \geq 2\eta x^2 \hat{\theta}_k(\lambda) \geq 0$. As for the coefficient of the second term, we have $1+4\eta y \lambda x^2 - 12 \eta  x^4\lambda^2 \hat{\theta}_k(\lambda)^2 \geq 1+\eta \lambda x^2 \geq 1+\eta \lambda /4 \geq 0$. Besides, $\nabla_\lambda \hat{\theta}_1(\lambda) = 4 \eta y x^2 \theta_0 - 8 \eta \lambda x^4 \theta_0^3 \geq 2 x^2 \theta_0 \eta \geq \frac{1}{2} \theta_0 \eta$. Thereby, $\nabla_\lambda \hat{\theta}_k(\lambda) \geq 0$ and furthermore
\begin{align*}
    \nabla_\lambda \hat{\theta}_{k+1}(\lambda) \geq (1+\eta\lambda /4) \nabla_\lambda \hat{\theta}_k(\lambda) \geq (1+\eta \lambda/4)^k \nabla_\lambda \hat{\theta}_1(\lambda) \geq \frac{1}{2} (1+\eta \lambda/4)^k \theta_0 \eta.
\end{align*}

Then, we consider $\ell(\lambda, \hat{\theta}_K(\lambda), z) = (y - \lambda (\hat{\theta}_K(\lambda) x)^2)^2$. Its gradient w.r.t. $\lambda$ is
\begin{align*}
    \nabla_\lambda \ell(\lambda, \hat{\theta}_K(\lambda), z) = 2 (y - \lambda (\hat{\theta}_K(\lambda) x)^2) (- (\hat{\theta}_K(\lambda)x)^2 - 2 \lambda x^2 \hat{\theta}_K(\lambda) \nabla_\lambda \hat{\theta}_K(\lambda)).
\end{align*}

Thereby,
\begin{align*}
    |\nabla_\lambda \ell(\lambda, \hat{\theta}_K(\lambda), z)| = & 2 |y - \lambda (\hat{\theta}_K(\lambda)x)^2| \cdot | (\hat{\theta}_K(\lambda) x)^2 + 2 \lambda x^2 \hat{\theta}_K(\lambda) \nabla_\lambda \hat{\theta}_K(\lambda)| \\
    = & 2 |y - \lambda (\hat{\theta}_K(\lambda) x)^2| \cdot | \hat{\theta}_K(\lambda) + 2 \lambda \nabla_\lambda \hat{\theta}_K(\lambda)| \cdot \hat{\theta}_K(\lambda) \cdot x^2.
\end{align*}

Since $|y - \lambda (\hat{\theta}_K(\lambda) x)^2| \geq (1 - \frac{1}{4}) = \frac{3}{4}$, $| \hat{\theta}_K(\lambda) + 2 \lambda  \nabla_\lambda \hat{\theta}_K(\lambda)| \geq  \lambda (1+\eta \lambda / 4)^{K-1} \theta_0 \eta$ and $\hat{\theta}_K(\lambda) \geq (1+\frac{3}{4}\eta \lambda)^K \theta_0$, we have
\begin{align*}
    |\nabla_\lambda \ell(\lambda, \hat{\theta}_K(\lambda), z)| \geq & 2 \cdot \frac{3}{4} \cdot \lambda (1+\eta \lambda / 4)^{K-1} \theta_0 \eta \cdot (1+\frac{3}{4}\eta \lambda)^K \theta_0 \cdot \frac{1}{4} \\
    = & \frac{3}{8} \lambda (1+\eta \lambda / 4)^{K-1} \theta_0^2 \eta (1+3\eta \lambda /4)^K.
\end{align*}

Finally,
\begin{align*}
    ||\ell(\lambda, \hat{\theta}(\lambda), z)||_{Lip} \geq & \sup\limits_{\lambda \in \Lambda} \frac{3}{8} \lambda (1+\eta \lambda / 4)^{K-1} \theta_0^2 \eta (1+3\eta \lambda /4)^K \\
    \geq & \frac{3}{32}  (1+\eta / 16)^{K-1} \theta_0^2 \eta (1+3\eta  /16)^K := L,
\end{align*}
and $||\ell(\lambda, \hat{\theta}(\lambda), z)||_{Lip} \geq L = \Omega((1+\frac{3}{16}\eta)^K)$.
\end{proof}

\section{Improve Theorem~{3} under Stronger Assumptions}

When the inner loss $\varphi_i$ is convex or strongly convex, we can get tighter bounds for $L$ and $\gamma$ in Theorem~\ref{thm:ablo_L}. In Proposition~\ref{thm:ablo_L_convex}, we show that $L = \mathcal{O}(K)$ and $\gamma = \mathcal{O}(K^3)$ when the inner loss $\varphi_i$ is convex. In this case, the dependence on $K$ of the generalization gap (i.e., $\beta$ in Theorem~\ref{thm:stb_ud}) is $\mathcal{O}(K^2)$. In Proposition~\ref{thm:ablo_L_strongly_convex}, we show that $L = \mathcal{O}(1)$ and $\gamma = \mathcal{O}(1)$ w.r.t. $K$ when the inner loss $\varphi_i$ is strongly convex. In this case, the dependence on $K$ of the generalization gap is $\mathcal{O}(1)$. We get these tighter results by deriving tighter Lipschitz constants for updating functions of SGD w.r.t. $\theta$, using the (strongly) convex properties of $\varphi_i$. Other parts of the proof is the same as Theorem~\ref{thm:ablo_L}.

Notice that Theorem~\ref{thm:ablo_L} implies that the learning rate $\eta$ in the inner level should be of the order of $1/K$ for a moderate $L$ and $\gamma$. Therefore, $\eta$ will be very small when $K$ is very large, and the algorithm will converge slow in practice. However, Proposition~\ref{thm:ablo_L_convex} and Proposition~\ref{thm:ablo_L_strongly_convex} imply that if we use a (strongly) convex inner loss, $\eta$ will not affect the order of $L$ and $\gamma$, and thereby we can use a larger $\eta$ in practice in this case.

\begin{pro}
\label{thm:ablo_L_convex}
Suppose Assumption~{\ref{assump:compact},\ref{assump:c2},\ref{assump:c3},\ref{assump:sm_phi}} hold, $\varphi_i(\lambda, \theta, z)$ as a function of $\theta$ is convex for all $1 \leq i \leq n$, $z \in Z$ and $\lambda \in \Lambda$, and the inner level problem is solved with $K$ steps SGD or GD with learning rate $\eta \leq \frac{2}{\gamma_\varphi}$, then $\forall S^{tr} \in Z^\mtr$, $\forall z \in Z$, $\forall g \in \cG_{\hat{\theta}}$, $\ell(\lambda, g(\lambda, S^{tr}), z)$ as a function of $\lambda$ is $L=\mathcal{O}(K)$ Lipschitz continuous and $\gamma = \mathcal{O}(K^3)$ Lipschitz smooth.
\end{pro}

\begin{proof}
The $k$th updating step of SGD can be written as
$$G_{\lambda, k}(\theta) = \theta - \lrl \nabla_\theta \varphi_{j_k}(\lambda, \theta, z_{j_k}^{tr}) = \nabla_\theta \left(\frac{1}{2}||\theta||^2 - \lrl \varphi_{j_k}(\lambda, \theta, z_{j_k}^{tr})\right),$$ where $j_k$ is randomly selected from $\{1,2,\cdots,\mtr\}$. The output of $K$ steps SGD is $\hat{\theta}(\lambda, S^{tr}) = G_{\lambda,K}(G_{\lambda, K-1}(\cdots (G_{\lambda,1}(\theta_0))))$ and $\cG_{\hat{\theta}}$ is formed by iterates over $(j_1,j_2,\cdots,j_K) \in \{1, 2, \cdots, n\}^K$.

According to Lemma~\ref{thm:cont_lip} and Assumption~\ref{assump:c3}, we have
$$
L_1^G \triangleq \sup\limits_{k, j_k, S^{tr}, \theta} ||G_{\lambda, k}(\theta)||_{\lambda \in \Lambda, Lip} = \sup\limits_{i, z, \theta} ||\nabla_\theta \left(\frac{1}{2}||\theta||^2 - \lrl \varphi_i(\lambda, \theta, z)\right)||_{\lambda \in \Lambda, Lip} < \infty.
$$

Similarly, we have
$$
\gamma_1^G \triangleq \sup\limits_{k, j_k, S^{tr}, \theta} || \frac{\partial}{\partial \lambda} G_{\lambda, k}(\theta)||_{\lambda \in \Lambda, Lip} < \infty, \ 
\gamma_2^G \triangleq \sup\limits_{k, j_k, S^{tr}, \lambda} || \frac{\partial}{\partial \theta} G_{\lambda, k}(\theta)||_{\theta \in \Theta, Lip} < \infty,
$$
$$
\gamma_3^G \triangleq \sup\limits_{k, j_k, S^{tr}, \theta} || \frac{\partial}{\partial \theta} G_{\lambda, k}(\theta)||_{\lambda \in \Lambda, Lip} < \infty, \  
\gamma_4^G \triangleq \sup\limits_{k, j_k, S^{tr}, \lambda} || \frac{\partial}{\partial \lambda} G_{\lambda, k}(\theta)||_{\theta \in \Theta, Lip} < \infty.
$$

Then we consider $||G_{\lambda, k}(\theta)||_{\theta \in \Theta, Lip}$. According to the co-coercivity of $\nabla_\theta \varphi_{j_k}(\lambda, \theta, z_{j_k}^{tr})$, we have
\begin{align*}
|| G_{\lambda, k}(\theta) - G_{\lambda, k}(\theta') ||^2 = & ||\theta - \theta'||^2 + \eta^2 ||\nabla_\theta \varphi_{j_k}(\lambda, \theta, z_{j_k}^{tr}) - \nabla_\theta \varphi_{j_k}(\lambda, \theta', z_{j_k}^{tr})||^2 \\
& - 2 \eta \left<\theta - \theta', \nabla_\theta \varphi_{j_k}(\lambda, \theta, z_{j_k}^{tr}) - \nabla_\theta \varphi_{j_k}(\lambda, \theta', z_{j_k}^{tr}) \right> \\
\leq & ||\theta - \theta'||^2 + \eta^2 ||\nabla_\theta \varphi_{j_k}(\lambda, \theta, z_{j_k}^{tr}) - \nabla_\theta \varphi_{j_k}(\lambda, \theta', z_{j_k}^{tr})||^2 \\
& - 2 \frac{\eta}{\gamma_\varphi} || \nabla_\theta \varphi_{j_k}(\lambda, \theta, z_{j_k}^{tr}) - \nabla_\theta \varphi_{j_k}(\lambda, \theta', z_{j_k}^{tr}) ||^2 \leq ||\theta - \theta'||^2.
\end{align*}

Thereby, $||G_{\lambda, k}(\theta)||_{\theta \in \Theta, Lip} \leq 1$ and $\sup\limits_{k, j_k, S^{tr}, \lambda} ||G_{\lambda, k}(\theta)||_{\theta \in \Theta, Lip} \leq 1 \triangleq L_2^G$. According to Lemma~\ref{thm:hat_theta_lip} and Lemma~\ref{thm:hat_theta_sm}, $\hat{\theta}(\lambda, S^{tr})$ is $L^{\hat{\theta}} = K L_1^G$ Lipschitz continuous and $\gamma^{\hat{\theta}} = \mathcal{O}(K^3)$ Lipschitz smooth as a function of $\lambda$. By definition, $L^{\hat{\theta}}$ and $\gamma^{\hat{\theta}}$ are independent of the training dataset $S^{tr}$ and the random indices $(j_1,j_2, \cdots,j_K)$ and thereby the randomness of $\hat{\theta}$.

According to Lemma~\ref{thm:cont_lip} and Assumption~\ref{assump:c2}, we have
\begin{align*}
    L_1^\ell = \sup\limits_{\theta \in \Theta, z \in Z} ||\ell(\lambda, \theta, z)||_{\lambda \in \Lambda, Lip} < \infty, \ 
    L_2^\ell = \sup\limits_{\lambda \in \Lambda, z \in Z} || \ell(\lambda, \theta, z)||_{\theta \in \Theta, Lip} < \infty.
\end{align*}

Similarly, we have
\begin{align*}
    \gamma_1^\ell \triangleq \sup\limits_{\theta, z} ||\left[\frac{\partial}{\partial \lambda} \ell(\lambda, \theta, z)\right]||_{\lambda \in \Lambda, Lip} < \infty,\ 
    \gamma_2^\ell \triangleq \sup\limits_{\lambda, z} ||\left[\frac{\partial}{\partial \theta} \ell(\lambda, \theta, z)\right]||_{\theta \in \Theta, Lip} < \infty,
\end{align*}
\begin{align*}
    \gamma_3^\ell \triangleq \sup\limits_{\theta, z} ||\left[\frac{\partial}{\partial \theta} \ell(\lambda, \theta, z)\right]||_{\lambda \in \Lambda, Lip} < \infty,\ 
    \gamma_4^\ell \triangleq \sup\limits_{\lambda, z} ||\left[\frac{\partial}{\partial \lambda} \ell(\lambda, \theta, z)\right]||_{\theta \in \Theta, Lip} < \infty.
\end{align*}

Suppose $z \in Z$, firstly we consider the Lipschitz continuity of $\ell(\lambda, \hat{\theta}(\lambda, S^{tr}), z)$:
\begin{align}
\label{eq:get_L_convex}
    & ||\ell(\lambda, \hat{\theta}(\lambda, S^{tr}), z)||_{\lambda \in \Lambda, Lip} \nonumber \\
    \leq & \sup\limits_{\theta \in \Theta, z \in Z} ||\ell(\lambda, \theta, z)||_{\lambda \in \Lambda, Lip} + \sup\limits_{\lambda \in \Lambda, z \in Z} || \ell(\lambda, \theta, z)||_{\theta \in \Theta, Lip} \cdot ||\hat{\theta}(\lambda, S^{tr})||_{\lambda \in \Lambda, Lip} \nonumber \\
    \leq & L_1^\ell + L_2^\ell L^{\hat{\theta}} \triangleq L.
\end{align}

Then we consider the Lipschitz continuity of $\frac{\partial}{\partial \lambda} \ell(\lambda, \hat{\theta}(\lambda, S^{tr}), z)$, which can be expanded as
\begin{align*}
    \frac{\partial}{\partial \lambda} \ell(\lambda, \hat{\theta}(\lambda, S^{tr}), z) = \left[\frac{\partial}{\partial \lambda} \ell(\lambda, \theta, z)\right]\Big|_{\theta = \hat{\theta}(\lambda, S^{tr})} + \left[\frac{\partial}{\partial \theta} \ell(\lambda, \theta, z)\right] \Big|_{\theta = \hat{\theta}(\lambda, S^{tr})} \left[\frac{\partial}{\partial \lambda} \hat{\theta}(\lambda, S^{tr}) \right].
\end{align*}

Taking the Lipschitz norm w.r.t. $\lambda$, we have
\begin{align*}
    ||\left[\frac{\partial}{\partial \lambda} \ell(\lambda, \theta, z)\right]\Big|_{\theta = \hat{\theta}(\lambda, S^{tr})}||_{\lambda \in \Lambda, Lip} \leq \gamma^\ell_1 + \gamma^\ell_4 L^{\hat{\theta}},
\end{align*}
\begin{align*}
    ||\left[\frac{\partial}{\partial \theta} \ell(\lambda, \theta, z)\right] \Big|_{\theta = \hat{\theta}(\lambda, S^{tr})}||_{\lambda \in \Lambda, Lip} \leq \gamma^\ell_3 + \gamma^\ell_2 L^{\hat{\theta}},
\end{align*}
which yields
\begin{align}
\label{eq:get_gamma_convex}
& ||\frac{\partial}{\partial \lambda} \ell(\lambda, \hat{\theta}(\lambda, S^{tr}),z)||_{\lambda \in \Lambda, Lip} \nonumber \\
\leq & ||\left[\frac{\partial}{\partial \lambda} \ell(\lambda, \theta, z)\right]\Big|_{\theta = \hat{\theta}(\lambda, S^{tr})}||_{\lambda \in \Lambda, Lip} \nonumber \\
& + ||\left[\frac{\partial}{\partial \theta} \ell(\lambda, \theta, z)\right] \Big|_{\theta = \hat{\theta}(\lambda, S^{tr})}||_{\lambda \in \Lambda, Lip} L^{\hat{\theta}} + L^\ell_2 ||\frac{\partial}{\partial \lambda} \hat{\theta}(\lambda, S^{tr})||_{\lambda \in \Lambda, Lip}  \nonumber \\
\leq & \gamma^\ell_1 + \gamma^\ell_4 L^{\hat{\theta}} + (\gamma^\ell_3 + \gamma^\ell_2 L^{\hat{\theta}}) L^{\hat{\theta}} + L^\ell_2 \gamma^{\hat{\theta}} \triangleq \gamma.
\end{align}

With Eq.~{(\ref{eq:get_L_convex})} and Eq.~{(\ref{eq:get_gamma_convex})}, we can conclude $\ell(\lambda, \hat{\theta}(\lambda, S^{tr}), z)$ as a function of $\lambda$ is $L = \mathcal{O}(K)$ Lipschitz continuous and $\gamma = \mathcal{O}(K^3)$ Lipschitz smooth. By definition, $L$ and $\gamma$ are independent of the training dataset $S^{tr}$, $z$, the random indices $(j_1,j_2, \cdots,j_K)$ and thereby the randomness of $\hat{\theta}$. Thereby, we have $\forall S^{tr} \in Z^\mtr$, $\forall z \in Z$, $\forall g \in \cG_{\hat{\theta}}$, $\ell(\lambda, g(\lambda, S^{tr}), z)$ as a function of $\lambda$ is $L=\mathcal{O}(K)$ Lipschitz continuous and $\gamma = \mathcal{O}(K^3)$ Lipschitz smooth. Similarly, the result also holds for GD.
\end{proof}

\begin{pro}
\label{thm:ablo_L_strongly_convex}
Suppose Assumption~{\ref{assump:compact},\ref{assump:c2},\ref{assump:c3},\ref{assump:sm_phi}} hold, $\varphi_i(\lambda, \theta, z)$ as a function of $\theta$ is $\tau$-strongly convex for all $1 \leq i \leq n$, $z \in Z$ and $\lambda \in \Lambda$, and the inner level problem is solved with $K$ steps SGD or GD with learning rate $\eta \leq \frac{1}{\gamma_\varphi}$, then $\forall S^{tr} \in Z^\mtr$, $\forall z \in Z$, $\forall g \in \cG_{\hat{\theta}}$, $\ell(\lambda, g(\lambda, S^{tr}), z)$ as a function of $\lambda$ is $L=\mathcal{O}(1)$ Lipschitz continuous and $\gamma = \mathcal{O}(1)$ Lipschitz smooth w.r.t. $K$.
\end{pro}

\begin{proof}
The $k$th updating step of SGD can be written as
$$G_{\lambda, k}(\theta) = \theta - \lrl \nabla_\theta \varphi_{j_k}(\lambda, \theta, z_{j_k}^{tr}) = \nabla_\theta \left(\frac{1}{2}||\theta||^2 - \lrl \varphi_{j_k}(\lambda, \theta, z_{j_k}^{tr})\right),$$ where $j_k$ is randomly selected from $\{1,2,\cdots,\mtr\}$. The output of $K$ steps SGD is $\hat{\theta}(\lambda, S^{tr}) = G_{\lambda,K}(G_{\lambda, K-1}(\cdots (G_{\lambda,1}(\theta_0))))$ and $\cG_{\hat{\theta}}$ is formed by iterates over $(j_1,j_2,\cdots,j_K) \in \{1, 2, \cdots, n\}^K$.

According to Lemma~\ref{thm:cont_lip} and Assumption~\ref{assump:c3}, we have
$$
L_1^G \triangleq \sup\limits_{k, j_k, S^{tr}, \theta} ||G_{\lambda, k}(\theta)||_{\lambda \in \Lambda, Lip} = \sup\limits_{i, z, \theta} ||\nabla_\theta \left(\frac{1}{2}||\theta||^2 - \lrl \varphi_i(\lambda, \theta, z)\right)||_{\lambda \in \Lambda, Lip} < \infty.
$$

Similarly, we have
$$
\gamma_1^G \triangleq \sup\limits_{k, j_k, S^{tr}, \theta} || \frac{\partial}{\partial \lambda} G_{\lambda, k}(\theta)||_{\lambda \in \Lambda, Lip} < \infty, \ 
\gamma_2^G \triangleq \sup\limits_{k, j_k, S^{tr}, \lambda} || \frac{\partial}{\partial \theta} G_{\lambda, k}(\theta)||_{\theta \in \Theta, Lip} < \infty,
$$
$$
\gamma_3^G \triangleq \sup\limits_{k, j_k, S^{tr}, \theta} || \frac{\partial}{\partial \theta} G_{\lambda, k}(\theta)||_{\lambda \in \Lambda, Lip} < \infty, \  
\gamma_4^G \triangleq \sup\limits_{k, j_k, S^{tr}, \lambda} || \frac{\partial}{\partial \lambda} G_{\lambda, k}(\theta)||_{\theta \in \Theta, Lip} < \infty.
$$

Then we consider $||G_{\lambda, k}(\theta)||_{\theta \in \Theta, Lip}$. Since $\varphi_{j_k}(\lambda, \theta, z_{j_k}^{tr})$ as a function of $\theta$ is $\tau$-strongly convex, we have $\varphi_{j_k}(\lambda, \theta, z_{j_k}^{tr}) - \frac{\tau}{2}||\theta||^2$ as a function of $\theta$ is convex and $\gamma_\varphi - \tau$ Lipschitz smooth. According to the co-coercivity of $\nabla_\theta (\varphi_{j_k}(\lambda, \theta, z_{j_k}^{tr}) - \frac{\tau}{2}||\theta||^2 )$, we have
\begin{align*}
    & \left<\theta - \theta', \nabla_\theta \varphi_{j_k}(\lambda, \theta, z_{j_k}^{tr}) - \tau \theta - \nabla_\theta \varphi_{j_k}(\lambda, \theta', z_{j_k}^{tr}) + \tau \theta' \right> \\
    \geq & \frac{1}{\gamma_\varphi - \tau} || \nabla_\theta \varphi_{j_k}(\lambda, \theta, z_{j_k}^{tr}) - \tau \theta - \nabla_\theta \varphi_{j_k}(\lambda, \theta', z_{j_k}^{tr}) + \tau \theta' ||^2,
\end{align*}
which is equivalent to
\begin{align*}
    &  \left<\theta - \theta', \nabla_\theta \varphi_{j_k}(\lambda, \theta, z_{j_k}^{tr}) - \nabla_\theta \varphi_{j_k}(\lambda, \theta', z_{j_k}^{tr}) \right> \\
    \geq & \frac{1}{\gamma_\varphi + \tau} ||\nabla_\theta \varphi_{j_k}(\lambda, \theta, z_{j_k}^{tr}) - \nabla_\theta \varphi_{j_k}(\lambda, \theta', z_{j_k}^{tr})||^2 + \frac{\gamma_\varphi \tau}{\gamma_\varphi + \tau} ||\theta - \theta'||^2.
\end{align*}

As a result,
\begin{align*}
& || G_{\lambda, k}(\theta) - G_{\lambda, k}(\theta') ||^2 \\
= & ||\theta - \theta'||^2 + \eta^2 ||\nabla_\theta \varphi_{j_k}(\lambda, \theta, z_{j_k}^{tr}) - \nabla_\theta \varphi_{j_k}(\lambda, \theta', z_{j_k}^{tr})||^2 \\
& - 2 \eta \left<\theta - \theta', \nabla_\theta \varphi_{j_k}(\lambda, \theta, z_{j_k}^{tr}) - \nabla_\theta \varphi_{j_k}(\lambda, \theta', z_{j_k}^{tr}) \right> \\
\leq & ||\theta - \theta'||^2 + \eta^2 ||\nabla_\theta \varphi_{j_k}(\lambda, \theta, z_{j_k}^{tr}) - \nabla_\theta \varphi_{j_k}(\lambda, \theta', z_{j_k}^{tr})||^2 \\
& - 2 \eta (\frac{1}{\gamma_\varphi + \tau} ||\nabla_\theta \varphi_{j_k}(\lambda, \theta, z_{j_k}^{tr}) - \nabla_\theta \varphi_{j_k}(\lambda, \theta', z_{j_k}^{tr})||^2 + \frac{\gamma_\varphi \tau}{\gamma_\varphi + \tau} ||\theta - \theta'||^2) \\
= & (1 - 2 \eta \frac{\gamma_\varphi \tau}{\gamma_\varphi + \tau} ) ||\theta - \theta'||^2 + (\eta^2 - \frac{2\eta}{\gamma_\varphi + \tau} ) ||\nabla_\theta \varphi_{j_k}(\lambda, \theta, z_{j_k}^{tr}) - \nabla_\theta \varphi_{j_k}(\lambda, \theta', z_{j_k}^{tr})||^2.
\end{align*}

Since $\eta \leq \frac{1}{\gamma_\varphi} \leq \frac{2}{\gamma_\varphi + \tau}$, we have $||G_{\lambda,k}(\theta)||_{\theta \in \Theta, Lip} \leq \sqrt{1 - 2\eta \frac{\gamma_\varphi \tau}{\gamma_\varphi + \tau}}$ and 
\begin{align*}
    \sup\limits_{k, j_k, S^{tr}, \lambda} ||G_{\lambda, k}(\theta)||_{\theta \in \Theta, Lip} \leq \sqrt{1 - 2\eta \frac{\gamma_\varphi \tau}{\gamma_\varphi + \tau}} \triangleq L_2^G < 1.
\end{align*}

According to Lemma~\ref{thm:hat_theta_lip} and Lemma~\ref{thm:hat_theta_sm}, $\hat{\theta}(\lambda, S^{tr})$ as a function of $\lambda$ is $L^{\hat{\theta}} = \mathcal{O}(1)$ Lipschitz continuous and $\gamma^{\hat{\theta}} = \mathcal{O}(1)$ Lipschitz smooth w.r.t. $K$. By definition, $L^{\hat{\theta}}$ and $\gamma^{\hat{\theta}}$ are independent of the training dataset $S^{tr}$ and the random indices $(j_1,j_2, \cdots,j_K)$ and thereby the randomness of $\hat{\theta}$.

According to Lemma~\ref{thm:cont_lip} and Assumption~\ref{assump:c2}, we have
\begin{align*}
    L_1^\ell = \sup\limits_{\theta \in \Theta, z \in Z} ||\ell(\lambda, \theta, z)||_{\lambda \in \Lambda, Lip} < \infty, \ 
    L_2^\ell = \sup\limits_{\lambda \in \Lambda, z \in Z} || \ell(\lambda, \theta, z)||_{\theta \in \Theta, Lip} < \infty.
\end{align*}

Similarly, we have
\begin{align*}
    \gamma_1^\ell \triangleq \sup\limits_{\theta, z} ||\left[\frac{\partial}{\partial \lambda} \ell(\lambda, \theta, z)\right]||_{\lambda \in \Lambda, Lip} < \infty,\ 
    \gamma_2^\ell \triangleq \sup\limits_{\lambda, z} ||\left[\frac{\partial}{\partial \theta} \ell(\lambda, \theta, z)\right]||_{\theta \in \Theta, Lip} < \infty,
\end{align*}
\begin{align*}
    \gamma_3^\ell \triangleq \sup\limits_{\theta, z} ||\left[\frac{\partial}{\partial \theta} \ell(\lambda, \theta, z)\right]||_{\lambda \in \Lambda, Lip} < \infty,\ 
    \gamma_4^\ell \triangleq \sup\limits_{\lambda, z} ||\left[\frac{\partial}{\partial \lambda} \ell(\lambda, \theta, z)\right]||_{\theta \in \Theta, Lip} < \infty.
\end{align*}

Suppose $z \in Z$, firstly we consider the Lipschitz continuity of $\ell(\lambda, \hat{\theta}(\lambda, S^{tr}), z)$:
\begin{align}
\label{eq:get_L_strongly_convex}
    & ||\ell(\lambda, \hat{\theta}(\lambda, S^{tr}), z)||_{\lambda \in \Lambda, Lip} \nonumber \\
    \leq & \sup\limits_{\theta \in \Theta, z \in Z} ||\ell(\lambda, \theta, z)||_{\lambda \in \Lambda, Lip} + \sup\limits_{\lambda \in \Lambda, z \in Z} || \ell(\lambda, \theta, z)||_{\theta \in \Theta, Lip} \cdot ||\hat{\theta}(\lambda, S^{tr})||_{\lambda \in \Lambda, Lip} \nonumber \\
    \leq & L_1^\ell + L_2^\ell L^{\hat{\theta}} \triangleq L.
\end{align}

Then we consider the Lipschitz continuity of $\frac{\partial}{\partial \lambda} \ell(\lambda, \hat{\theta}(\lambda, S^{tr}), z)$, which can be expanded as
\begin{align*}
    \frac{\partial}{\partial \lambda} \ell(\lambda, \hat{\theta}(\lambda, S^{tr}), z) = \left[\frac{\partial}{\partial \lambda} \ell(\lambda, \theta, z)\right]\Big|_{\theta = \hat{\theta}(\lambda, S^{tr})} + \left[\frac{\partial}{\partial \theta} \ell(\lambda, \theta, z)\right] \Big|_{\theta = \hat{\theta}(\lambda, S^{tr})} \left[\frac{\partial}{\partial \lambda} \hat{\theta}(\lambda, S^{tr}) \right].
\end{align*}

Taking the Lipschitz norm w.r.t. $\lambda$, we have
\begin{align*}
    ||\left[\frac{\partial}{\partial \lambda} \ell(\lambda, \theta, z)\right]\Big|_{\theta = \hat{\theta}(\lambda, S^{tr})}||_{\lambda \in \Lambda, Lip} \leq \gamma^\ell_1 + \gamma^\ell_4 L^{\hat{\theta}},
\end{align*}
\begin{align*}
    ||\left[\frac{\partial}{\partial \theta} \ell(\lambda, \theta, z)\right] \Big|_{\theta = \hat{\theta}(\lambda, S^{tr})}||_{\lambda \in \Lambda, Lip} \leq \gamma^\ell_3 + \gamma^\ell_2 L^{\hat{\theta}},
\end{align*}
which yields
\begin{align}
\label{eq:get_gamma_strongly_convex}
& ||\frac{\partial}{\partial \lambda} \ell(\lambda, \hat{\theta}(\lambda, S^{tr}),z)||_{\lambda \in \Lambda, Lip} \nonumber \\
\leq & ||\left[\frac{\partial}{\partial \lambda} \ell(\lambda, \theta, z)\right]\Big|_{\theta = \hat{\theta}(\lambda, S^{tr})}||_{\lambda \in \Lambda, Lip} \nonumber \\
& + ||\left[\frac{\partial}{\partial \theta} \ell(\lambda, \theta, z)\right] \Big|_{\theta = \hat{\theta}(\lambda, S^{tr})}||_{\lambda \in \Lambda, Lip} L^{\hat{\theta}} + L^\ell_2 ||\frac{\partial}{\partial \lambda} \hat{\theta}(\lambda, S^{tr})||_{\lambda \in \Lambda, Lip}  \nonumber \\
\leq & \gamma^\ell_1 + \gamma^\ell_4 L^{\hat{\theta}} + (\gamma^\ell_3 + \gamma^\ell_2 L^{\hat{\theta}}) L^{\hat{\theta}} + L^\ell_2 \gamma^{\hat{\theta}} \triangleq \gamma.
\end{align}

With Eq.~{(\ref{eq:get_L_strongly_convex})} and Eq.~{(\ref{eq:get_gamma_strongly_convex})}, we can conclude $\ell(\lambda, \hat{\theta}(\lambda, S^{tr}), z)$ as a function of $\lambda$ is $L = \mathcal{O}(1)$ Lipschitz continuous and $\gamma = \mathcal{O}(1)$ Lipschitz smooth w.r.t. $K$. By definition, $L$ and $\gamma$ are independent of the training dataset $S^{tr}$, $z$, the random indices $(j_1,j_2, \cdots,j_K)$ and thereby the randomness of $\hat{\theta}$. Thereby, we have $\forall S^{tr} \in Z^\mtr$, $\forall z \in Z$, $\forall g \in \cG_{\hat{\theta}}$, $\ell(\lambda, g(\lambda, S^{tr}), z)$ as a function of $\lambda$ is $L=\mathcal{O}(1)$ Lipschitz continuous and $\gamma = \mathcal{O}(1)$ Lipschitz smooth. Similarly, the result also holds for GD.
\end{proof}

\section{UD with GD in the Outer Level}
Since GD is deterministic, we can derive a high probability bound for UD with GD in the outer level. Firstly, we define the notion of \emph{uniform stability on validation} for a deterministic HO algorithm.
\begin{definition}
A deterministic HO algorithm $\sA$ is $\beta$-uniformly stable on validation if for all validation datasets $S^{val}, S^{'val} \in Z^\mval$ such that $S^{val}, S^{'val}$ differ in at most one sample, we have 
\begin{align*}
    \forall S^{tr} \in Z^{m^{tr}}, \forall z \in Z, \ell(\sA(S^{tr}, S^{val}), z) - \ell(\sA(S^{tr}, S^{'val}), z) \leq \beta.
\end{align*}
\end{definition}

If a deterministic HO algorithm is $\beta$-uniformly stable on validation, then we have the following high probability bound.
\begin{thm}
\label{thm:hp_bd}
(Generalization bound of a uniformly stable deterministic algorithm). Suppose a deterministic HO algorithm $\sA$ is $\beta$-uniformly stable on validation, $S^{tr} \sim (D^{tr})^\mtr$, $S^{val} \sim (D^{val})^\mval$ and $S^{tr}$ and $S^{val}$ are independent, then for all $\delta \in (0, 1)$, with probability at least $1 - \delta$,
\begin{align*}
    \ell(\sA(S^{tr}, S^{val}), D^{val}) \leq \ell(\sA(S^{tr}, S^{val}), S^{val}) + \beta + \sqrt{\frac{(2\beta \mval + s(\ell))^2 \ln \delta^{-1}}{2 \mval}}.
\end{align*}
\end{thm}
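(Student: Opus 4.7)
The plan is to apply McDiarmid's bounded-differences inequality to the generalization gap viewed as a function of $S^{val}$, and to combine it with an expectation bound analogous to Theorem~\ref{thm:app_basic_gen}.

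First, for any fixed $S^{tr}$, I would define the function $\Phi(S^{val}) = R^{val}(\sA(S^{tr}, S^{val}), D^{val}) - \hat{R}^{val}(\sA(S^{tr}, S^{val}), S^{val})$. For two validation sets $S^{val}, S^{'val}$ differing in a single coordinate $i$, I would split $|\Phi(S^{val}) - \Phi(S^{'val})|$ into the change of the expected risk and the change of the empirical risk. By $\beta$-uniform stability, the expected-risk part is at most $\beta$. For the empirical-risk part, I would insert the intermediate term $\hat{R}^{val}(\sA(S^{tr}, S^{'val}), S^{val})$ and use the triangle inequality: swapping only the algorithm output contributes at most $\beta$ per summand and hence at most $\beta$ overall, while swapping only the sample at coordinate $i$ affects one of $\mval$ summands and contributes at most $s(\ell)/\mval$. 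Putting these together yields $|\Phi(S^{val}) - \Phi(S^{'val})| \leq 2\beta + s(\ell)/\mval$.

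Next, I would show that $\E_{S^{val}}[\Phi(S^{val})] \leq \beta$ for every fixed $S^{tr}$. This uses the same i.i.d.\ swap trick as in Theorem~\ref{thm:app_basic_gen}: since an auxiliary $z \sim D^{val}$ and $z_1^{val}$ share the same distribution and are independent of everything else, I can replace $z_1^{val}$ by $z$ inside the loss to rewrite the expected gap as $\E[\ell(\sA(S^{tr}, z, z_2^{val}, \ldots, z_\mval^{val}), z_1^{val}) - \ell(\sA(S^{tr}, S^{val}), z_1^{val})]$, which is bounded by $\beta$ directly from the definition of uniform stability on validation. The argument is insensitive to $S^{tr}$ being fixed or averaged.

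Finally, I would apply McDiarmid's inequality with per-coordinate bound $c_i = 2\beta + s(\ell)/\mval$, obtaining, for every $S^{tr}$, $P\bigl(\Phi(S^{val}) - \E_{S^{val}}[\Phi(S^{val})] \geq t\bigr) \leq \exp\bigl(-2\mval t^2/(2\beta\mval + s(\ell))^2\bigr)$ over the draw of $S^{val}$. Setting the right-hand side equal to $\delta$ and solving for $t$ gives $t = (2\beta\mval + s(\ell))\sqrt{\ln \delta^{-1}/(2\mval)}$. Combined with $\E_{S^{val}}[\Phi(S^{val})] \leq \beta$, this yields the claimed high-probability bound for every fixed $S^{tr}$, and therefore also under the joint distribution on $(S^{tr}, S^{val})$ by integrating out $S^{tr}$. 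The main obstacle is the bounded-differences step, where the two distinct sources of variation in the empirical risk (the changed algorithm output acting on all $\mval$ samples versus the single changed sample) must be disentangled carefully to obtain the tight constant; once that is done, McDiarmid and Theorem~\ref{thm:app_basic_gen} give the result immediately.
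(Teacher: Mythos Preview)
Your proposal is correct and follows essentially the same route as the paper: bound the per-coordinate variation of $\Phi$ by $2\beta + s(\ell)/\mval$ (splitting into the expected-risk change and the empirical-risk change), bound $\E_{S^{val}}[\Phi]\le\beta$ via the i.i.d.\ swap trick, apply McDiarmid's inequality, and then integrate over $S^{tr}$. The only cosmetic difference is that the paper bounds the empirical-risk change term-by-term (getting $\frac{\mval-1}{\mval}\beta + \frac{s(\ell)}{\mval}$, then relaxing to $\beta + \frac{s(\ell)}{\mval}$) rather than via your intermediate-term insertion, but the resulting constants and structure are identical.
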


\begin{proof}
Let $\Phi(S^{tr}, S^{val}) = \ell(\sA(S^{tr}, S^{val}), D^{val}) - \ell(\sA(S^{tr}, S^{val}), S^{val})$. Suppose $S^{val}, S^{'val} \in Z^\mval$ differ in at most one point, then
\begin{align*}
    & |\Phi(S^{tr}, S^{val}) - \Phi(S^{tr}, S^{'val})| \\
    \leq & |\ell(\sA(S^{tr}, S^{val}), D^{val}) - \ell(\sA(S^{tr}, S^{'val}), D^{val})| + |\ell(\sA(S^{tr}, S^{val}), S^{val}) - \ell(\sA(S^{tr}, S^{'val}), S^{'val})|.
\end{align*}
For the first term,
\begin{align*}
    & |\ell(\sA(S^{tr}, S^{val}), D^{val}) - \ell(\sA(S^{tr}, S^{'val}), D^{val})| \\
    = & |\E_{z \sim D^{val}} \left[ \ell(\sA(S^{tr}, S^{val}), z) - \ell(\sA(S^{tr}, S^{'val}), z) \right]| \leq \beta .
\end{align*}
For the second term,
\begin{align*}
    & |\ell(\sA(S^{tr}, S^{val}), S^{val}) - \ell(\sA(S^{tr}, S^{'val}), S^{'val})| \\
    \leq & \frac{1}{\mval} \sum\limits_{i=1}^\mval |\ell(\sA(S^{tr}, S^{val}), z_i^{val}) - \ell(\sA(S^{tr}, S^{'val}), z_i^{'val})| \\
    \leq & \frac{s(\ell)}{\mval} + \frac{\mval - 1}{\mval} \beta .
\end{align*}
As a result,
\begin{align*}
    |\Phi(S^{tr}, S^{val}) - \Phi(S^{tr}, S^{'val})| \leq \frac{s(\ell)}{\mval} + 2 \beta.
\end{align*}
According to McDiarmid's inequality, we have for all $\epsilon \in \sR^+$,
\begin{align*}
    P_{S^{val} \sim (D^{val})^\mval}(\Phi(S^{tr}, S^{val}) - \E_{S^{val} \sim (D^{val})^\mval} \left[ \Phi(S^{tr}, S^{val}) \right] \geq \epsilon) \leq \exp(-2\frac{ \mval \epsilon^2}{ (s(\ell) + 2 \mval \beta)^2}).
\end{align*}
Besides, we have
\begin{align*}
    & \E_{S^{val} \sim (D^{val})^\mval} \left[ \Phi(S^{tr}, S^{val}) \right] = \E_{S^{val} \sim (D^{val})^\mval} \left[ \ell(\sA(S^{tr}, S^{val}), D^{val}) - \ell(\sA(S^{tr}, S^{val}), S^{val}) \right] \\
    = & \E_{S^{val} \sim (D^{val})^\mval, z \sim D^{val}} \left[ \ell(\sA(S^{tr}, S^{val}), z) - \ell(\sA(S^{tr}, S^{val}), z_1^{val}) \right] \\
    = & \E_{S^{val} \sim (D^{val})^\mval, z \sim D^{val}} \left[ \ell(\sA(S^{tr}, z, z_2^{val}, \cdots, z_\mval^{val}), z_1^{val}) - \ell(\sA(S^{tr}, S^{val}), z_1^{val}) \right] \leq \beta.
\end{align*}
Thereby, we have for all $\epsilon \in \sR^+$,
\begin{align*}
    P_{S^{val} \sim (D^{val})^\mval}(\Phi(S^{tr}, S^{val}) - \beta \geq \epsilon) \leq \exp(-2\frac{ \mval \epsilon^2}{ (s(\ell) + 2 \mval \beta)^2}).
\end{align*}
Notice the above inequality holds for all $S^{tr} \in Z^\mtr$, we further have $\epsilon \in \sR^+$,
\begin{align*}
    P_{S^{tr} \sim (D^{tr})^\mtr, S^{val} \sim (D^{val})^\mval}(\Phi(S^{tr}, S^{val}) - \beta \geq \epsilon) \leq \exp(-2\frac{ \mval \epsilon^2}{ (s(\ell) + 2 \mval \beta)^2}).
\end{align*}
Equivalently, we have $\forall \delta \in (0, 1)$,
\begin{align*}
    P_{S^{tr} \sim (D^{tr})^\mtr, S^{val} \sim (D^{val})^\mval}\left(\Phi(S^{tr}, S^{val}) \leq \beta + \sqrt{\frac{(2\beta \mval + s(\ell))^2 \ln \delta^{-1}}{2 \mval}}\right) \geq 1-\delta.
\end{align*}
\end{proof}

Then we analyze the stability for UD with GD in the outer level. At each iteration in the outer level, it updates the hyperparameter by:
\begin{align*}
    \lambda_{t+1} = (1-\alpha_{t+1} \mu)\lambda_t - \alpha_{t+1} \nabla_\lambda \hat{R}^{val}(\lambda_t, \hat{\theta}(\lambda_t, S^{tr}), S^{val}),
\end{align*}
where $\alpha_t$ is the learning rate and $\mu$ is the weight decay.

\begin{thm}
\label{thm:ablo_gd}
(Uniform stability of algorithms with GD in the outer level).
Suppose $\hat{\theta}$ is a deterministic function and $\forall S^{tr} \in Z^\mtr$, $\forall z \in Z$, $\ell(\lambda, \hat{\theta}(\lambda, S^{tr}), z)$ as a function of $\lambda$ is $L$-Lipschitz continuous and $\gamma$-Lipschitz smooth. Then, solving Eq.~{(4)} in the full paper with T steps GD, learning rate $\alpha_t \leq \alpha$ and weight decay $\mu \leq \min(\gamma, \frac{1}{\alpha})$ in the outer level is $\beta$-uniformly stable on validation with
\begin{align*}
    \beta = \frac{2L^2}{\mval (\gamma - \mu)} ((1+\alpha (\gamma - \mu))^T - 1).
\end{align*}
\end{thm}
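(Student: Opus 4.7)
The plan is to follow the same perturbation/stability style as the proof of Theorem~\ref{thm:app_stb_ud}, but without the randomness tricks (indicator variables, probability of selecting the differing sample) that were needed for SGD. Because GD is deterministic, every iterate is a deterministic function of the full validation set, so I can just track the \emph{worst-case} distance between the two hyperparameter trajectories induced by $S^{val}$ and $S^{'val}$ and plug in Lipschitz continuity at the end.

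Concretely, fix $S^{tr}\in Z^\mtr$ and $z\in Z$, let $f(\lambda)=\ell(\lambda,\hat\theta(\lambda,S^{tr}),z)$ (so $f$ is $L$-Lipschitz and $\gamma$-smooth by hypothesis), and let $S^{val},S^{'val}$ differ only at the first point. Let $\{\lambda_t\}$ and $\{\lambda_t'\}$ be the two GD trajectories defined by $\lambda_{t+1}=(1-\alpha_{t+1}\mu)\lambda_t-\alpha_{t+1}G(\lambda_t,S^{val})$, with $G(\lambda,S)=\nabla_\lambda\hat R^{val}(\lambda,\hat\theta(\lambda,S^{tr}),S)$, and the analogous recursion with $S^{'val}$. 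The goal is to bound $\delta_T:=\|\lambda_T-\lambda_T'\|$ and then conclude $|f(\lambda_T)-f(\lambda_T')|\le L\delta_T=\beta$.

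The key step is deriving a one-step recursion for $\delta_t$. Writing
\[
\lambda_{t+1}-\lambda_{t+1}'=(1-\alpha_{t+1}\mu)(\lambda_t-\lambda_t')-\alpha_{t+1}\bigl[(G(\lambda_t,S^{val})-G(\lambda_t',S^{val}))+(G(\lambda_t',S^{val})-G(\lambda_t',S^{'val}))\bigr],
\]
I bound the first bracket by $\gamma\delta_t$ using $\gamma$-smoothness of $f$ (so each per-sample gradient $\nabla_\lambda\ell$ is $\gamma$-Lipschitz in $\lambda$, and averaging preserves this). The second bracket is a single-sample difference, equal to $\tfrac1m(\nabla_\lambda\ell(\lambda_t',\hat\theta,z_1^{val})-\nabla_\lambda\ell(\lambda_t',\hat\theta,z_1^{'val}))$, whose norm is at most $2L/m$ by $L$-Lipschitzness of $f$. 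Under the hypothesis $\mu\le 1/\alpha\le 1/\alpha_{t+1}$ we have $1-\alpha_{t+1}\mu\ge 0$, and combined with $\mu\le\gamma$ the multiplier simplifies to
\[
\delta_{t+1}\le (1+\alpha_{t+1}(\gamma-\mu))\,\delta_t+\frac{2\alpha_{t+1}L}{m}.
\]

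Unrolling the recursion from $\delta_0=0$ with $\alpha_t\le\alpha$ and summing the geometric series $\sum_{t=0}^{T-1}(1+\alpha(\gamma-\mu))^t$ yields $\delta_T\le \tfrac{2L}{m(\gamma-\mu)}\bigl((1+\alpha(\gamma-\mu))^T-1\bigr)$, whence $|f(\lambda_T)-f(\lambda_T')|\le L\delta_T=\beta$. Since $S^{tr}$ and $z$ were arbitrary, this gives $\beta$-uniform stability on validation as defined. I do not anticipate a serious obstacle: the only mildly delicate points are (i) verifying that the contraction coefficient $|1-\alpha_{t+1}\mu|$ can be absorbed into $(1+\alpha_{t+1}(\gamma-\mu))$ under the stated conditions on $\mu$, and (ii) being careful that the geometric-sum bound uses $\alpha$ rather than $\alpha_{t+1}$ (which is legitimate because each factor $(1+\alpha_{t+1}(\gamma-\mu))$ is monotone in $\alpha_{t+1}$ when $\gamma\ge\mu$). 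Everything else is routine once the one-step inequality is in place.
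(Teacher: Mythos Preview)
Your proposal is correct and follows essentially the same argument as the paper: fix $S^{tr}$ and $z$, add and subtract to split the one-step GD difference into a ``same-data, different-$\lambda$'' term bounded via $\gamma$-smoothness and a ``same-$\lambda$, different-data'' term bounded by $2L/m$, obtain the recursion $\delta_{t+1}\le(1+\alpha(\gamma-\mu))\delta_t+2\alpha L/m$, solve the geometric sum, and multiply by $L$. The only cosmetic difference is that the paper packages the update map as $F(\lambda,S^{val},\alpha,\mu)$ and bounds $\|F-F'\|$ directly, whereas you expand the difference of iterates explicitly; the two are equivalent.
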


\begin{proof}
Suppose $ S^{tr} \in Z^\mtr$, we use $F(\lambda, S^{val}, \alpha, \mu) = (1-\alpha \mu)\lambda - \alpha \nabla_\lambda \hat{R}^{val}(\lambda, \hat{\theta}(\lambda, S^{tr}), S^{val})$ to denote the updating rule of GD, where we omit the dependency on $S^{tr}$ for simplicity. Suppose $S^{val}, S^{'val} \in Z^\mval$ differ in at most one point, let $\{\lambda_t\}_{t \geq 0}$ and $\{\lambda_t'\}_{t \geq 0}$ be the trace of gradient descent with $S^{val}$ and $S^{'val}$ respectively. Let $\delta_t = ||\lambda_t - \lambda_t'||$, then
\begin{align*}
    \delta_{t+1} = & ||F(\lambda_t, S^{val}, \alpha_{t+1}, \mu) - F(\lambda_t', S^{'val}, \alpha_{t+1}, \mu) || \\
    \leq & ||F(\lambda_t, S^{val}, \alpha_{t+1}, \mu) - F(\lambda_t', S^{val}, \alpha_{t+1}, \mu) || + ||F(\lambda_t', S^{val}, \alpha_{t+1}, \mu) - F(\lambda_t', S^{'val}, \alpha_{t+1}, \mu) || \\
    \leq & (|1-\alpha_{t+1} \mu| + \alpha_{t+1} \gamma) \delta_t + \frac{2 \alpha_{t+1} L}{\mval} = (1 + \alpha_{t+1} (\gamma - \mu)) \delta_t + \frac{2 \alpha_{t+1} L}{\mval} \\
    \leq & (1 + \alpha (\gamma - \mu)) \delta_t + \frac{2 \alpha L}{\mval}.
\end{align*}

Thereby, we have $\delta_t \leq \frac{2L}{\mval (\gamma - \mu)} ((1+\alpha (\gamma - \mu))^t - 1)$ for all $t \geq 0$. Finally, we have
\begin{align*}
    \forall z \in Z, |\ell(\lambda_T, \hat{\theta}(\lambda_T, S^{tr}), z) - \ell(\lambda_T', \hat{\theta}(\lambda_T', S^{tr}), z)| \leq \frac{2L^2}{\mval (\gamma-\mu)} ((1+\alpha (\gamma-\mu))^T - 1).
\end{align*}
\end{proof}

\begin{remark}
We derive such a bound by using the recursive updates of the outer level GD with the smoothness of the loss function and the inner level optimization. This technique can be directly applied to traditional GD (i.e., GD with one level optimization) to get a stability bound of exponentially increasing w.r.t $T$ and $\mathcal{O}(1/m)$.
\end{remark}

\section{Curse of dimensionality in CV}
\begin{lem}
\label{thm:gs_opt}
Suppose $f(\lambda)$, $\lambda \in \Lambda = [0, 1]^d$ is $L$ Lipschitz continuous, $\{\lambda_i\}_{i=1}^T$ are i.i.d. uniform random vectors on $[0, 1]^d$, then $\E \inf\limits_{1 \leq i \leq T} f(\lambda_i) \leq \inf\limits_{\lambda \in \Lambda} f(\lambda) + L \frac{\sqrt{d}}{T^{\frac{1}{d}}}$.
\begin{proof}
Let $\lambda^* = \argmin\limits_{\lambda \in \Lambda} f(\lambda)$. Firstly, we have $f(\lambda_i) \leq f(\lambda^*) + L ||\lambda_i - \lambda^*||$ for all $1 \leq i \leq T$. Thereby,
\begin{align*}
    \inf\limits_{1 \leq i \leq T} f(\lambda_i) \leq f(\lambda^*) + L \inf\limits_{1 \leq i \leq T} ||\lambda_i - \lambda^*||.
\end{align*}
Taking expectation, we have
\begin{align*}
    \E \inf\limits_{1 \leq i \leq T} f(\lambda_i) \leq f(\lambda^*) + L \E \inf\limits_{1 \leq i \leq T} ||\lambda_i - \lambda^*||.
\end{align*}

As for $\E \inf\limits_{1 \leq i \leq T} ||\lambda_i - \lambda^*||$, we have
\begin{align*}
    & \E \inf\limits_{1 \leq i \leq T} ||\lambda_i - \lambda^*|| = \int_0^\infty P(\inf\limits_{1 \leq i \leq T} ||\lambda_i - \lambda^*|| > t) \mathrm{d} t \\
    = & \int_0^\infty P(||\lambda_1 - \lambda^*|| > t)^T \mathrm{d} t = \int_0^\infty (1 - |B(\lambda^*, t) \cap \Lambda|)^T \mathrm{d} t \\
    \leq & \int_0^\infty (1 - |B(0, t) \cap \Lambda|)^T \mathrm{d} t = \E \inf\limits_{1 \leq i \leq T} ||\lambda_i|| \\
    \leq & \E \inf\limits_{1 \leq i \leq T} \sqrt{d} \sup\limits_{1 \leq j \leq d} \lambda_{i,j} = \sqrt{d} \int_0^1 P(\inf\limits_{1 \leq i \leq T} \sup\limits_{1 \leq j \leq d} \lambda_{i,j} > t) \mathrm{d} t \\
    = & \sqrt{d} \int_0^1 P( \sup\limits_{1 \leq j \leq d} \lambda_{1,j} > t)^T \mathrm{d} t = \sqrt{d} \int_0^1 (1 - P(\lambda_{1,1} \leq t)^d)^T \mathrm{d} t \\
    = & \sqrt{d} \int_0^1 (1 - t^d)^T \mathrm{d} t \leq \sqrt{d} \int_0^1 e^{- T t^d} \mathrm{d}t = \frac{\sqrt{d}}{T^{\frac{1}{d}}} \int_0^{T^{\frac{1}{d}}} e^{-t^d} \mathrm{d} t\\
    \leq & \frac{\sqrt{d}}{T^{\frac{1}{d}}} \int_0^\infty e^{-t^d} \mathrm{d} t = \frac{\sqrt{d}}{T^{\frac{1}{d}}} \int_0^\infty e^{-t} \mathrm{d} t^{\frac{1}{d}} = \frac{\sqrt{d}}{T^{\frac{1}{d}}} \int_0^\infty t^{\frac{1}{d}} e^{-t} \mathrm{d}t  = \frac{\sqrt{d}}{T^{\frac{1}{d}}} \Gamma(1+\frac{1}{d}) \leq \frac{\sqrt{d}}{T^{\frac{1}{d}}}.
\end{align*}

As a result,
\begin{align*}
    \E \inf\limits_{1 \leq i \leq T} f(\lambda_i) \leq f(\lambda^*) + L \frac{\sqrt{d}}{T^{\frac{1}{d}}}.
\end{align*}

\end{proof}
\end{lem}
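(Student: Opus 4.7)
The plan is to reduce the bound on $\E \inf_{1 \leq i \leq T} f(\lambda_i)$ to a purely geometric question about how close a random i.i.d.\ sample on the unit cube can get to a fixed point. Let $\lambda^* \in \argmin_{\lambda \in \Lambda} f(\lambda)$. By $L$-Lipschitz continuity, $f(\lambda_i) \leq f(\lambda^*) + L\|\lambda_i - \lambda^*\|$ for every $i$, so taking the infimum over $i$ and then expectation gives
\begin{align*}
\E \inf_{1 \leq i \leq T} f(\lambda_i) \leq f(\lambda^*) + L \,\E \inf_{1 \leq i \leq T} \|\lambda_i - \lambda^*\|.
\end{align*}
Thus it suffices to show $\E \inf_i \|\lambda_i - \lambda^*\| \leq \sqrt{d}/T^{1/d}$.

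Next I would use the layer-cake representation $\E X = \int_0^\infty P(X > t)\,dt$ combined with the i.i.d.\ assumption to write $P(\inf_i \|\lambda_i - \lambda^*\| > t) = (1 - |B(\lambda^*, t) \cap \Lambda|)^T$, where $|\cdot|$ denotes Lebesgue volume. The main obstacle is that this volume depends on where $\lambda^*$ sits inside the cube, and in particular it is smallest when $\lambda^*$ is at a corner. To avoid tracking $\lambda^*$, I would upper bound by pushing $\lambda^*$ to the origin, so that $|B(\lambda^*, t) \cap \Lambda| \geq |B(0, t) \cap \Lambda|$ is replaced by the worst case. This reduces the problem to bounding $\E \inf_i \|\lambda_i\|$ when $\lambda_i$ are uniform on $[0,1]^d$.

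To handle $\E \inf_i \|\lambda_i\|$, I would use the crude inequality $\|\lambda_i\| \leq \sqrt{d}\,\max_{1 \leq j \leq d} \lambda_{i,j}$, which decouples coordinates and turns the problem one-dimensional. Using independence across coordinates, $P(\max_j \lambda_{1,j} \leq t) = t^d$, so $P(\inf_i \max_j \lambda_{i,j} > t) = (1 - t^d)^T$. Then $\E \inf_i \max_j \lambda_{i,j} = \int_0^1 (1-t^d)^T\,dt \leq \int_0^1 e^{-Tt^d}\,dt$, and a change of variables $u = Tt^d$ transforms this into $T^{-1/d}\,\Gamma(1 + 1/d)/d$-type expression; more concretely, extending the integration domain to $\infty$ and substituting gives a Gamma integral, and one concludes via $\Gamma(1 + 1/d) \leq 1$ that the integral is bounded by $1/T^{1/d}$. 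Multiplying by the $\sqrt{d}$ factor yields the claimed bound.

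The only delicate step is justifying the reduction to $\lambda^* = 0$: strictly speaking, for a general $\lambda^* \in [0,1]^d$ the ball $B(\lambda^*,t)$ may stick outside $\Lambda$ on different sides, and one has to argue that the intersection $|B(\lambda^*,t)\cap \Lambda|$ is minimized at a corner by a straightforward symmetrization/translation argument. Once this is in place, the remainder of the proof is routine calculus and the Gamma function manipulation described above.
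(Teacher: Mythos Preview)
Your proposal is correct and follows essentially the same route as the paper: Lipschitz reduction to $\E\inf_i\|\lambda_i-\lambda^*\|$, layer-cake formula, worst-case reduction to the corner $\lambda^*=0$, the bound $\|\lambda_i\|\le\sqrt{d}\max_j\lambda_{i,j}$, and the Gamma-integral estimate $\int_0^1(1-t^d)^T\,dt\le\Gamma(1+1/d)/T^{1/d}\le 1/T^{1/d}$. The only point you flag as delicate---that $|B(\lambda^*,t)\cap\Lambda|\ge|B(0,t)\cap\Lambda|$---is asserted without proof in the paper as well, so your level of rigor matches theirs.
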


The following result implies that CV suffers from curse of dimensionality. CV requires exponentially large $T$ w.r.t. the dimensionality of the $\lambda$ to achieve a reasonably low empirical risk.
\begin{thm}
\label{thm:ablo_gs_opt}
(Curse of dimensionality in CV).
Suppose (1) the inner level optimization is solved deterministically, i.e., $\hat{\theta}$ in Eq.~{(4)} in the full paper is a deterministic function, (2) $\{\lambda_t\}_{t=1}^T$ are i.i.d. uniform random vectors taking value in $\Lambda = [0, 1]^d$, (3) $\forall S^{tr} \in Z^\mtr$, $\forall z \in Z$, $\ell(\lambda, \hat{\theta}(\lambda, S^{tr}), z)$ as a function of $\lambda$ is $L$ Lipschitz continuous. Let $S^{tr} \sim (D^{tr})^\mtr$ and $S^{val} \sim (D^{val})^\mval$ be independent, then we have
\begin{align*}
    \E \left[\hat{R}^{val}(\sA^{cv}(S^{tr}, S^{val}), S^{val}) \right] \leq \E \left[ \inf\limits_{\lambda \in \Lambda} \hat{R}^{val}(\lambda, \hat{\theta}(\lambda, S^{tr}), S^{val}) \right] + \frac{L \sqrt{d}}{T^{\frac{1}{d}}}.
\end{align*}
\end{thm}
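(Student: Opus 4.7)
The plan is to reduce the theorem to a direct application of Lemma \ref{thm:gs_opt} by conditioning on the datasets $S^{tr}$ and $S^{val}$ and then taking expectations.

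First, I would fix $S^{tr} \in Z^\mtr$ and $S^{val} \in Z^\mval$ and introduce the shorthand
\begin{equation*}
f(\lambda) \;=\; \hat{R}^{val}(\lambda, \hat{\theta}(\lambda, S^{tr}), S^{val}) \;=\; \frac{1}{\mval}\sum_{i=1}^{\mval}\ell(\lambda, \hat{\theta}(\lambda, S^{tr}), z_i^{val}).
\end{equation*}
Since assumption (3) says that each summand $\lambda \mapsto \ell(\lambda, \hat{\theta}(\lambda, S^{tr}), z_i^{val})$ is $L$-Lipschitz, an average of $L$-Lipschitz functions is again $L$-Lipschitz, so $f$ is $L$-Lipschitz on $\Lambda = [0,1]^d$. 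Note that this Lipschitz constant does not depend on the particular choice of $S^{tr}$ or $S^{val}$, which will matter when I take outer expectations.

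Next, by the definition of CV in Algorithm~\ref{algo:cv}, the index $t^*$ is chosen to minimize $f(\lambda_t)$ over $t \in \{1,\dots,T\}$, so that
\begin{equation*}
\hat{R}^{val}(\sA^{cv}(S^{tr}, S^{val}), S^{val}) \;=\; \inf_{1 \leq t \leq T} f(\lambda_t).
\end{equation*}
Since the $\lambda_t$ are i.i.d. uniform on $[0,1]^d$ and independent of $S^{tr}$ and $S^{val}$, I can condition on $(S^{tr}, S^{val})$ and invoke Lemma~\ref{thm:gs_opt} applied to the $L$-Lipschitz function $f$:
\begin{equation*}
\E_{\{\lambda_t\}}\!\left[\inf_{1 \leq t \leq T} f(\lambda_t) \,\middle|\, S^{tr}, S^{val}\right] \;\leq\; \inf_{\lambda \in \Lambda} f(\lambda) + \frac{L\sqrt{d}}{T^{1/d}}.
\end{equation*}

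Finally, I would take expectation over $(S^{tr}, S^{val}) \sim (D^{tr})^\mtr \times (D^{val})^\mval$ using the tower property; the constant $L\sqrt{d}/T^{1/d}$ passes through unchanged, yielding the claimed bound. I do not foresee a real obstacle here: the Lipschitz property of $f$ is inherited from assumption (3) by linearity of the empirical average, the independence of $\{\lambda_t\}$ from the data lets Lemma~\ref{thm:gs_opt} be applied conditionally, and the outer expectation is a routine tower argument. The only point requiring a little care is ensuring $L$ is a uniform Lipschitz constant (independent of $S^{tr}$ and $z$), which is exactly what assumption (3) provides.
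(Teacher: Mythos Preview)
Your proposal is correct and follows essentially the same route as the paper's proof: condition on $(S^{tr}, S^{val})$, apply Lemma~\ref{thm:gs_opt} to the empirical validation risk viewed as a function of $\lambda$, and then take the outer expectation. You are slightly more explicit than the paper in justifying why $f$ inherits the uniform $L$-Lipschitz constant from assumption~(3), which is a welcome clarification but not a different argument.
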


\begin{proof}
Let $t^*$ be the index of the best hyperparameter, i.e.,
\begin{align*}
    t^* = \argmin\limits_{1 \leq t \leq T} \hat{R}^{val}(\lambda_t, \hat{\theta}(\lambda_t, S^{tr}), S^{val}),
\end{align*}
then the output of CV is $\sA^{cv}(S^{tr}, S^{val}) = (\lambda_{t^*}, \hat{\theta}(\lambda_{t^*}, S^{tr}))$.

According to Lemma~\ref{thm:gs_opt}, we have
\begin{align*}
    &\E_{\{\lambda_t\}_{t=1}^T} \left[\hat{R}^{val}(\lambda_{t^*}, \hat{\theta}(\lambda_{t^*}, S^{tr}), S^{val}) \right] = \E_{\{\lambda_t\}_{t=1}^T} \left[ \inf\limits_{1 \leq t \leq T} \hat{R}^{val}(\lambda_t, \hat{\theta}(\lambda_t, S^{tr}), S^{val}) \right] \\
    \leq & \inf\limits_{\lambda \in \Lambda} \hat{R}^{val}(\lambda, \hat{\theta}(\lambda, S^{tr}), S^{val}) + \frac{L \sqrt{d}}{T^{\frac{1}{d}}}.
\end{align*}
Thereby,
\begin{align*}
    & \E \left[\hat{R}^{val}(\sA^{cv}(S^{tr}, S^{val}), S^{val}) \right] = 
    \E_{\{\lambda_t\}_{t=1}^T, S^{tr}, S^{val}} \left[\hat{R}^{val}(\lambda_{t^*}, \hat{\theta}(\lambda_{t^*}, S^{tr}), S^{val}) \right] \\
    \leq & \E_{S^{tr},S^{val}} \left[ \inf\limits_{\lambda \in \Lambda} \hat{R}^{val}(\lambda, \hat{\theta}(\lambda, S^{tr}), S^{val}) \right] + \frac{L \sqrt{d}}{T^{\frac{1}{d}}}.
\end{align*}
\end{proof}

\section{Discussion of the Boundedness Assumption of the Loss Function}
The bounded assumption is mild and common (e.g., also used in Theorem 3.12 of \cite{hardt2016train} and Section 2 in \cite{shalev2010learnability}). Indeed, given a machine learning model of a finite number of parameters (e.g. neural networks of finite depth and width used in our experiments), a bounded parameter space (Assumption~\ref{assump:compact}), and a bounded input space (Assumption~\ref{assump:compact}), the feature space is also bounded. Note that previous work makes a similar assumption (at the
bottom of Page 9 in \cite{hardt2016train}) as Assumption~\ref{assump:compact}.

\section{Additional Experiments}

\subsection{Generalization Gap}

In Figure~\ref{fig:ud_gap}, we plot the generalization gap (estimated by difference between test and validation loss) of UD on the FL and DR experiments. When the $K\geq 64$\footnote{The test loss is dominated by training loss when $K$ is too small due to underfitting on the training dataset.}, the generalization gap increases as $K$ increases. These results validate our Theorem~\ref{thm:stb_ud} and Theorem~\ref{thm:ablo_L}.

In Figure~\ref{fig:cv_gap}, we plot the generalization gap of CV on the FL and DR experiments. There is not a clear relationship between the generalization gap and $K$. These results validate our Theorem~\ref{thm:cv}.

\begin{figure}[H]
\begin{center}
\subfloat[Validation loss (FL)]{\includegraphics[height=0.22\columnwidth]{imgs/fl_omniglot/ablo_K_val.png}}
\subfloat[Testing loss (FL)]{\includegraphics[height=0.22\columnwidth]{imgs/fl_omniglot/ablo_K_te.png}}
\subfloat[Generalization gap (FL)]{\includegraphics[height=0.22\columnwidth]{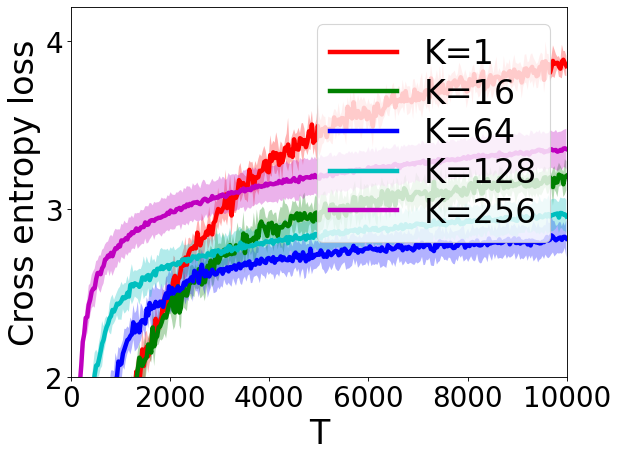}}
\\
\subfloat[Validation loss (DR)]{\includegraphics[height=0.22\columnwidth]{imgs/rw_mnist/ablo_K_val.png}}
\subfloat[Testing loss (DR)]{\includegraphics[height=0.22\columnwidth]{imgs/rw_mnist/ablo_K_te.png}}
\subfloat[Generalization gap (DR)]{\includegraphics[height=0.22\columnwidth]{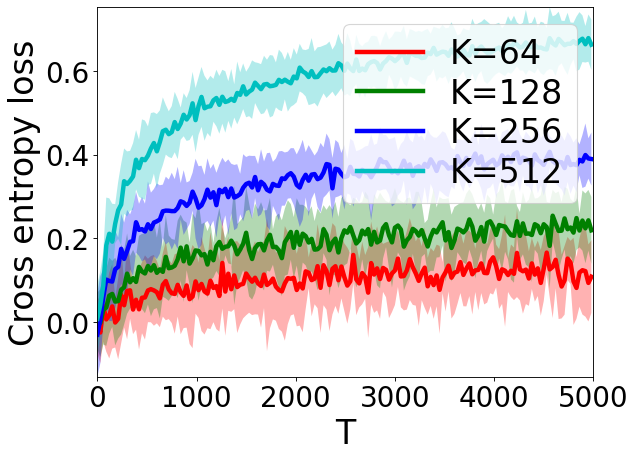}}
\caption{The generalization gap of UD in feature learning (FL) and data reweighting (DR).}
\label{fig:ud_gap}
\end{center}
\vspace{-.3cm}
\end{figure}

\begin{figure}[h]
\begin{center}
\subfloat[Validation loss (FL)]{\includegraphics[height=0.22\columnwidth]{imgs/fl_omniglot/rs_K_val.png}}
\subfloat[Testing loss (FL)]{\includegraphics[height=0.22\columnwidth]{imgs/fl_omniglot/rs_K_te.png}}
\subfloat[Generalization gap (FL)]{\includegraphics[height=0.22\columnwidth]{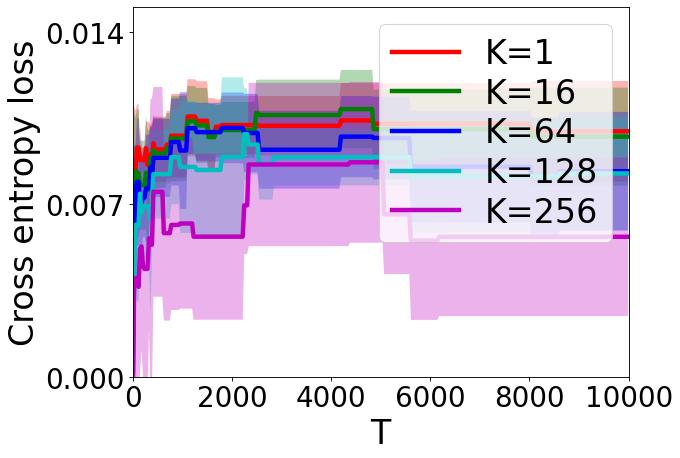}}\\
\subfloat[Validation loss (DR)]{\includegraphics[height=0.22\columnwidth]{imgs/rw_mnist/rs_K_val.png}}
\subfloat[Testing loss (DR)]{\includegraphics[height=0.22\columnwidth]{imgs/rw_mnist/rs_K_te.png}}
\subfloat[Generalization gap (DR)]{\includegraphics[height=0.22\columnwidth]{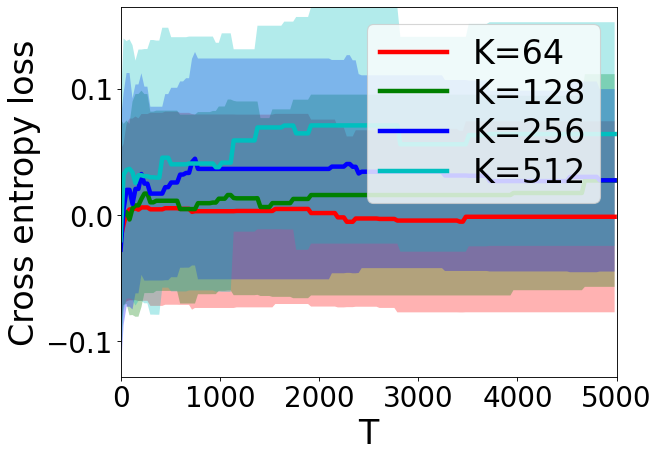}}
\caption{The generalization gap of CV in feature learning (FL) and data reweighting (DR).}
\label{fig:cv_gap}
\end{center}
\end{figure}

\subsection{Empirical Verification of the Expectation Bound of CV}
We empirically validate the $\mathcal{O}(\sqrt{1/m})$ expectation bound of CV in Theorem~\ref{thm:cv}. In the data reweighting experiment, we chose ten different $m$ from $[10, 1000]$ such that $\sqrt{1/m}$ is distributed linearly and we plot the curve of the generalization gap v.s. $\sqrt{1/m}$. We fix $T=1000$ and $K=64$. We run on 5 different seeds and use the averaged result. As shown in Figure~\ref{fig:rw_mval}, the curve is approximately linear, which accords with our
Theorem~\ref{thm:cv}.

\begin{figure}[H]
    \centering
    \includegraphics[width=0.5\linewidth]{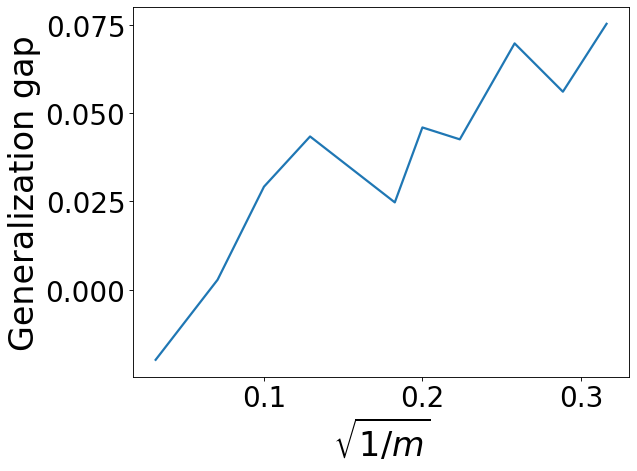}
    \caption{Generalization gap v.s. $\sqrt{1/m}$ of CV in data reweighting (DR).}
    \label{fig:rw_mval}
\end{figure}

\subsection{UD with a Smaller Learning Rate in the Inner Level}
We also try a smaller learning rate $\eta=0.1$ in the inner level on the data reweighting task. As shown in Figure~\ref{fig:ud_lrl_0.1}, it requires $K=1024$ inner iterations to overfit. This can be explained by our Theorem~\ref{thm:ablo_L}, which implies that a smaller $\eta$ requires a larger $K$ to make the generalization gap unchanged.

\begin{figure}[h]
\begin{center}
\subfloat[Validation loss (DR)]{\includegraphics[height=0.22\columnwidth]{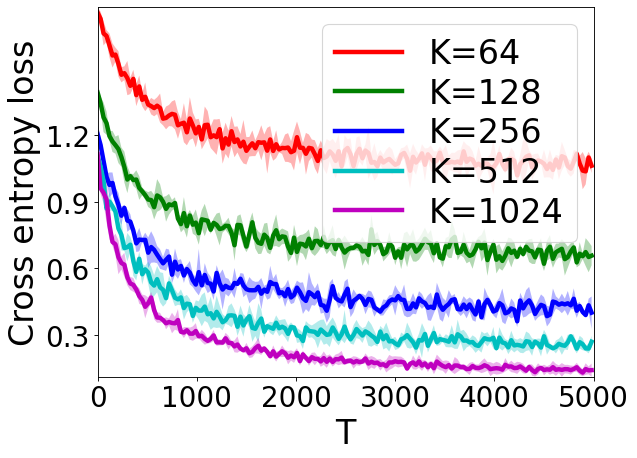}}
\subfloat[Testing loss (DR)]{\includegraphics[height=0.22\columnwidth]{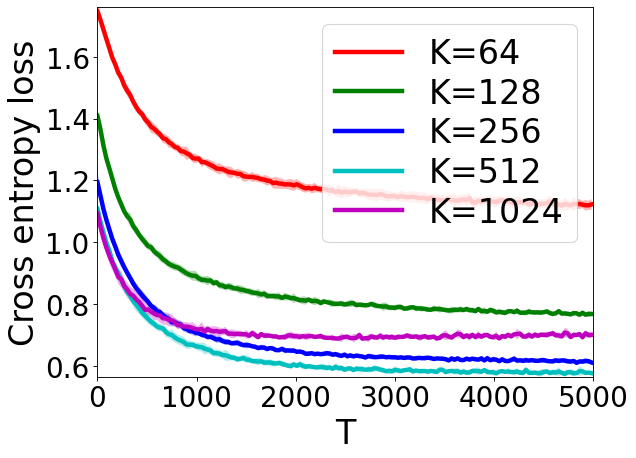}}
\subfloat[Generalization gap (DR)]{\includegraphics[height=0.22\columnwidth]{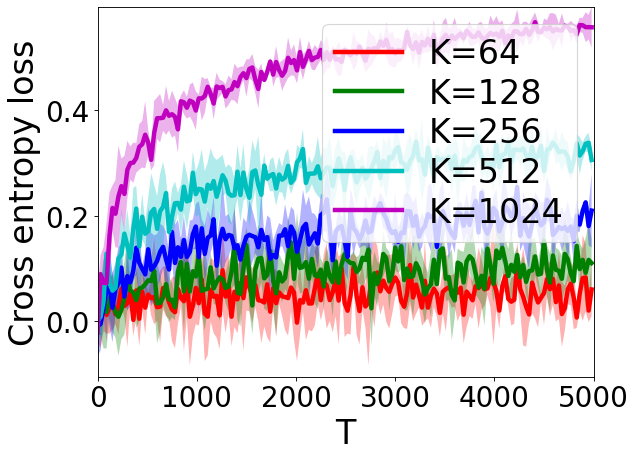}}
\caption{Results of UD in data reweighting (DR) with a smaller learning rate $\eta=0.1$. We run on 3 different seeds. The performance of UD is sensitive to the values of $K$ and $T$.}
\label{fig:ud_lrl_0.1}
\end{center}
\vspace{-.3cm}
\end{figure}

\subsection{Experiments with a Smaller Number of Hyperparameters}
We also experiment with 4 hyperparameters. We create a two dimensional toy dataset in the feature learning task:
$y=x_1^2+x_2^2+0.3\epsilon$, where $x_1, x_2 \sim \mathrm{Uniform}(0, 1)$ and $\epsilon \sim \mathcal{N}(0, 1)$. The number of training data is 10 and the number of validation data is 2. The hyperparameter $\lambda$ is a $2\times 2$ matrix following the input $x$ and the parameter $\theta$ is a $2\times 1$ matrix to predict the $y$. The learning rate of the outer level problem is 0.01 and that of the inner level problem is 0.1 and the batch size is 1 in both problems. $K$ is 16 and $T$ is 1000. In this case, the validation losses of UD and CV are comparable and both algorithms fit well on the validation data. However, UD overfits much severely, leading to a worse testing loss than CV. The results agree with our theory.

\begin{figure}[h]
    \centering
    \includegraphics[width=0.5\columnwidth]{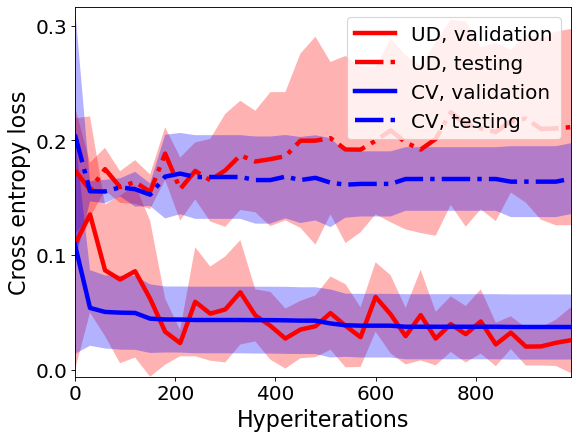}
    \caption{Compare between CV and UD with 4 hyperparameters.}
    \label{fig:my_label}
\end{figure}

\end{document}